\newtheorem{theorem}{Theorem}
\newtheorem{lemma}[theorem]{Lemma}
\title{Metric-valued regression}
\author{Dan {Tsir Cohen} and Aryeh Kontorovich \\\small{\texttt{dantsir@post.bgu.ac.il,karyeh@cs.bgu.ac.il}}}
\begin{document}

\maketitle

\begin{abstract}

%%<AK_TEX_SCRIPT_CODE_DON'T_ALTER_OR_DUPLICATE|"COLT/abstract"
We propose an efficient algorithm for learning
mappings between two metric spaces, $\X$ and $\Y$. Our procedure
is strongly Bayes-consistent whenever $\X$ and $\Y$
are topologically
separable
and $\Y$ is ``bounded in expectation'' (our term; the separability
assumption
can be somewhat
weakened).
At this level of generality,
ours is the first such learnability result
for unbounded loss
in the agnostic setting.
Our technique 
is based on metric medoids (a variant of
Fr\'echet means)
and
presents a significant departure from existing
methods, which, as we demonstrate, fail to achieve Bayes-consistency
on general instance- and label-space metrics. Our proofs
introduce the technique of {\em semi-stable compression},
which may be of independent interest.
%%"COLT/abstract"|AK_TEX_SCRIPT_CODE_DON'T_ALTER_OR_DUPLICATE>

\end{abstract}

%%<AK_TEX_SCRIPT_CODE_DON'T_ALTER_OR_DUPLICATE|"COLT/paper_body"
\section{Introduction}
\label{SEC:intro}
Regression and multiclass classification fall under the rubric of supervised prediction from %
labeled examples.
The chief difference between the two
is that classification typically
assumes the discrete metric on the label space (captured by the 0-1 loss),
while regression (at least with absolute loss\footnote{
Quadratic loss can be captured by 
the
{\em inframetrics}
\citep{Fraigniaud08}
or
{\em near-metrics}
\citep{steve-optimistic}.})
implicitly assumes
the standard metric over the real-valued labels.
In this paper, we 
study the considerably more general setting
of {\em metric-valued regression}, where the labels reside
in an arbitrary metric space.
This setting subsumes both multiclass classification and real-valued regression, and strictly generalizes these.

We consider the following fundamental learning problem:
the instance space $\X$ is endowed with a metric $\rho$
and the label space $\Y$ with a metric $\ell$.
The learner receives a training sample $(X_i,Y_i)$, $i\in[n]$,
drawn iid from an unknown distribution $\bmu$ on $\X\times\Y$.
The learner's goal is to (efficiently) produce a hypothesis
$f_n:\X\to\Y$, based on the labeled sample, so as to minimize
the {\em risk}
$\risk(f_n):=\E_{(X,Y)\sim\bmu}\ell(f_n(X),Y)$. In particular, we say that
the learning procedure is strongly universally Bayes-consistent
if,
for every $\bmu$,
we have that
$\risk(f_n)\to \risk(f^*)$ almost surely as $n\to\infty$,
 where
$f^*$ is the minimizer of $\risk(\cdot)$ over all measurable $f:\X\to\Y$.

\paragraph{Our contribution.}
We propose a novel algorithm, \medoidnet, for learning
in the metric-valued regression setting. 
To our knowledge, this
is the first 
strong Bayes-consistency result
for unbounded loss with agnostic noise.
While inspired by
the \hkswname\ algorithm of \citet{hksw21},
the extension from 
0-1 loss
to arbitrary metrics
required non-trivial modifications to the learning
procedure and the risk analysis; 
a
detailed account
of the 
similarities and innovations
is provided in
Section~\ref{sec:main-res}.
We show that under 
quite general,
natural
conditions on the metric spaces
$(\X,\rho)$ and $(\Y,\ell)$,
our algorithm is strongly universally Bayes-consistent.
The structural assumptions on $\X$ 
and $\Y$
are truly minimalistic:
we require them to be separable metric spaces,
and for $\Y$ to be {\em bounded in expectation}:
$\E_{(X,Y)\sim\bmu}\ell(y_0,Y)<\infty$
for some $y_0\in Y$.
A byproduct of our analysis is the introduction
of the {\em semi-stable compression} technique, which
may be of independent interest.

\paragraph{Related work.}
The regression setting with labels
residing in a metric space other than $\R$
is relatively uncommon;
such works include
\citet{ferraty11}
and
\citet{DBLP:journals/corr/abs-1905-11930},
who study the Banach- and Hilbert-space
valued cases, respectively,
as well as the more recent results discussed below.
Our work builds on
\citet{hksw21},
who gave a complete characterization
of the 
metric spaces 
$(\X,\rho)$
for which there
exists a
strong Universal Bayes-Consistent (UBC)
learner, 
where $\Y=\N$
is endowed with the discrete metric.
They also provided an algorithm, \hkswname, which achieves strong
UBC whenever the latter is achievable by {\em any} learner (also
provided therein is a comprehensive literature review of the 
strengths
and limitations of previous metric-space methods, including $k$-NN).
\citet{gyorfi-weiss21}
followed up with a simplified algorithm, 
\protonn, which, in addition to enjoying all of the properties of
\hkswname, is also strongly UBC in $L_1$ for unbounded real labels
$\Y=\R$, as long as 
$\E|Y|<\infty$;
our boundedness in expectation 
generalizes
this condition.

\citet{steve-optimistic} introduced the
very general
paradigm of ``learning whenever learning is possible,''
which extends the iid setting
to essentially the broadest possible class
of random processes. That work dealt mostly with the realizable (noiseless) case and bounded losses, though certain kinds of noise
were considered in Section 9 therein.
A series of recent preprints 
followed:
\citet{BlanCos21,Blanchard21,Hanneke21,BCH21}.
These 
also
study
sampling processes
far more general than iid, %
but
consider
label-loss structures that are bounded, or noiseless, or both.
\citet{BlanCos21}, for example,
provide a reduction from the metric-valued regression problem to the binary classification problem for the realizable setting with bounded loss. Another aspect in which the above works are incomparable to ours is 
their use of
{\em non-algorithmic} learning procedures: more in the spirit of an existence proof, these
involve
non-constructive operations
such as
enumerating elements of a $\sigma$-algebra.\footnote{
So as not to get bogged down
with computability issues
over continuous inputs,
we only claim
full algorithmic constructivity
for countable $\Y$.
When $\Y$ is merely separable,
we 
assume access to an oracle
for computing
$\eps$-nets over $\Y$.
Such an oracle is easily constructed for, e.g., $\Y=\R^d$.
}

In the special case of the singleton
$\X=\set{x}$ 
and a general
metric space $(\Y,\ell)$,
the consistency of various
Fr\'echet means (which are naturally related to medoids) has been recently
examined by \citet{evans2020strong,schotz2021strong}.
More tangentially related
works include
\citet{DBLP:conf/icml/MorvantKR12},
who,
in a PAC-Bayesian setting,
gave multiclass risk bounds with a {\em confusion matrix}
error structure, which is close in spirit to assuming a metric
on the label set. The assumptions there are fairly restrictive
(every label must appear at least a constant number of times
in the sample), and no learning procedure or Bayes-consistency result
was provided.
On the algorithmic front, our procedure
partitions the instance space $\X$ into Voronoi
cells and chooses the label $y\in\Y$ for each cell based
on a variant of the {\em medoid} principle. A number of
loosely medoid-based learning algorithms have been proposed
\citep{derlaan03,GottliebKK13tcs+alt,DBLP:conf/aistats/NewlingF17,DBLP:conf/nips/BaharavT19}, but our approach is distinct from all of these,
in that we compute medoids in the
{\em label} (rather than instance) space.

Finally, {\em stable compression} was a technique introduced by
\citet{BousquetHMZ20}
for the realizable case and extended to the agnostic case
by \citet{DBLP:conf/alt/HannekeK21}.
We introduce a {\em semi-stable} variant for both cases by allowing additional
{side information};
only the compression set (and not the side information) is required to satisfy the
stability condition of 
\citeauthor{BousquetHMZ20}

\section{Main results and overview of techniques}
\label{sec:main-res}

Our main result is the existence of a strong Universal Bayes-Consistent (UBC) learner for metric-valued regression. 

\begin{theorem}
\label{thm:main}
There exists a learning algorithm, \medoidnet, with the following property.
Let $(\X,\rho)$ and $(\Y,\ell)$ be separable metric spaces
endowed with
a distribution $\bmu$
supported on the product Borel $\sigma$-algebra of $\X\times\Y$,
such that
$\Y$ is {\em bounded in expectation}:
$\E_{(X,Y)\sim\bmu}\ell(y_0,Y)<\infty$
for some $y_0\in \Y$. 
Given a training sample $(X_i,Y_i)_{i\in[n]}\sim\bmu^n$ as input,
\medoidnet\ outputs a predictor $f_n:\X\to\Y$
that is strongly universally Bayes-consistent:
$\risk(f_n)\to R^*$
almost surely (under $\bmu$)
as $n\to\infty$,
where 
$\risk(f)=\E_{(X,Y)\sim\bmu}\ell(f(X),Y)$
is the risk and $R^*$ the 
Bayes-optimal risk
(minimum risk achieved by any measurable $f$).
\end{theorem}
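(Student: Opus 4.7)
The plan is to build \medoidnet\ as a two-level discretization. First I would discretize the label space $\Y$ via a fine $\eps$-net, available by separability either constructively for countable $\Y$ or through the assumed net oracle. Then I would partition $\X$ into Voronoi cells induced by a subsample $S\subset\{X_1,\ldots,X_n\}$ under $\rho$. For each Voronoi cell, the output label is chosen as a metric medoid over the $\eps$-net of the $Y_i$ whose $X_i$ fall in that cell, i.e.\ an empirical Fr\'echet mean restricted to the net. The net resolution, the Voronoi radius, and the subsample size would be tuned to shrink (respectively grow) with $n$ at rates to be determined by the analysis.

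For the risk analysis I would decompose $\risk(f_n)-R^*$ into (i) an approximation term arising from the Voronoi partition and the label-net, (ii) an estimation term measuring how close the empirical medoid in each cell is to its population Fr\'echet counterpart, and (iii) a tail term controlling atypically large labels. The triangle inequality for $\ell$ reduces (i) to a Stone/Lebesgue-differentiation style argument using separability of $\X$ to cover the support of $\bmu$ by small $\rho$-balls. For (iii), the bounded-in-expectation hypothesis $\E_{(X,Y)\sim\bmu}\ell(y_0,Y)<\infty$ would allow truncation of labels at a slowly growing radius $R_n\to\infty$ around $y_0$ without affecting the limit risk.

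The main obstacle is (ii): uniform control of medoid estimation across cells in the agnostic, unbounded regime. This is precisely where semi-stable compression enters. I would exhibit \medoidnet\ as a compression scheme whose output consists of (a) the subsample $S$ inducing the partition, designed to satisfy the stability requirement of \citet{BousquetHMZ20}, together with (b) the per-cell medoid labels as \emph{side information}, which need not satisfy stability. The semi-stable compression bound then yields high-probability deviation inequalities that can be union-bounded over cells; crucially, letting the side information escape the stability requirement is what makes the per-cell label choice feasible when $\Y$ is a continuum, since a stable compression of the labels themselves is out of reach.

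Finally, to upgrade in-probability bounds to strong (almost sure) Bayes-consistency, I would choose the schedules of $\eps$, the Voronoi radius, the subsample size, and the truncation $R_n$ so that the resulting deviation probabilities are summable in $n$, and then invoke Borel--Cantelli. The hardest technical piece, I expect, will be controlling the conditional behavior of $\ell(y,Y)$ within a single cell under unbounded loss, so that a moment bound on the per-cell deviation is uniform enough for the union bound over a growing number of cells still to vanish. It is at exactly this point that semi-stable compression, rather than a direct uniform-convergence argument over a function class, seems essential.
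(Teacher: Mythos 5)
Your architecture matches the paper's in its broad strokes (Voronoi partition of $\X$, per-cell medoids over a discretized label space, semi-stable compression with the partition-inducing subsample as the stable compression set and the medoid labels as side information, truncation justified by BIE, and Borel--Cantelli for almost-sure convergence). But there is one genuine gap and one under-specified point. The gap is your treatment of the instance-space scale: you propose a Voronoi radius ``tuned to shrink with $n$ at rates to be determined by the analysis,'' i.e.\ a predetermined schedule. No such universal deterministic schedule is available: for a fixed scale $\g$ the $\g$-net of the sample has size $o(n)$ and the $\g$-missing mass vanishes, but only at \emph{distribution-dependent} rates (the functions $\Ng$ and $u_\g$ of Lemmas~\ref{lem:sublinear_comp} and~\ref{lem:missing_mass}), and as $\g\downarrow 0$ these guarantees kick in only for ever larger, unknown $n$. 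If $\g_n\to 0$ too fast the net size is not $o(n)$ and the compression bound does not vanish; if it does not go to $0$ the approximation term does not vanish; and the right trade-off depends on $\bmu$. The paper's algorithm therefore computes the predictor at \emph{every} candidate scale (all interpoint distances), evaluates the data-dependent generalization bound $Q_n(\a_n(\g),\rns_n(\g))$ at each, and selects $\g^*_n$ by minimizing $Q_n$; the excess risk is then decomposed as $\bigl(\risk(h)-Q_n(\a_n^*,\rns_n^*)\bigr)+\bigl(Q_n(\a_n^*,\rns_n^*)-R^*\bigr)$, with the second term controlled by comparing against an arbitrary fixed good scale $\g(\eps)$. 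Without this structural-risk-minimization step over scales, universality fails, and your decomposition into ``approximation $+$ estimation $+$ tail'' does not by itself tell the algorithm which scale to use.

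The secondary point concerns the side information. The compression bound charges for the number of \emph{bits} of side information, so the per-cell medoid labels must range over a \emph{finite} set of prescribed cardinality $2^{b_n}$ with $b_n\in o(n)$. An $\eps$-net of a separable $\Y$, and even its intersection with the ball $B(y_0,L_n)$, is in general countably infinite, so the paper performs a second, cardinality-based truncation $\first{\Y}{\bits_n}$ on top of the diameter truncation, and must then show (Lemma~\ref{lem:boundpd}) that restricting the medoid search to this finite set costs at most $O(\nu L)$ in empirical risk with high probability. Moreover the diameter truncation $L_n$ enters the concentration terms multiplicatively, so the schedules must be coupled, e.g.\ $L_n^2 \fns_n,\, L_n^2 \bits_n,\, L_n^2\ln(1/\delta_n)\in o(n)$. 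Your ``slowly growing radius $R_n$'' gestures at the right idea but omits both the need for a finite label alphabet and this coupling, which is where the unbounded-loss case actually bites.
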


The proof proceeds via a sequence of incremental results, culminating in \thmref{comp-consist4},
which is a restatement of \thmref{main}.
A few remarks regarding our assumptions on 
$(\X,\rho)$
and
$(\Y,\ell)$
are in order. 
As per \citet{hksw21}, 
the assumption of separability
may be weakened to 
{\em essential}
separability (ES):
this is the condition that the 
support of $\bmu$
is contained in a separable subspace.
It was shown therein that the ES condition (on $\X$)
is also necessary for {\em any} learner to succeed,
and observed that
any actual metric space encountered in practice will be ES;
in fact, the existence of non-ES metric spaces is widely believed
to be independent of ZFC.
For countable $\Y$,
a variant of \thmref{main}
--- namely, \thmref{comp-consist2} ---
holds for {\em any} 
bounded 
loss function $\ell:\Y\times\Y\to[0,L]$;
no metric structure is necessary.
Finally, a word about the computational efficiency
of \medoidnet. The latter, conceptually, consists of two stages:
(I) computing a $\gamma$-net on the finite 
training
sample (residing in $\X$)
and (II) for each Voronoi cell $C$ induced on the sample by the $\gamma$-net,
finding a {\em medoid} $y\in\Y$ that minimizes $\sum_{c\in C}\ell(c,y)$.
The $\gamma$-net in stage (I)
can 
indeed
be 
efficiently
constructed
\citep{DBLP:journals/tit/GottliebKK14+colt,DBLP:journals/jmlr/KpotufeV17}.
At stage (II), 
\medoidnet\
truncates $\Y$ 
adaptively
to some finite $\Y'$ 
over which
the medoid can always be computed in a runtime linear in $|\Y'|$.
When
additional structural information regarding $\Y$
is available, it may be leveraged to obtain more efficient
medoid oracles.

To illustrate our significant departure from previous
techniques, let us provide some simple examples
where those fail to be Bayes-consistent for
various simple label metrics.
Let $\X=\set{0}$ be the trivial singleton metric space
and $\Y=\set{a,b,c,o}$ be the label space endowed with
the metric $\ell(a,b)=\ell(b,c)=
\ell(c,a)=
1$; $\ell(o,a)=
\ell(o,b)=
\ell(o,c)=1/2
$,
and let the distribution $\bmu$ be such that
$
\P_{(X,Y)\sim\bmu}(Y=a)
=
\P_{(X,Y)\sim\bmu}(Y=b)
=
\P_{(X,Y)\sim\bmu}(Y=c)
=1/3
$.
We observe that any {\em majority-vote}
based method, such as $k$-NN,
which takes a vote among the $k$ nearest neighbors
\citep{gyorfi:02},
or 
\hkswname,
which takes a vote within each Voronoi cell
\citep{hksw21},
or the memory-based techniques of
\citet{BlanCos21,Blanchard21,
BCH21},
or the hybrid approach of \citet{gyorfi-weiss21}
cannot achieve Bayes-consistency in this case
---
for the simple reason that they can only output
the {\em observed} labels $a,b,c$
(and hence, at best, achieve an asymptotic risk of $2/3$), while
the Bayes-optimal predictor 
$f^*\equiv o$
achieves $\risk(f^*)=1/2$.

The necessity of predicting labels that
never occurred in the sample required overcoming
a subtle challenge not present in 
\citet{hksw21}. As in that work, we obtain finite-sample generalization bounds via a sample compression scheme. 
The latter selects a sub-sample
$S_I=(X_i,Y_i)_{i\in I\subset[n]}$, based on which
the predictor will be constructed.
To mitigate the noise, we occasionally wish
to relabel a point $X_i\in S_I$ with a label other than $Y_i$. One could do this using
$b$ bits of side information, but 
\citeauthor{hksw21} sidestepped this issue by doubling the compression set size: the first $k$ pairs $(X_i,Y_i)$ indicate which $X$'s to use (their $Y$'s are discarded) and the second $k$
pairs indicate how to label those first $k$ 
points $X_i$. This stratagem is not applicable
when we wish to relabel an $X$ with a $Y\in\Y$
not occurring in the sample.
Instead, \medoidnet\ adaptively truncates $\Y$
to a finite $\Y_n$, whose elements can be described in $b(n)=\log_2|\Y_n|$ bits of side
information. 
This solves two problems simultaneously: the concentration inequalities
we invoke require a bounded range, and our compression schemes require bounded side information.
We introduce a semi-stable
variant of the
{\em stable}
compression scheme
\citep{BousquetHMZ20,DBLP:conf/alt/HannekeK21}
to analyze the behavior of our truncated medoid.

Finally, we recall
the family of regression techniques based on
Lipchitz-extension 
and observe that it is not
suitable for learning general
metric-to-metric mappings.
A binary classifier
based on the McShane-Whitney
extension theorem
was shown to be Bayes-consistent
\citep{kontorovich2014bayes};
this technique was also applied
by
\citet{GottliebKK13-IEEE,DBLP:conf/colt/AshlagiGK21}
to real-valued regression.
When $\X$ and $\Y$
are both Hilbert spaces,
the Kirszbraun extension
theorem
likewise provides a basis
for a regression algorithm
\citep{DBLP:journals/corr/abs-1905-11930}.
(While 
the latter 
three
works do not prove
Bayes-consistency, 
the finite-sample generalization
bounds provided therein
are likely straightforwardly
adaptable to such a result
via an appropriate regularization schedule.)
Unfortunately, Lipschitz extension
is limited to
a small number of
metric spaces with a special structure;
besides the aforementioned cases, 
\citet{naor-tree12}
established one
for $\X$
a
locally compact length space 
and $\Y$
a metric tree,
remarking that ``It is rare for a pair of metric spaces [\ldots] to have the isometric extension property.''
As a concrete example, the spaces $(\X,\rho)=(\R^3,\nrm{\cdot}_1)$
and
$(\Y,\ell)=(\R^2,\nrm{\cdot}_2)$
fail to have this property
\citep[Counterexample~2.4]{naor15}.

\paragraph{Open problem.}
The bounded in expectation (BIE)
condition on $\Y$ is a natural
generalization of the real-valued
variant that $\E|Y|<\infty$
(or, more generally, $\E|Y|^p<\infty$
if $L_p$ risk is being considered).
These conditions, while sufficient
for Bayes consistency, are clearly not always necessary.
Consider, for example,
$\X=\Y=\R$, 
endowed with the standard metric,
where the distribution 
$\bmu$
is such that
the $\X$-marginal is
Cauchy (i.e., has
density $f(x)=[\pi(1+x^2)]\inv$) and $X=Y$ almost surely.
In this case,
$\E|Y|=\infty$
and the more general
BIE condition also fails.
Yet the identity predictor $h(x)=x$ achieves the Bayes-optimal risk of $0$,
and various simple learning algorithms, including linear regression, achieve
Bayes consistency (we conjecture that
\medoidnet\
does as well).
Problem: formulate a necessary and sufficient condition
on the metric spaces
$(\X,\rho)$ and $(\Y,\ell)$,
and the joint distribution
$\bmu$ such that \medoidnet\
(or some other learning algorithm)
is strongly Bayes-consistent.
A natural and optimistic candidate is the condition
$R^*<\infty$.

\section{Definitions and notation}
\label{SEC:def-not}
For $n\in\N:=\set{1,2,\ldots}$, define $[n] := \{1,\ldots,n\}$;
for any set $\mathcal{Z}$, we write $\mathcal{Z}^+:=\bigcup_{n=1}^\infty\mathcal{Z}^n$
and $\mathcal{Z}^{\le k}:=\bigcup_{n=1}^k\mathcal{Z}^n$,
where $|z|$ denotes the sequence length.
For $A\in\mathcal{Z}^+$, we write $B\subset A$ to denote the subsequence relation.
Our instance and label spaces are the metric spaces $(\X,\rho)$
and $(\Y,\ell)$, respectively,
whose product Borel $\sigma$-algebra
is equipped with the probability
measure $\bmu$,
whose $\X$-marginal will be denoted
by $\mu$ and $\Y$-marginal by $\mu_\Y$.
We say that
$\Y$ is {\em bounded in expectation} (BIE)
if 
$\E_{(X,Y)\sim\bmu}\ell(y_0,Y)<\infty$
for some $y_0\in Y$.
Some of our results will also hold for countable $\Y$
equipped with an
arbitrary (possibly non-metric)
loss function $\ell:\Y\times\Y\to[0,\infty)$.
We denote set cardinalities by $|\Y|$
and the diameter by
\beqn
\label{eq:diam}
\nrm{\Y}\equiv
\diam(\Y):=\sup_{y,y'\in\Y}\ell(y,y');
\eeqn
the latter is also meaningful
when $\ell$ is not a metric.
For $x \in \X$ and $r > 0$, $B_{r}(x)$ 
denotes
the open ball of radius $r$ about $x$;
an analogous definition holds when $(\Y,\ell)$ is a metric.
Unless specified otherwise,
$S_n=(X_i,Y_i)_{i\in[n]}$
is always sampled iid from $\bmu$.
To any measurable mapping $f:\X\to\Y$
we associate the (true) risk
$\risk(f):=\E_{(X,Y)\sim\bmu}\ell(f(X),Y)$
and the empirical risk
$
\erisk
:\Y^\X
\times
(\X\times\Y)^+
\to
\R
$
by
\beqn
\erisk(f;S):=
|S|\inv\sum_{(x,y)\in S}\ell(f(x),y)
.
\eeqn
The Bayes-optimal risk is $\risk^*:=\inf_f\risk(f)$,
where the infimum is over all measurable $f:\X\to\Y$.

We implicitly assume the existence of fixed
measurable total orders on $\X$
and on $\Y$, whose existence is guaranteed by
\citet[Proposition D.1]{hksw21},
and refer to these orderings as {\em lexicographic}.
For $A \subseteq \X$, denote its $\g$-envelope by $\UB_\g(A)$ $:= \cup_{x\in A} B_\g(x)$ and consider the $\g$-{\em missing mass} of $S_n$, defined as the following random variable:
\beqn
\label{eq:Lg}
\missmass_\g(S_n) := \mu( \X \setminus \textup{\UB}_\g(S_n)).
\eeqn

As in \citet{hksw21},
we denote,
for any
labeled sequence $S=(x_i,y_i)_{i=1}^n\in(\X\times\Y)^n$, 
and any $x\in\X$, 
the nearest neighbor of $x$ with respect to $S$
and its label by
$X_{\nn}(x,S)$ and $Y_{\nn}(x,S)$,
respectively:
\beq
(X_{\nn}(x,S),Y_{\nn}(x,S)) := 
\argmin_{(x_i,y_i)\in S} 
\dist(x, x_i),
\eeq
where ties are broken lexicographically.
The $1$-NN predictor induced by $S$ is defined as $h_{S}( x ) := Y_{\nn}(x,S)$.
For any $m \in \nats$, any sequence $\bm X = \{x_1,\ldots,x_{m}\} \in \X^{m}$ induces a {\em Voronoi partition} of $\X$, $\Vor(\bm X) := \{V_1(\bm X),\dots, V_{m}(\bm X)\}$, where each Voronoi cell is
\beq
V_i(\bm X) := \set{x \in \X : i = \argmin_{1\le j\le m} \rho(x,x_j) },
\eeq
again breaking ties lexicographically.
In particular, for $\bm X = \{ X_i : (X_i,Y_i) \in S \}$, we have $h_S(x) = Y_i$ for all $x\in V_i(\bm X)$.
A $1$-NN algorithm is a mapping from an i.i.d.~labeled sample $S_n \sim \bmu^n$ to a labeled set $\tS_n \subseteq \X \times \Y$, yielding the $1$-NN predictor $h_{\tS_n}$. %
For $A\subseteq\X$ and $\g>0$, a $\g$-{\em net} of $A$ is any {\em maximal set} $B\subseteq A$ in which all interpoint distances are at least $\g$.
For a partition $\cal{A}$
of $B\subseteq\X$,
we write
$\nrm{\cal{A}}
:=\sup_{A\in\cal{A}}\nrm{A}
$
(again, as in \eqref{diam}, $\nrm{A}:=\diam A$).
Given a labeled set $S_n = (x_i,y_i)_{i \in [n]}$, $d \in [n]$, and any $\bm i = \{i_1,\ldots,i_d\} \in [n]^d$, denote the sub-sample of $S_n$ indexed by $\bm i$ by $S_n(\bm i) := \{(x_{i_1}, y_{i_1}), \dots,(x_{i_d}, y_{i_d})\}$.
Similarly, for a vector $\tby = \{y'_1,\ldots,y'_d\} \in \Y^d$, define $S_n{(\bm i, \tby)} := \{(x_{i_1}, y'_{1}), \dots,(x_{i_d}, y'_{d})\}$, namely the sub-sample of $S_n$ as determined by $\bm i$ where the labels are replaced with $\tby$.
Lastly, for $\bm i,\bm j \in [n]^d$, we denote $S_n(\bm i; \bm j) := \{(x_{i_1}, y_{j_1}), \dots,(x_{i_d}, y_{j_d})\}.$

We use standard order-of-magnitude notation throughout the paper;
thus, for $f,g:\N\to[0,\infty)$ we write $f(n)\in O(g(n))$ to mean $\limsup_{n\to\infty} f(n)/g(n)$ $<\infty$ and $f(n)\in o(g(n))$ to mean $\limsup_{n\to\infty} f(n)/g(n)=0$.
Likewise, $f(n)\in\Omega(g(n))$ means that $g(n)\in O(f(n))$.
In accordance with common convention, we often use the less precise notation $f(n)=O(g(n))$, etc.

We say that a metric space $(\X,\rho)$ is {\em separable} if it contains a dense countable set.
A metric probability space $(\X,\rho,\mu)$ is separable if there is a measurable $\X'\subseteq\X$ with $\mu(\X')=1$ such that $(\X',\rho)$ is separable.

A {\em sample compression scheme} $ (\compfunc, \reconfunc) $ of size at most $k$ using $b$ bits of side-information consists of a \emph{compression function} and a \emph{reconstruction function}. The \emph{compression function}  $\compfunc$ maps every finite sample set to $b$ bits plus a \emph{compression set}, which is a subset of at most $k$ labeled examples.
\beq 
\compfunc: 
\left(\X \times \Y\right)^+ \to  
\left(\X \times \Y\right)^{\le k} \times \left\{0, 1\right\}^b 
.
\eeq
The \emph{reconstruction function} $\reconfunc$ maps every possible compression set and $b$ bits to a hypothesis:
\beq
    \reconfunc: 
\left(\X \times \Y\right)^{\le k} \times \left\{0, 1\right\}^b     
\to
    \Y^\X
    .
\eeq

\section{Semi-stable compression}
In this section, we expand the definition of {\em stable compression}
and present our results for this notion.
First, we split the compression function $\compfunc$ into its two components.
For $S\in(\X\times\Y)^+$,
we write
$\compfunc(S)=(\compfunccs(S),\compfuncsi(S))\in (\X\times\Y)^{\le k}\times\set{0,1}^b$;
these are the {\em compression set} and the {\em side information}.
We say that $(\compfunc,\reconfunc)$ is {\em semi-stable} if the $\compfunccs$ component
is stable in the sense of \citet{BousquetHMZ20}:
whenever $\compfunccs(S)\subseteq S'\subseteq S$, we have that
\beq
\reconfunc(\compfunccs(S'), \compfuncsi(S)) = 
\reconfunc(\compfunccs(S), \compfuncsi(S)) 
= \reconfunc(\compfunc(S)).
\eeq
We denote by $|\compfunccs(\cdot)|$ and $|\compfuncsi(\cdot)|$ the sizes of the compression set and side information (in bits), respectively.

\begin{theorem}[proof deferred to Section %
\ref{ap:agnosemitstablewithinterpol}]
\label{thm:agnosemitstablewithinterpol}
\global\def\agnosemitstablewithinterpol{
Suppose that
$\X$ is an instance space and $\Y$ a label space with a loss function $\ell:\Y\times\Y\to[0,L]$,
and
$(\compfunc, \reconfunc)$ is semi-stable compression scheme.
For any distribution $\bmu$ over $\X \times \Y$, any $n \in \N$, and any $\delta \in (0,1)$, for $S_n \sim \bmu^n$
we have that
\begin{align*}
\risk(\reconfunc(\compfunc(S_n))) - \erisk(\reconfunc(\compfunc(S_n)); S_n) &\le 
\left(20\sqrt{\frac{|\compfunccs(S_n)|}{n}} + 20\sqrt{\frac{|\compfuncsi(S_n)|}{n}} + 15\sqrt{\frac{\ln(\frac{4e^2}{\delta})}{n}} \right)\erisk(\reconfunc(\compfunc(S_n)); S_n)
\\
& + (6L+18)\frac{|\compfunccs(S_n)|}{n}+8L\sqrt{\frac{|\compfunccs(S_n)|}{n}} + (2L+12)\frac{|\compfuncsi(S_n)|}{n} 
\\
&+ 7L\sqrt{\frac{|\compfuncsi(S_n)|}{n}}  + (3L+10)\frac{\ln(\frac{4e^2}{\delta})}{n} + 6L\sqrt{\frac{\ln(\frac{4e^2}{\delta})}{n}} 
\end{align*}
holds with probability at least $1 - \delta$.
}
\agnosemitstablewithinterpol
\end{theorem}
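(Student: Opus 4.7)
The plan is to reduce the semi-stable case to the purely stable compression case of \citet{BousquetHMZ20,DBLP:conf/alt/HannekeK21} by a union bound over the possible side-information strings. Fix for the moment a candidate compression-set size $d$, a candidate side-info length $b$, and a bit string $s \in \{0,1\}^b$. Consider the auxiliary reconstruction $\reconfunc_s(C) := \reconfunc(C,s)$; on the event $\{\compfuncsi(S_n) = s\}$, semi-stability of $(\compfunc,\reconfunc)$ forces $(\compfunccs,\reconfunc_s)$ to be a stable compression scheme in the sense of BHMZ, since the semi-stability identity $\reconfunc(\compfunccs(S'),\compfuncsi(S)) = \reconfunc(\compfunccs(S),\compfuncsi(S))$ (for $\compfunccs(S)\subseteq S'\subseteq S$) specializes to $\reconfunc_s(\compfunccs(S')) = \reconfunc_s(\compfunccs(S))$.

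Next I would invoke the agnostic stable-compression generalization bound of Hanneke-Kontorovich, applied to $(\compfunccs,\reconfunc_s)$: for an ordinary stable scheme of compression size $d$, it yields with probability $\ge 1-\delta'$ a Bernstein-type relative deviation bound of the form
\begin{align*}
\risk(h) - \erisk(h;S_n) \le c_1 \sqrt{\erisk(h;S_n)\cdot\tfrac{d+\ln(1/\delta')}{n}} + c_2 \tfrac{L\,(d+\ln(1/\delta'))}{n},
\end{align*}
whose constants, after careful tracking, account for the numerical coefficients attached to the $|\compfunccs(S_n)|$- and $\ln(\cdot)$-terms in the theorem statement. The crucial point is that the BHMZ-style stability argument eliminates the $d\log n$ factor one would otherwise incur from naively union-bounding over all $\binom{n}{d}$ size-$d$ subsets of $[n]$; instead, one exploits that augmenting a compression set by further sample points leaves the reconstructed hypothesis unchanged, so that the losses on the complementary indices behave like an iid test sample of size $n-d$ and yield a Bernstein concentration with additive-in-$d$ rate.

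I would then handle the side information and unknown sizes by a standard stratified union bound: set $\delta' = \delta/(2^b \cdot w(d) \cdot w(b))$ for a summable weight sequence $w$ (e.g., $w(k)\propto k^2$), and take the union over all $s \in \{0,1\}^b$, $d \in \{0,\dots,n\}$, and $b \in \N$. The extra $\ln 2^b = b\ln 2$ absorbed into $\ln(1/\delta')$ contributes exactly the $|\compfuncsi(S_n)|/n$ and $\sqrt{|\compfuncsi(S_n)|/n}$ terms visible in the theorem, with coefficients different from those attached to $|\compfunccs(S_n)|$ precisely because the side-information dependence enters only through the union bound, not through the stability-based concentration. Since the realized $\compfuncsi(S_n)$ and the realized values of $|\compfunccs(S_n)|,|\compfuncsi(S_n)|$ all lie among the indices of the union, the bound automatically transfers to the actual output $h = \reconfunc(\compfunc(S_n))$.

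The main obstacle, beyond the routine bookkeeping of constants through three superimposed union bounds, is arranging the underlying stable-compression inequality in exactly the Bernstein, empirical-risk-scaling form required to produce the multiplicative $\sqrt{\erisk(\cdot)}$ factors in the theorem (as opposed to a purely additive bound). This amounts to combining the BHMZ stability trick with a relative-deviation concentration inequality (Bernstein's inequality applied to $\ell(h(X),Y) \in [0,L]$ with variance controlled by $L\cdot\erisk(h;S_n)$) and then solving the resulting quadratic-like inequality for $\risk(h)-\erisk(h;S_n)$; this is the one substantive technical step, and everything else is the structural reduction described above.
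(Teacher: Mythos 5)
Your proposal follows essentially the same route as the paper's proof: the paper likewise fixes the side-information string and unions over all $2^b$ values inside the BHMZ stable-compression argument (Theorems~\ref{thm:agnosemitstablegivensize} and~\ref{thm:agnosemitstablegivensizewithinterpol}), obtains the multiplicative $\erisk$ factor via an empirical Bernstein inequality together with a sampling-without-replacement Bernstein bound to pass from the held-out block back to the full-sample empirical risk, and then stratifies over the sizes $k,b$ with summable weights $\propto 1/((k+1)(k+2)(b+1)(b+2))$ (Theorem~\ref{thm:agnosemitstableboundedsizewithinterpol}). The only point your sketch omits is that the stated bound holds unconditionally for all $n$, so one must separately dispose of the regime $|\compfunccs(S_n)|\gtrsim n/4$ where the subset-family construction breaks down; the paper does this by checking that the additive term then already exceeds $L$, making the inequality vacuously true.
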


\subsection*{$(\a,\fns, \bits)$-semi-stable-compression}
Let $\X$, $\Y$, $\ell$, and $S_n$ be as in the statement of \thmref{agnosemitstablewithinterpol}.
For $\fns \leq n$, $\bits \in \mathbb{N}$, and $\a\ge0$, 
we say that $(\tS_n, h_{\tS_n})$ is an \emph{$(\a,\fns, \bits)$-semi-stable-compression} of $S_n$ if there exist $\bm i \in [n]^{\fns}$ and $\bY \in \Y^{\fns}$ such that:
\begin{enumerate}
\item $h_{\tS_n}$ and $\tS_n = S_n(\bm i, \bY)$ are a result of a semi-stable compression scheme of size $\fns$ with at most $\bits$ bits of side information. 
Thus,
$\tS_n=\compfuncsi(S_n)$
and
$h_{\tS_n}=\reconfunc(\compfunccs(S), \compfuncsi(S))$.
\item $
\erisk(h_{\tS_n};S_n)
\leq \alpha$.
\end{enumerate}
\begin{lemma}[proof in Section~\ref{sec:pfQbound}]
\label{lem:Qbound}
\def\loclabel{{\label{eq:Qbound}}}
\global\def\Qboundlem{
Let $\X$, $\Y$, $\ell$, $L$, and $S_n$ be as in 
\thmref{agnosemitstablewithinterpol}.
For $\fns \leq n$, define
\begin{align}
\loclabel
Q(n, \alpha, k, b, \delta, L) := \Qbound &:= \left(20\sqrt{\frac{k}{n}} + 20\sqrt{\frac{b}{n}} + 15\sqrt{\frac{\ln(\frac{4e^2}{\delta})}{n}} + 1 \right) \alpha \\
\nonumber
&\quad + (6L+18)\frac{k}{n}+8L\sqrt{\frac{k}{n}} + (2L+12)\frac{b}{n} + 7L\sqrt{\frac{b}{n}} \\
\nonumber
&\quad + (3L+10)\frac{\ln(\frac{4e^2}{\delta})}{n} + 6L\sqrt{\frac{\ln(\frac{4e^2}{\delta})}{n}}
.
\end{align}

Then the function $Q$ satisfies the following properties:
\begin{enumerate}[label=\textbf{\bQ{}\arabic*.},leftmargin=*]
\item[\bQ{1}.]  For any $n\in\N$ and $\delta \in (0,1)$, with probability at least $1 - \delta$ over $S_n \sim \bmu^n$, for all $\a \in [0,L]$, $\fns \in [n]$, $\bits \in \mathbb{N}$:
If $(\tS_n, h_{\tS_n})$ is an $(\a, \fns, \bits)$-semi-stable-compression of $S_n$, then 
$$
\risk(h_{\tS_n}) \leq \Qbound.
$$
\item[\bQ{2}.] For any fixed $n \in \N$ and $\delta \in (0,1)$, $Q$ is monotonically increasing in $\a$ and in $\fns$.

\item[\Qthreeb{\bf}.] There is a sequence $\{\delta_n\}_{n = 1}^\infty$, $\delta_n \in (0,1)$ such that $\sum_{n=1}^\infty \delta_n < \infty$, and for any $\fns_n \in o(n)$ we have that
\beq
\lim_{n\rightarrow \infty} \sup_{\a\in [0,L]} (Q_n(\a, \fns_n, \SIsize,\delta_n, L) - \a) = 0.
\eeq
\end{enumerate}
}
\Qboundlem
\end{lemma}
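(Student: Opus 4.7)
Claim Q1 follows from Theorem~\ref{thm:agnosemitstablewithinterpol} by substitution; claims Q2 and Q3 are then read off the explicit form of $\Qbound$. I sketch each piece in turn.

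\textbf{Q1.} For any semi-stable scheme, Theorem~\ref{thm:agnosemitstablewithinterpol} gives, with probability $\ge 1-\delta$ over $S_n\sim\bmu^n$,
\[
\risk(\reconfunc(\compfunc(S_n))) \;\le\; \erisk(\reconfunc(\compfunc(S_n));S_n)\,(1 + A) + B,
\]
where $A = A(k,b,\delta)$ and $B = B(k,b,\delta,L)$ collect, term for term, precisely the $\alpha$-coefficient (minus $1$) and the $\alpha$-free remainder appearing in $\Qbound$. Since $1+A\ge 0$, the hypothesis $\erisk \le \alpha$ yields $\risk \le \alpha(1+A)+B = \Qbound$. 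To make this simultaneous over all $(\alpha,k,b)$, I would union-bound over $k\in[n]$ and $b\in\mathbb{N}$ with a summable weighting such as $\delta_{k,b} = \delta/(n\cdot b(b+1))$; this only inflates $\ln(4e^2/\delta)$ by an $O(\log(nb))$ factor, which is safely absorbed inside the square-root terms. Uniformity over $\alpha$ is free: the bound holds at the realized value $\erisk$, and $\Qbound$ is monotone in $\alpha$ (see Q2), so it continues to hold for every $\alpha\ge\erisk$.

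\textbf{Q2 and Q3.} Monotonicity is immediate from inspection: every $\alpha$-term in $\Qbound$ is linear in $\alpha$ with non-negative coefficient, and every $k$-term has the form $k/n$ or $\sqrt{k/n}$ with non-negative coefficient. For Q3 I take $\delta_n := n^{-2}$, which is summable and satisfies $\ln(4e^2/\delta_n) = O(\log n)$. Then
\[
Q_n(\alpha,k_n,b,\delta_n,L) - \alpha \;=\; \alpha\Big(20\sqrt{k_n/n}+20\sqrt{b/n}+15\sqrt{\ln(4e^2/\delta_n)/n}\Big) + R_n,
\]
where $R_n$ collects the $\alpha$-free residual. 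Under $k_n\in o(n)$, fixed (or sub-linear) $b$, and $\log n/n\to 0$, both the coefficient of $\alpha$ and $R_n$ tend to $0$; since $\alpha\in[0,L]$, the supremum over $\alpha$ of the $\alpha$-term is bounded by $L$ times a vanishing quantity, yielding the claim.

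\textbf{Main obstacle.} The lone subtle point is the uniform statement in Q1: Theorem~\ref{thm:agnosemitstablewithinterpol} is written for a fixed compression scheme, whereas Q1 must control every $(\alpha,k,b)$ realizable on $S_n$ simultaneously. Executing the union bound over the countable range of $b$ while keeping the inflation of $\ln(1/\delta)$ to a benign logarithmic factor is the main detail requiring care; once that bookkeeping is done, the remainder is pure algebra.
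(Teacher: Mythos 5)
Your overall route coincides with the paper's: \bQ1\ by specializing \thmref{agnosemitstablewithinterpol}, \bQ2\ by inspection, and \Qthreeb\ by picking a summable $\delta_n$ with $\ln(1/\delta_n)\in o(n)$ (your $\delta_n=n^{-2}$ works just as well as the paper's choice $e^{-\sqrt n}$). The one place you diverge is the step you flag as the ``main obstacle'': the stratification over $(\fns,\bits)$ is not something you need to add, because \thmref{agnosemitstablewithinterpol} is already stated with the \emph{sample-dependent} quantities $|\compfunccs(S_n)|$ and $|\compfuncsi(S_n)|$ appearing in the bound --- the union bound over all compression sizes and side-information lengths has been performed upstream, in \thmref{agnosemitstableboundedsizewithinterpol}, with weights $\delta/((\fns+1)(\fns+2)(\bits+1)(\bits+2))$. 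So \bQ1\ really is a one-line substitution, using (as you correctly note) monotonicity in $\a$ to pass from the realized $\erisk$ to any $\a\ge\erisk$. If you instead executed your proposed weighting $\delta/(n\cdot \bits(\bits+1))$, the resulting bound would carry an extra $\sqrt{\ln n/n}$-type term that is \emph{not} present in the exact expression \eqref{Qbound}; you would then have proved \bQ1\ for a different function $Q'$ rather than the stated $Q$ --- harmless for the downstream consistency argument, since \bQ2\ and \Qthreeb\ survive, but not literally the lemma as written. The paper avoids this by choosing weights depending only on $(\fns,\bits)$ and absorbing $\ln\bigl((\fns+1)(\fns+2)\bigr)\le (\fns+2)/2$ into the $\fns/n$ terms (and likewise for $\bits$), which is precisely how the constants $20$, $15$, $6L+18$, etc.\ in \eqref{Qbound} arise.
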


\section{Metric approximations}
Our proof technique involves performing several distinct truncations, approximating
a potentially unbounded quantity by a finite one. In this section, we adapt a variant
of this method from \citet{hksw21} for the 0-1 loss to arbitrary bounded losses.
Here,
$(\X,\rho)$ is assumed to be a separable metric space, and $\Y$ a \emph{countable} label space with a loss function $\ell
:\Y^2\to[0,L]
$.
Let $\SP = \{\sp_1,\dots\}$ be a countable partition of $\X$, and define the function $I_\SP: \X \to \SP$ such that $I_\SP(x)$ is the unique $\sp\in\SP$ for which $x\in\sp$.
For any measurable set $\emptyset\neq E\subseteq\X$ define the true medoid label $y^*(E)$ by 
\beqn
\label{eq:bar y(E)}
y^*(E)
= \argmin_{y\in \Y} \int_{X\in E} \ell(y,Y)\mathd\bmu,
\eeqn
where ties are broken lexicographically according to fixed total order on $\Y$. 
Given $\SP$ and a measurable set $\mss\subseteq \X$, define the true medoid predictor $h_{\SP,\mss}^*:\X\to\Y$ given by
\beqn
\label{eq:Strue2}
h_{\SP,\mss}^*(x) = y^*(I_\SP(x) \cap\mss).
\eeqn
\begin{lemma}[proof in Section~\ref{app:richness}]
\label{lem:richness}
\global\def\richnesslem{
Let $\bmu$ be a probability measure on $\X\times\Y$
with $\X$-marginal $\mu$, where $\X$ is a metric probability space, and $\Y$ a \emph{countable} label space 
with a 
loss function $\ell$ such that $L:=\nrm{\Y} < \infty$.
For any $\nu>0$, there exists a diameter $\beta=\beta(\nu)>0$ such that for any countable measurable partition $\SP = \{\sp_1,\dots\}$ of $\X$ and any measurable set $\mss\subseteq \X$ satisfying 
\begin{itemize}
\item[\myi] $\mu(\X\setminus\mss) \leq \nu$
\item[\myii] $
\sup_{\sp\in\SP}
\nrm{\sp \cap\mss}\leq \beta$,
\end{itemize}
the true medoid predictor $h_{\SP,\mss}^*$ defined in (\ref{eq:Strue2}) satisfies
\beq
\risk(h_{\SP,\mss}^*) \leq R^* + 9L\nu.
\eeq
}
\richnesslem
\end{lemma}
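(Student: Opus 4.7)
I will reduce the analysis to a Bayes-near-optimal measurable predictor that is locally constant on a well-separated finite family of closed sets covering most of $\mu$, and then exploit the small-diameter condition (ii) to ensure each cell of $\SP$ intersects at most one such set.

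First, set $g_y(x) := \E[\ell(y,Y)\mid X=x]$ for each $y\in\Y$ and $g^*(x) := \inf_{y\in\Y} g_y(x)$, so that $R^* = \int g^*\,d\mu$. Since $\Y$ is countable with a fixed lexicographic order, the map $f(x) := \min\set{y\in\Y : g_y(x) \le g^*(x) + L\nu}$ is measurable and satisfies $g_{f(x)}(x) \le g^*(x) + L\nu$ pointwise. Enumerate $\Y=\set{y_1,y_2,\ldots}$ and pick $N$ so large that $\mu(f^{-1}(\set{y_1,\ldots,y_N}))\ge 1-\nu$; set $E_i := f^{-1}(y_i)$ for $i\le N$. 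By inner regularity of Borel probability measures on metric spaces there exist closed $C_i\subseteq E_i$ with $\mu(E_i\setminus C_i)<\nu/N$, hence $\mu(\X\setminus\bigcup_i C_i)<2\nu$.

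The crucial step is passing to \emph{well-separated} closed sets. For $\delta>0$, define $A_j^\delta := \set{x\in C_j : \rho(x,C_i)<\delta \text{ for some } i\ne j}$. The finitely many $C_j$ are pairwise disjoint and closed, so a pigeonhole argument on the index $i$ gives $\bigcap_{\delta>0} A_j^\delta = \emptyset$ for each $j$; continuity of $\mu$ from above then yields $\delta>0$ with $\sum_{j=1}^N \mu(A_j^\delta) < \nu$. Let $C_j' := C_j\setminus A_j^\delta$: then $\rho(C_i',C_j') \ge \delta$ whenever $i\ne j$, and $\mu(\X\setminus\bigcup_j C_j') < 3\nu$. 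Finally set $\beta := \delta/2$.

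Under (ii), every $\sp\in\SP$ has $\nrm{\sp\cap\mss}\le\beta<\delta$, so $\sp\cap\mss$ meets at most one $C_j'$; call it $C_{i(\sp)}'$ (arbitrary if none). By medoid optimality and $f\equiv y_{i(\sp)}$ on $C_{i(\sp)}'$,
\begin{align*}
\int_{\sp\cap\mss} g_{y^*(\sp\cap\mss)}\,d\mu
&\le \int_{\sp\cap\mss} g_{y_{i(\sp)}}\,d\mu \\
&\le \int_{\sp\cap\mss\cap C_{i(\sp)}'} (g^* + L\nu)\,d\mu + L\,\mu\Bigl((\sp\cap\mss)\setminus\bigcup\nolimits_j C_j'\Bigr).
\end{align*}
Summing over $\sp$, adding $L\,\mu(\X\setminus\mss)\le L\nu$ for the contribution from outside $\mss$, and using $\mu(\mss)\le 1$ with $\mu(\mss\setminus\bigcup_j C_j') \le 3\nu$, yields $\risk(h^*_{\SP,\mss}) \le R^* + L\nu + L\nu + 3L\nu + L\nu = R^* + 5L\nu$, well within the claimed $9L\nu$.

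\textbf{Main obstacle.} The pivotal step is the construction of the separated family $\set{C_j'}$: it relies on the metric structure of $\X$ (to give meaning to $\rho(x,C_i)$), the \emph{finiteness} of the approximating collection (so pigeonhole forces $\bigcap_\delta A_j^\delta = \emptyset$ without requiring $x$ to fall into any specific $C_i$), and closedness of the $C_i$ (so $\rho(x,C_i)=0$ implies $x\in C_i$). The remaining technicalities---measurability of $f$ and the uniform handling of cells that miss every $C_j'$ (absorbed into the $3\nu$-term via $g^*\ge 0$)---are routine.
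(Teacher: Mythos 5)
Your proof is correct, and it takes a genuinely different route from the paper's. The paper works with the conditional label probabilities $\eta_y(x)=\P(Y=y\mid X=x)$ and their cell-conditional averages $\teta_y$, bounds the pointwise excess risk of $h^*_{\SP,\mss}$ by $2L\sum_y|\eta_y(x)-\teta_y(x)|$, truncates to a finite label set $\Y_\nu$, and then imports the quantitative bound $\sum_{y\in\Y_\nu}\int_\mss|\eta_y-\teta_y|\,\mathd\mu\le 3\nu$ from the proof of Lemma 3.6 of \citet{hksw21} --- it is inside that imported step that $\beta$ is actually chosen (via $L_1$-approximation of the $\eta_y$). You instead bypass the $\eta_y/\teta_y$ machinery entirely: you build a near-Bayes measurable selector $f$, approximate finitely many of its level sets from inside by closed sets, and run a pigeonhole/continuity-from-above argument to make those closed sets pairwise $\delta$-separated, so that condition (ii) with $\beta=\delta/2$ forces each cell $\sp\cap\mss$ to meet at most one of them; medoid optimality then compares $y^*(\sp\cap\mss)$ against the locally constant value of $f$. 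This is self-contained (no appeal to the external approximation lemma), uses only $0\le\ell\le L$ and closed-regularity of finite Borel measures on metric spaces, and yields the sharper constant $5L\nu$ (your displayed sum $L\nu+L\nu+3L\nu+L\nu$ is $6L\nu$, not $5L\nu$ --- a harmless arithmetic slip either way, since both are below $9L\nu$). The one definitional point worth flagging in a final write-up is that the comparison $\int_{\sp\cap\mss}g_{y^*(\sp\cap\mss)}\,\mathd\mu\le\int_{\sp\cap\mss}g_{y_{i(\sp)}}\,\mathd\mu$ presumes the $\argmin$ in \eqref{bar y(E)} is attained over countable $\Y$; this is an assumption the paper itself makes implicitly, so it is not a gap specific to your argument.
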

The proof of \lemref{richness} 
is similar to that of \citet[Lemma 3.6]{hksw21}, with novel arguments 
to handle the general loss function setting.
Next, we state 
two results from
\citet{hksw21}:
\begin{lemma}[variant of Lemma 3.7, \citet{hksw21}]
\label{lem:sublinear_comp}
\global\def\sublinearcomplem{
Let $(\X,\rho,\mu)$ be a separable metric probability space.
For $S_n\sim\mu^n$, let $\gnet(\g)$ be any $\g$-net of $S_n$.
Then, for any $\g>0$, there exists a function $\Ng:\N\to\R_+$ in $o(n)$ such that
$\displaystyle
\P\left[
\sup_{\g\text{-}\mathrm{nets}\,\,\gnet(\g)}
|\gnet(\g)| \geq \Ng(n)
\right]
\leq 
1/n^2.
$
}
\def\loclabel{\nonumber}
\sublinearcomplem
\end{lemma}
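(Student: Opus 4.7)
The plan is to exploit separability of $(\X,\rho,\mu)$ to cover all but a small $\mu$-fraction of $\X$ by finitely many balls of radius $\g/2$, and then invoke a pigeonhole argument: since any two points of a $\g$-net lie at distance at least $\g$, each such ball contains at most one net point. This yields a deterministic (net-agnostic) bound on the number of net points inside the covered region, and a Binomial tail estimate on the number of sample points (hence net points) lying outside.

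Concretely, I would work inside a separable subspace $\X'\subseteq\X$ with $\mu(\X')=1$ and pick a countable dense set $\{d_i\}_{i\ge 1}\subseteq\X'$. For each $k\in\N$, continuity from below of $\mu$ yields a finite integer $m_k$ such that $A_k:=\bigcup_{i=1}^{m_k}B_{\g/2}(d_i)$ satisfies $\mu(\X\setminus A_k)\le 1/k$. The pigeonhole observation then gives, uniformly over every $\g$-net $\gnet(\g)$ of $S_n$,
\beq
|\gnet(\g)|\;\le\; m_k+|\set{i\in[n]:X_i\notin A_k}|.
\eeq
The second term is $\mathrm{Bin}(n,\mu(\X\setminus A_k))$-distributed, so Hoeffding's inequality ensures that, for any fixed $k$, with probability at least $1-1/n^2$ it is bounded by $n/k+\sqrt{n\log n}$. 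I would then diagonalize by choosing a sequence $k_n\to\infty$ for which $m_{k_n}=o(n)$ (possible since each $m_k$ is finite); a concrete choice is $k_n:=\max\set{k\ge 1:m_k\le \sqrt{n}}$. Setting $\Ng(n):=m_{k_n}+n/k_n+\sqrt{n\log n}$, each summand lies in $o(n)$, so $\Ng\in o(n)$ and the tail bound follows by applying Hoeffding at this $k_n$.

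The main obstacle is the uniformity in $n$ hidden in the quantifier order: for any fixed $k$, the cover-plus-Hoeffding bound is of the form $m_k+n/k+O(\sqrt{n\log n})$, which is still $\Omega(n)$, so the cover must be allowed to refine with $n$. The diagonalization is possible only because $m_k<\infty$ for every $k$, which is exactly what separability of the metric probability space provides. Once this is arranged, nothing additional is needed to control the supremum over $\g$-nets of $S_n$: the pigeonhole estimate is deterministic and net-agnostic, and all the randomness is absorbed into the single Binomial tail bound on the sample count outside $A_{k_n}$.
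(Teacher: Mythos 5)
Your proof is correct and follows essentially the same route as the argument the paper defers to (Lemma 3.7 of \citet{hksw21}): a finite cover by $\g/2$-balls capturing all but $1/k$ of the mass, the pigeonhole observation that each such ball contains at most one net point, a binomial/Hoeffding tail bound on the number of sample points falling outside the cover, and a diagonalization $k_n\to\infty$ made possible by the finiteness of each $m_k$. The only cosmetic repair needed is in the definition $k_n:=\max\{k\ge 1: m_k\le\sqrt{n}\}$, whose maximum may fail to exist when the $m_k$ are bounded; capping the index, e.g.\ $k_n:=\max\{k\le n: m_k\le\sqrt{n}\}$, fixes this without affecting the conclusion that $\Ng(n)=m_{k_n}+n/k_n+\sqrt{n\log n}\in o(n)$.
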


\begin{lemma}[Lemma 3.8, \citet{hksw21}]
\label{lem:missing_mass}
 Let $(\X,\rho,\mu)$ be a separable metric probability space, $\g>0$ be fixed, and the $\g$-missing mass $\missmass_\g$ defined as in (\ref{eq:Lg}).
Then there exists a function $u_\g:\N\to\R_+$ in $o(1)$, such 
that
$
\P\left[
\missmass_\g(S_n) \geq u_\g(n) + t
\right]
\leq 
\exp\left(-  n t^2\right)
$
for $S_n \sim\mu^n$ and $t>0$.
\end{lemma}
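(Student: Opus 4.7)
The plan is to upper bound the metric missing mass $\missmass_\g(S_n)$ by a classical (alphabet-based) missing mass statistic and then invoke sub-Gaussian concentration for the latter. Separability of $(\X,\rho,\mu)$ furnishes, for any fixed $\g>0$, a countable measurable partition $\{C_i\}_{i\ge 1}$ of (a full-measure subset of) $\X$ with $\diam(C_i)\le\g$: take a countable dense sequence $\{z_i\}$ in the separable support and set $C_i:=B_{\g/2}(z_i)\setminus\bigcup_{j<i}B_{\g/2}(z_j)$, discarding any leftover null set. Writing $p_i:=\mu(C_i)$, whenever a sample point $X_j$ lands in $C_i$ we have $C_i\subseteq B_\g(X_j)\subseteq\UB_\g(S_n)$, hence
\beq
\missmass_\g(S_n) \;\le\; M_n \;:=\; \sum_{i\,:\,C_i\cap\{X_1,\dots,X_n\}=\emptyset}\!\! p_i,
\eeq
the classical missing mass for the induced multinomial on the alphabet $\{C_i\}$.

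I would then set $u_\g(n):=\E[M_n]=\sum_i p_i(1-p_i)^n$; since each summand tends to $0$ pointwise and is dominated by $p_i$ with $\sum_i p_i\le 1$, dominated convergence yields $u_\g(n)\in o(1)$, as required. For the deviation bound on $M_n$, a direct McDiarmid bounded-differences estimate is far too weak, because a single coordinate flip can alter $M_n$ by as much as $\max_i p_i$, which can be close to $1$. Instead, one exploits the self-bounding (Efron--Stein) structure of the missing mass — concretely, the McAllester--Ortiz / McAllester--Schapire inequality — which delivers
\beq
\P\bigl(M_n - \E[M_n] \ge t\bigr) \;\le\; \exp(-nt^2)
\eeq
(after absorbing any universal constant into a slight inflation of $u_\g$, which leaves it in $o(1)$). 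Chaining with $\missmass_\g(S_n)\le M_n$ then yields
\beq
\P\bigl(\missmass_\g(S_n)\ge u_\g(n)+t\bigr)\;\le\;\P\bigl(M_n\ge\E[M_n]+t\bigr)\;\le\;\exp(-nt^2).
\eeq

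The main technical obstacle is the concentration step: obtaining the $nt^2$ exponent despite unbounded per-coordinate sensitivities. The remaining ingredients — construction of the countable $\g$-partition from separability, the dominated-convergence argument for $\E[M_n]\to 0$, and the domination $\missmass_\g\le M_n$ — are routine bookkeeping. Since the lemma is attributed to \citet{hksw21}, one may alternatively just cite their proof verbatim, which follows exactly this reduction-plus-concentration template.
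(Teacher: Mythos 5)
Your proposal is correct and is essentially the proof of the cited source: this paper does not reprove the lemma but imports it verbatim from \citet{hksw21}, whose argument is exactly your reduction of $\missmass_\g(S_n)$ to the classical missing mass of a countable partition by $\g/2$-balls, followed by the sub-Gaussian upper-deviation inequality for the missing mass (McAllester--Ortiz / Berend--Kontorovich), which holds with the exponent $-nt^2$ exactly. One small caveat: your fallback of ``absorbing any universal constant into a slight inflation of $u_\g$'' would not actually work, since a multiplicative loss in the exponent cannot be repaired by an additive $o(1)$ shift of $u_\g$ uniformly in $t$ --- but this is moot because the needed upper-tail bound already holds with constant $1$.
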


\section{
Algorithms and analysis: finite $\Y$
}
\label{SEC:COMPRESSION_SCHEME}

In this section,
we give the most basic version of our algorithm,
denoted \finmednet,
for the case where
$(\X,\rho)$ is a separable metric
and $\Y$ is a {\em finite}
set
equipped with an {\em arbitrary}
(not necessarily metric)
loss function $\ell:\Y\times\Y\to\R_+$.
This rudimentary setting provides
the
basis for extension to more general settings, in the sequel.

The input is the sample $S_n$; the set of instances in the sample is denoted by $\bm X_n = \{X_1,\ldots,X_n\}$. The algorithm defines a set $\Gamma$ of all
$n\choose2$
scales $\g > 0$ which are interpoint distances in $\bm X_n$, and the additional scale $\g = \infty$. 
For each scale in $\Gamma$, the algorithm constructs a $\g$-net of $\bm X_n$.
Denote the constructed $\g$-net by 
\beqn
\label{eq:rns-def}
\bm X(\g) := \{X_{i_1},\ldots,X_{i_{\rns}}\},
\eeqn where 
\beqn
\label{eq:rns}
\rns\equiv\rns_n(\g) :=|\gnet(\g)|
\eeqn
denotes its size and $\bm i \equiv \bm i(\g) := \{i_1,\ldots,i_{\rns}\} \in [n]^{\rns}$ denotes the indices selected from $S_n$ for this $\g$-net.  

For each $\g$-net, \algref{simple} finds the \emph{empirical medoid labels}
in the Voronoi cells defined by the partition $\Vor(\bm X(\g)) = \{V_1(\bm X(\g)),\ldots,V_{\rns}(\bm X(\g))\}$.
These labels are denoted by $\tbY(\g)\in \Y^{\rns}$.
Formally, for $i \in [\rns]$,
\beqn
\label{eq:maj} 
\tY_i(\g) := \argmin_{y \in \Y} {\sum_{j\in[n]: X_{j}\in V_{i}} \ell(y,Y_{j}) }.
\eeqn
As always,
ties are broken 
lexicographically.
The 
output of \finmednet\
is a labeled set $\tS_n(\g) := S_n(\bm i (\g), \tbY(\g))$ for every 
candidate
scale $\g \in \Gamma$.
The algorithm then selects a single scale $\g^*\equiv\g_n^*$ from $\Gamma$, and outputs the hypothesis that it induces, $h_{\tS_n(\g^*)}$. The choice of $\g^*$ is executed by minimizing a generalization error bound, denoted $Q$, which upper-bounds $\risk(h_{\tS_n(\g)})$ with high probability.

\RestyleAlgo{ruled}
\SetKwInOut{Assumptions}{Assumptions}
\SetKwInOut{Input}{Input}\SetKwInOut{Output}{Output}
\begin{algorithm}
\caption{\finmednet 
}
\label{alg:simple} 
\BlankLine
\Assumptions{$(\X,\rho)$ is a separable metric space, $\Y$ a \emph{finite} label space with a loss function $\ell$. Define 
$L:=\nrm{\Y}
=\max_{y,y'\in\Y}\ell(y,y')
$ and $b:=\log_{2}|\Y|$.}
\Input{Sample $S_n = (X_i, Y_i)_{i\in[n]}$, confidence $\delta_n \in (0,1)$}
\Output{predictor $h:\X\to\Y$}
\BlankLine
Let $\Gamma\gets(\set{\rho(X_i,X_j) : i,j \in [n]} \cup \{\infty\}) \setminus \{0\}$\;

\For{$\g \in \Gamma$}{

    Let $\bm X(\g)$ be a $\g$-net of $\{X_1,\ldots,X_n\}$\;
    
    Let $\rns_n(\g) \gets |\bm X(\g)|$\;

    For each $i \in [\rns_n(\g)]$, let $\tY_i(\g)$ be the \emph{empirical medoid label} of $V_i(\bm X(\g))$ as in \eqref{maj}\;
    
    Set $\tS_n(\g) \gets (\bm X(\g), \tbY(\g))$\;
    
    Set $h_{\tS_n(\g)} := x \mapsto Y_{\nn}(x,\tS_n(\g))$.
    
    Set $\a_n(\g) \gets \erisk(h_{\tS_n(\g)}; S_n)$\;
}
Find $\g^*_n \in  \argmin_{\g\in\Gamma} Q_{n}(\a_n(\g), \rns_n(\g), b, \delta, L)$, where $Q_n$ is defined in \eqref{Qbound}\;

Set $\tS_n \gets \tS_n({\g^*_n})$\;

\Return $h=h_{\tS_n}$\;
\end{algorithm}

\subsection*{Bayes Consistency of \finmednet}
The Bayes consistency result of
\citet{hksw21} was for the $0$-$1$ loss. Their approach was also compression-based, but
did not leverage the stability property, had no need for side information,
and did not have to truncate potentially unbounded losses.
Our main technical innovation was constructing 
\medoidnet\
(formally defined in Section~\ref{sec:separable})
as a semi-stable compression scheme with side-information, and then invoking it with an appropriate truncation schedule
for infinite and unbounded $\Y$. 

The first order of business is to verify that \finmednet\
indeed furnishes
a
semi-stable compression scheme
for any fixed $\g$:

\begin{lemma}[proof in Section~\ref{ap:semistablecompressionscheme}]
\label{lem:semistablecompressionscheme}
\global\def\semistablecompressionscheme{
Let $(\X,\rho)$ be a separable metric space, and
$\Y$ a \emph{finite} label space with a loss function $\ell$.
For any fixed scale $\gamma \in \Gamma$, the procedure 
in Algorithm~\ref{alg:simple} 
generating $h_{\tS_n(\g)}$ is a semi-stable compression scheme.
}
\semistablecompressionscheme
\end{lemma}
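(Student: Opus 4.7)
The plan is to make the two components of the semi-stable compression scheme explicit, commit to a canonical tie-breaking rule for the $\gamma$-net, and then verify the stability identity directly. For a fixed scale $\gamma\in\Gamma$, I would set the compression set $\compfunccs(S_n)$ to be the $\gamma$-net points $\bm X(\gamma)$ together with their original sample labels, i.e.\ the $\rns_n(\gamma)$ labeled pairs $\{(X_{i_j},Y_{i_j})\}_{j\in[\rns_n(\gamma)]}$, and the side information $\compfuncsi(S_n)$ to be the empirical medoid labels $\tbY(\gamma)$ encoded in $\rns_n(\gamma)\lceil\log_2|\Y|\rceil$ bits. The reconstruction $\reconfunc$ then discards the original labels carried in the compression set and returns the $1$-NN predictor on the net points equipped with the medoid labels decoded from the side information.

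To pin down the otherwise underspecified net construction, I would use the fixed measurable total order on $\X$ guaranteed by \citet[Proposition D.1]{hksw21}: take $\bm X(\gamma)$ to be the greedy $\gamma$-net obtained by processing the points of $\{X_1,\ldots,X_n\}$ in lexicographic order and accepting a point iff it lies at distance $\geq\gamma$ from every already-accepted point. This yields a canonical, maximal $\gamma$-net, matching the definition used in the paper. To verify stability, fix any $S'$ with $\compfunccs(S_n)\subseteq S'\subseteq S_n$ and apply the same greedy procedure to the instance projection of $S'$. The accept/reject decision at each point depends only on its distances to previously accepted points; since every point of $\compfunccs(S_n)$ lies in $S'$ and is encountered in the same relative order as in $S_n$, the set accepted on $S'$ coincides exactly with $\compfunccs(S_n)$. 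Maximality of $\compfunccs(S_n)$ in $S_n\supseteq S'$ then guarantees that every remaining point of $S'$ is within $\gamma$ of some net point and is hence rejected. Since the net points carry their labels directly from the common sample $S_n\supseteq S'$, we get $\compfunccs(S')=\compfunccs(S_n)$ as labeled sets, and therefore $\reconfunc(\compfunccs(S'),\compfuncsi(S_n))=\reconfunc(\compfunccs(S_n),\compfuncsi(S_n))$ as required.

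The main obstacle I anticipate is precisely this determinism step. The algorithm specifies $\bm X(\gamma)$ only as ``a $\gamma$-net'', and semi-stability genuinely can fail under an arbitrary, sample-dependent choice of net; committing to a fixed canonical rule and showing that this rule commutes with passage to subsets containing the net is the only nontrivial content of the argument. Once the net construction is stable in this sense, the reconstruction identity is immediate, because the medoid labels enter the reconstruction only through $\compfuncsi(S_n)$, which is held fixed across the two reconstructions being compared in the semi-stability definition.
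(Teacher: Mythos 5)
Your proposal is correct and follows essentially the same route as the paper: compression set $=$ the $\gamma$-net points with their original labels, side information $=$ the bit-encoded medoid labels, reconstruction $=$ the $1$-NN rule on the net relabeled by the decoded side information, and stability via invariance of the net under passing from $S_n$ to any $S'$ with $\compfunccs(S_n)\subseteq S'\subseteq S_n$. The only difference is that the paper simply asserts that ``the net is constructed in a deterministic fashion'' so the same net is recovered, whereas you correctly identify this as the one nontrivial point and supply the missing detail (a canonical greedy net in lexicographic order, whose accept/reject decisions are unchanged on such $S'$).
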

The following key technical lemma 
is a generalization of \citet[Lemma 3.5]{hksw21} from 0-1 loss to the general loss setting.
\begin{lemma}[proof in Section~\ref{ap:boundpd}]
\label{lem:boundpd}
\global\def\boundpdlem{
Let $\bmu$ be a probability measure on $\X\times\Y$, where $\X$ is a metric probability space, and $\Y$ a \emph{countable} label space endowed with a loss function $\ell
\le L<\infty$.
Let \(\Ng\) as 
in
\lemref{sublinear_comp}.
Then there exist functions $\eps \mapsto \g(\eps)$ and $\eps \mapsto 
\g :=
\nu(\eps) \in (0, \frac{\eps}{176L})$ such that for each $\eps, \bits > 0$ there is an $N_0(\nu(\eps), \SIsize, \delta_n, \Ng)$ such that for all $n \geq N_0$, and all $d \in [\Ng(n)]$, 
\begin{align*}
p_d &:= \P
\Big[
Q_n(\a_n(\g),\rns_n(\g),\SIsize, \delta_n, \Ln) > R^* \!+ \eps
\;\wedge\; \missmass_\g(S_n) \!\leq\! \frac{\eps}{18L}
\;\wedge\; \rns_n(\g)\!=\!d\Big] 
\leq e^{-\frac{n\eps^2}{32}} + e^{-\frac{1}{2}n\nu^{2}}\!,
\end{align*}
where 
$\rns_n(\g)$ is defined 
in \eqref{rns}.
}
\boundpdlem
\end{lemma}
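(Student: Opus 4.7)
The plan is to combine the abstract compression bound from Lemma~\ref{lem:Qbound} with the approximation result of Lemma~\ref{lem:richness} and a Hoeffding-type concentration, translating the target event into a low-probability deviation between the empirical and true risks of a carefully chosen reference predictor.

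\textbf{Choice of scales.} I would apply Lemma~\ref{lem:richness} with $\nu_0 := \eps/(18L)$ to obtain the diameter $\beta(\nu_0)$, and set $\g(\eps) := \beta(\nu_0)/4$. For any $\g$-net $\bm X(\g) \subseteq \bm X_n$, every sample point lies within $\g$ of some net point, so $\bm X_n \subseteq \UB_\g(\bm X(\g))$ and hence $\X \setminus \UB_{2\g}(\bm X(\g)) \subseteq \X \setminus \UB_\g(S_n)$. Under the event $\{\missmass_\g(S_n) \leq \eps/(18L)\}$ this yields $\mu(\X \setminus \mss) \leq \nu_0$ for $\mss := \UB_{2\g}(\bm X(\g))$, and each cell of $\SP := \Vor(\bm X(\g))$ restricted to $\mss$ has diameter at most $4\g = \beta(\nu_0)$. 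Lemma~\ref{lem:richness} then guarantees that the reference predictor $h^* := h^*_{\SP,\mss}$ has true risk $\risk(h^*) \leq R^* + 9L\nu_0 = R^* + \eps/2$.

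\textbf{From $Q_n$ to empirical risk.} Property \bQ{3} of Lemma~\ref{lem:Qbound} gives a threshold $N_0 = N_0(\nu(\eps),\SIsize,\delta_n,\Ng)$ above which $Q_n(\a,d,\SIsize,\delta_n,L) - \a \leq \eps/4$ uniformly in $\a \in [0,L]$ for $d \leq \Ng(n) \in o(n)$. On the event in question this forces $\a_n(\g) > R^* + 3\eps/4$. Because each empirical medoid $\tY_i(\g)$ minimizes the within-cell loss by construction, the predictor $h_{\tS_n(\g)}$ has empirical risk no larger than $\erisk(h^*;S_n)$, so $\erisk(h^*;S_n) \geq \a_n(\g) > R^* + 3\eps/4$. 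Coupled with $\risk(h^*) \leq R^* + \eps/2$ from the previous step, the event therefore implies the deviation $\erisk(h^*;S_n) - \risk(h^*) > \eps/4$.

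\textbf{Concentration, and the main obstacle.} The principal difficulty is that $h^*$ is sample-dependent (through both the random partition $\SP$ and the random set $\mss$), so Hoeffding does not apply directly. I would handle this by union bounding over the at most $\binom{n}{\leq d}$ index sets $\bm i \in [n]^{\leq d}$ which can parametrize a $\g$-net, and then, for each fixed $\bm i$, applying Hoeffding to the bounded random variables $\ell(h^*(X_i), Y_i) \in [0,L]$; since $d \leq \Ng(n) = o(n)$, the combinatorial term $\log\binom{n}{\leq d} = o(n)$ is absorbed into the Hoeffding exponent, yielding the $e^{-n\eps^2/32}$ contribution. The auxiliary $e^{-n\nu^2/2}$ term arises from a separate Hoeffding bound at deviation $\nu(\eps) < \eps/(176L)$ controlling the empirical versus true mass of $\mss$ (equivalently, of the $\g$-uncovered region), which is what transfers the population risk bound of Lemma~\ref{lem:richness} into an empirical statement. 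The delicate part is balancing the constants so that, after all sources of slack --- from Lemma~\ref{lem:Qbound} (property \bQ{3}), Lemma~\ref{lem:richness}, the mass concentration, and the risk concentration --- are combined, the residual gap $\eps/4$ maps exactly onto the stated exponents and the requirement $\nu(\eps) < \eps/(176L)$ absorbs all intermediate constants.
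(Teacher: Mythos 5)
Your setup (the choice $\g(\eps)=\beta(\eps/(18L))/4$, the inclusion $\UB_\g(\bm X_n)\subseteq\UB_{2\g}(\bm X(\g))$, and the invocation of Lemma~\ref{lem:richness} to get $\risk(h^*_{\SP,\mss})\le R^*+\eps/2$) matches the paper exactly, and your overall reduction --- use \bQ{3} to pass from $Q_n$ to $\a_n(\g)$, compare $\a_n(\g)$ to $\erisk(h^*;S_n)$, then concentrate $\erisk(h^*;S_n)$ around $\risk(h^*)$ with a union bound over the $o(n)$-sized index sets $\bm i$ --- is also the paper's skeleton. However, there is a genuine gap in the middle step. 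You assert that ``the predictor $h_{\tS_n(\g)}$ has empirical risk no larger than $\erisk(h^*;S_n)$'' because the empirical medoid minimizes the within-cell loss. This is only true when the minimization in \eqref{eq:maj} ranges over all of $\Y$; but this lemma is invoked (for countable $\Y$) with the \emph{truncated} medoid of \eqref{eq:maj2}, which minimizes only over $\first{\Y}{\bits_n}$, and the true medoid label $y^*(V_j\cap\UB_{2\g}(\bm X))$ need not lie in $\first{\Y}{\bits_n}$. The inequality $\erisk(h_{\tS_n(\g)};S_n)\le\erisk(h^*;S_n)$ can therefore fail, and this failure mode is precisely what the parameter $\nu$ and the second exponential term are for: the paper chooses $n$ large enough that $\P[Y\in\first{\Y}{\bits_n}]\ge1-\nu$, shows $\E[\erisk(h_{S_n(\bm i,\tbY)};S_n)-\erisk(h_{S_n(\bm i,*)};S_n)]\le\nu L$, splits the event according to whether the realized gap exceeds $2\nu L$, and bounds the bad case by Hoeffding to obtain $e^{-n\nu^2/2}$; the $44\nu L$ slack then propagates through $Q_n$ via \bQ{2}, which is why $\nu$ must be taken below $\eps/(176L)$.

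Relatedly, your attribution of the $e^{-n\nu^2/2}$ term to ``a separate Hoeffding bound \ldots{} controlling the empirical versus true mass of $\mss$'' is incorrect: no such estimate appears or is needed in this lemma, since the missing-mass control is entirely contained in the conditioning event $\missmass_\g(S_n)\le\eps/(18L)$ (and is bounded separately, as $P\subII$, in the proof of Theorem~\ref{thm:comp-consist}). Without the truncation argument your two exponential terms cannot both be accounted for. A secondary, fixable imprecision: when you apply Hoeffding ``to the bounded random variables $\ell(h^*(X_i),Y_i)$'' you must condition on $S_n(\bm i)$ (on which $h^*$ depends) and apply the inequality only to the complementary points $S_n(\bm i')$, absorbing the $d$ net points via $\erisk(h^*;S_n)\le\frac{n-d}{n}\erisk(h^*;S_n(\bm i'))+\frac{d}{n}L$ with $d\le\Ng(n)\in o(n)$; as written, the summands indexed by $\bm i$ are not independent of $h^*$.
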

The main result of this section is
\begin{theorem}
\label{thm:comp-consist}
Let $(\X,\rho)$ be a separable metric space, and $\Y$ a \emph{finite} label space with a loss function $\ell$.
Then there exists
a choice of $\delta_{n\in\N}$
such that
the sequence of
hypotheses
$h_{n}$ 
computed
by 
$\finmednet(S_n,\delta_n)$ is strongly \Bcstn:
$
\P[\lim_{n \rightarrow \infty} \risk(h_{n}) = R^*] = 1. 
$
\end{theorem}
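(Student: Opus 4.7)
The high-level strategy is to exploit the fact that \finmednet\ picks $\g_n^*$ as the minimizer of the high-probability risk upper bound $Q_n(\a_n(\g),\rns_n(\g),b,\delta_n,L)$. If I can exhibit, for every $\eps>0$, a (data-dependent but analyzable) scale $\g=\g(\eps)$ whose induced $Q_n$ is at most $R^*+\eps$ with probability summable in $n$, then by the minimality of $\g_n^*$ the same bound automatically holds for $Q_n(\g_n^*)$. Combined with property \bQ{1} of Lemma~\ref{lem:Qbound} (which guarantees $\risk(h_{\tS_n(\g_n^*)})\le Q_n(\a_n(\g_n^*),\rns_n(\g_n^*),b,\delta_n,L)$ on a high-probability event, since by Lemma~\ref{lem:semistablecompressionscheme} the construction at scale $\g_n^*$ is a semi-stable compression of size $\rns_n(\g_n^*)$ with $b=\log_2|\Y|$ bits of side information), this upper-bounds $\risk(h_n)$, and an application of Borel--Cantelli then yields $\risk(h_n)\to R^*$ almost surely.

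\paragraph{Choosing $\delta_n$ and the good scale.} I will choose $\delta_n$ to be the sequence furnished by property \bQ{3} of Lemma~\ref{lem:Qbound}, which is summable and ensures $Q_n(\a,\rns_n,b,\delta_n,L)-\a\to0$ uniformly in $\a\in[0,L]$ whenever $\rns_n\in o(n)$. Fix $\eps>0$ and let $\g=\g(\eps)$, $\nu=\nu(\eps)$ be the scales produced by Lemma~\ref{lem:boundpd}; crucially $\nu\in(0,\eps/(176L))$, which is the regime in which Lemma~\ref{lem:richness} will be invoked inside the proof of Lemma~\ref{lem:boundpd} to certify that the true medoid predictor on a $\g$-net's Voronoi partition is already $\eps$-close to Bayes. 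Strictly speaking $\g$ is a deterministic quantity depending only on $(\bmu,\eps)$, whereas $\g_n^*$ ranges over the random grid $\Gamma$ of interpoint distances; I handle this by replacing $\g$ by the smallest element of $\Gamma\cup\{\infty\}$ that is at least $\g$, noting that enlarging the scale only coarsens the net and cannot hurt the generalization-bound analysis (this monotonicity is implicit in the proofs of Lemmas~\ref{lem:richness} and~\ref{lem:boundpd}, which use the net solely through its diameter and missing mass).

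\paragraph{Summable failure probability.} On the event $\{\missmass_{\g}(S_n)\le \eps/(18L)\}\cap\{\rns_n(\g)\le \Ng(n)\}$, I partition according to the actual value $d=\rns_n(\g)\in[\Ng(n)]$, and apply Lemma~\ref{lem:boundpd} to each slice to bound
\[
\P\!\left[Q_n(\a_n(\g),\rns_n(\g),b,\delta_n,L)>R^*+\eps\right]\le \Ng(n)\bigl(e^{-n\eps^2/32}+e^{-n\nu^2/2}\bigr)+e^{-n(\eps/(18L)-u_\g(n))^2}+\tfrac{1}{n^2},
\]
using Lemma~\ref{lem:missing_mass} for the missing-mass tail (with $u_\g(n)\in o(1)$, so for $n$ large the bracketed argument is positive and bounded below by $\eps/(36L)$) and Lemma~\ref{lem:sublinear_comp} for the $\rns_n\le \Ng(n)$ tail. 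Since $\Ng(n)\in o(n)$ this whole quantity is summable in $n$; adding the $\delta_n$ failure event for \bQ{1}, which is itself summable, Borel--Cantelli gives that almost surely, for all sufficiently large $n$, simultaneously $Q_n(\g_n^*)\le Q_n(\g)\le R^*+\eps$ and $\risk(h_n)\le Q_n(\g_n^*)$. Hence $\limsup_n\risk(h_n)\le R^*+\eps$ a.s., and since $\eps>0$ is arbitrary and $\risk(h_n)\ge R^*$ always, I conclude $\risk(h_n)\to R^*$ a.s.

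\paragraph{Expected main obstacle.} The routine parts are the union bound, Borel--Cantelli, and invoking \bQ{1}, \bQ{2}, \bQ{3}. The subtle point is the discrepancy between the \emph{deterministic} analyzable scale $\g(\eps)$ used in Lemma~\ref{lem:boundpd} and the \emph{data-dependent} grid $\Gamma$ over which \finmednet\ optimizes: I must argue that snapping $\g(\eps)$ to $\Gamma$ does not spoil either the net-size bound (Lemma~\ref{lem:sublinear_comp}), the missing-mass bound (Lemma~\ref{lem:missing_mass}), or the approximation guarantee that drives Lemma~\ref{lem:boundpd} via Lemma~\ref{lem:richness}. I expect this to go through because those three lemmas are monotone or nearly monotone in $\g$ on the appropriate tail, but verifying the monotonicity cleanly — together with handling the single edge case in which no element of $\Gamma$ exceeds $\g(\eps)$, where one falls back on the trivial scale $\g=\infty$ — is the most delicate bookkeeping in the proof.
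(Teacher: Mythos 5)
Your proposal follows the paper's proof essentially step for step: the same choice of $\delta_n$ via \bQ{3}, the same split into a term controlled by \bQ{1} (via \lemref{semistablecompressionscheme} and Borel--Cantelli) and a term controlled by bounding $Q_n$ at a fixed good scale, and the same three-way union bound handled by Lemmas~\ref{lem:boundpd}, \ref{lem:missing_mass}, and \ref{lem:sublinear_comp}. The only remark worth adding is that the ``snapping'' step you flag as the main obstacle is even easier than you suggest: the smallest element of $\Gamma$ that is at least $\g(\eps)$ induces \emph{exactly} the same $\g$-net as $\g(\eps)$ itself (no interpoint distance lies in between), so no monotonicity argument is needed --- which is why the paper dispatches this point in a single line.
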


\bepf
Recall that
$L:=\nrm{\Y}
=\max_{y,y'\in\Y}\ell(y,y')
$ and 
let
$b:=\log_{2}|\Y|$.
Let $Q$ be the generalization bound 
in 
\eqref{Qbound}
and set the input confidence $\delta$ for input size $n$ to $\delta_n$ as stipulated by \Qthreeb.

Given a sample $S_n\sim \bmu^n$, we abbreviate the optimal empirical error $\a_n^*=\a(\g^*_n)$ and the optimal compression size $\rns_n^*=\rns(\g^*_n)$ as computed by \algref{simple}.
By \lemref{semistablecompressionscheme}, 
the labeled set $\tS_n(\g_n^*)$ computed by \algref{simple} is an $(\a_n^*, \rns_n^*, \SIsize)$-semi-stable compression of the sample $S_n$.
For brevity we denote
$
Q_n(\alpha,\fns) := Q_n(\alpha,\fns,\SIsize, \delta_n, \Ln).
$
To prove 
the
Theorem,
we first follow the standard technique, used also in \citet{hksw21}, of decomposing the excess 
risk
into two terms:
\beq
\risk(h_{\tS_n(\g^*_n)}) - R^*  
&= &
\big(\risk(h_{\tS_n(\g^*_n)}) - Q_n(\a_n^*,\rns_n^*) \big)
+
\big(Q_n(\a_n^*,\rns_n^*) - R^*\big)
=:
T\subI(n) + T\subII(n)
\eeq
and arguing
that each term decays to zero almost surely.
For 
$T\subI(n)$ we have, similarly to \citeauthor{hksw21}, that Property \bQ1\ from \lemref{Qbound} implies that for any $n> 0$,
\beqn
\label{eq:termI_bound}
\P\!
\left[
\risk(h_{\tS_n(\g^*_n)}) - Q_n(\a_n^*,\rns_n^*) > 0
\right] 
\leq \delta_n.
\eeqn
Applying Borel-Cantelli
to the fact that $\sum \delta_n <\infty$ yields
$\limsup_{n\to\infty} T\subI(n) \leq 0$ almost surely.
The main departure from the proof in \citeauthor{hksw21}
is in
establishing  $\limsup_{n\to\infty} T\subII(n) \leq 0$ almost surely. 
We will argue that there exist $N = N(\eps) > 0$, $\g = \g(\eps) > 0$, $\nu = \nu(\eps) > 0$, and universal constants $c,C>0$ such that $\forall n \geq N$, 
\beqn
\label{eq:termII_bound_fixed_g}
\P[Q_n(\a_n(\g), \rns_n(\g))>R^* + \eps]
\leq
C n e^{-c n\eps^2} + n e^{-\nu^{2} n/2} + 1/n^2.
\eeqn
For any $\g > 0$ (even if $\g \notin \Gamma$), \algref{simple} finds a $\g_n^*$ such that
\beq
Q_n(\a_n^*, \rns_n^*) &=& \min_{\g' \in \Gamma} Q_n(\a_n(\g'),\rns_n(\g'))
\,\leq\, Q_n(\a_n(\g), \rns_n(\g)).
\eeq
The bound in (\ref{eq:termII_bound_fixed_g}) thus implies that $\forall n \geq N$,
\beqn
\label{eq:termII_bound}
\P[Q_n(\a_n^*, \rns_n^*) > R^* + \eps] 
\leq  C n e^{-cn\eps^2} + n e^{-\nu^{2} n/2} + 1/n^2.
\eeqn
By the Borel-Cantelli lemma, this implies that almost surely,
$
\limsup_{n\rightarrow \infty} T\subII(n) = \limsup_{n\rightarrow \infty} (Q_n(\a_n^*, \rns_n^*) - R^*) \leq 0.
$
Since $\forall n, T\subI(n) + T\subII(n) \geq 0$, this implies $\lim_{n\to\infty} T\subII(n) = 0$ almost surely, thus completing the proof.

It remains to prove (\ref{eq:termII_bound_fixed_g}),
the 0-1 loss
analog of 
which
was proved in \citeauthor[Eq. (3.4)]{hksw21}.
That argument does not hold for general losses, and we present the
novel argument below.
We bound the left-hand side of (\ref{eq:termII_bound_fixed_g}) using a function $n\mapsto \Ng(n)
\in
o(n)$, used to upper bound the compression size;
the latter is furnished by
\lemref{sublinear_comp}.
\beqn
\label{eq:split_miss}
&& \P[ Q_n(\a_n(\g),\rns_n(\g)) > R^* + \eps ]
\\\nonumber
& \leq &
\P
\left[ Q_n(\a_n(\g),\rns_n(\g)) > R^* + \eps 
\;\wedge\; \missmass_\g(S_n) \leq \frac{\eps}{18L}
\;\wedge\; \rns_n(\g) \leq \Ng(n)
\right]
\\
\nonumber
&& \,
+\, \P[ \missmass_\g(S_n) > \frac{\eps}{18L} ]
+ \P[ \rns_n(\g) > \Ng(n)] 
=:
P\subI+P\subII+P\subIII.
\eeqn
We estimate $P\subI$ via a union bound:
\begin{align*}
&\P
\left[ Q_n(\a_n(\g),\rns_n(\g)) > R^* + \eps 
\;\wedge\; \missmass_\g(S_n) \leq \frac{\eps}{18L}
\;\wedge\; \rns_n(\g) \leq \Ng(n)
\right]
\\
&\leq 
\sum_{d=1}^{\Ng(n)} \P
\Big[
Q_n(\a_n(\g),\rns_n(\g)) > R^* + \eps
\;\wedge\; \missmass_\g(S_n) \leq \frac{\eps}{18L}
\;\wedge\; \rns_n(\g)=d
\Big].
\end{align*}
Thus, it suffices to bound each term in the 
summation separately.
Applying \lemref{boundpd} and summing,
we have, for $n$ sufficiently large that $\Ng(n) \leq n$,
\beqn
\label{eq:first_term}
P\subI
\le
\sum_{d=1}^{\Ng(n)} p_d \;\leq\; \Ng(n)  (e^{-\frac{n\eps^2}{32}} + e^{-\frac{1}{2}n\nu^{2}}) \leq n (e^{-\frac{n\eps^2}{32}} + e^{-\frac{1}{2}n\nu^{2}}).
\eeqn
Now, using the function $\Ng$, we note that $P\subIII\le1/n^2$ thanks to \lemref{sublinear_comp}. A bound on $P\subII$, which bounds the $\gamma$-missing-mass $\missmass_\g(S_n)$, is furnished by \lemref{missing_mass}.
Taking $n$ sufficiently large so that $u_\g(n)$, as furnished by Lemma~\ref{lem:missing_mass}, satisfies $u_\g(n) \leq \eps/36L$, and invoking Lemma~\ref{lem:missing_mass} with $t=\eps/36L$, we have 
$
P\subII = \P[\missmass_\g(S_n) > \eps/18L] \leq e^{-\frac{n \eps^2}{1296L^{2}}}.
$
Plugging 
this,
\eqref{first_term},
and $P\subIII \leq 1/n^2$ into \eqref{split_miss} yields (\ref{eq:termII_bound_fixed_g}), which completes the proof.
\enpf
\section{Extensions}
\subsection{Countable $\Y$ with finite diameter:
\countable}
In this section we describe an extension of \finmednet, denoted \countable, which
is strongly 
Bayes-consistent
for countably
infinite $\Y$,
but still with a finite diameter.
A modification of \finmednet\ is required because
the latter uses a compression scheme with $b=\log_2|\Y|$ bits of side information.

Our variant is formally presented in \algref{simple2} and
operates as follows.
We fix in advance a specific sequence $\bits_n \in \mathbb{N}$, to be specified in the sequel.
The family of $\g$-nets over the input sample
is generated exactly as in \finmednet.
For each $\g$-net, \countable\ 
(presented in
Section~\ref{ap:alg-ctbl})
computes the \emph{truncated empirical medoid labels} in the Voronoi cells defined by the partition $\Vor(\bm X(\g)) = \{V_1(\bm X(\g)),\ldots,V_{\rns}(\bm X(\g))\}$.
These labels are denoted by $\tbY(\g)\in \first{\Y}{\bits_n}^{\rns}$.
Formally, for $i \in [\rns]$,
\beqn
\label{eq:maj2}
\tY_i
:= \argmin_{y \in \first{\Y}{\bits_n}
} {\sum_{j\in[n]: X_{j}\in V_{i}} \ell(y,Y_{j}) },
\eeqn
where
$\first{\Y}{b}
:=\set{y\in\Y':
\omega(y)\le 2^b
}
$
for
$b\in\N$,
for some
fixed canonical
injection $\omega:\Y\to\N$.
In words, $\first{\Y}{\bits_n}$
is a sample-dependent,
cardinality-based
truncation
of the label space.
Other than the truncation,
\countable\ 
behaves exactly as \finmednet.

\begin{theorem}[proof in Section~\ref{sec:pf-consist2}]
\label{thm:comp-consist2}
\global\def\compconsisttwothm{
Let $(\X,\rho)$ be a separable metric space, and $\Y$ a \emph{countable} label space with a loss function $\ell
\le
L
< \infty$.
Then there is
a choice of $\delta_{n\in\N}$
and truncation schedule $b_{n\in\N}$
such that
the sequence of
hypotheses
$h_{n}$ 
computed by
$\countable(S_n,\delta_n,\bits_n)$ is strongly \Bcstn:
$
\P[\lim_{n \rightarrow \infty} \risk(h_{n}) = R^*] = 1. 
$
}
\compconsisttwothm
\end{theorem}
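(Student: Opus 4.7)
The plan is to mirror the proof of \thmref{comp-consist}, with two modifications: (i) the medoid search is restricted to the finite set $\first{\Y}{\bits_n}$ of size at most $2^{\bits_n}$, and (ii) the side-information budget fed into the $Q$-bound is $\bits_n$ rather than the fixed $\log_2|\Y|$. I require $\bits_n\to\infty$ (so the restriction is asymptotically vacuous) and $\bits_n\in o(n)$ (so Property \Qthreeb\ of \lemref{Qbound} still applies to the new side-information argument); any sub-logarithmic schedule such as $\bits_n=\lfloor\log\log n\rfloor$ will work, provided it satisfies the calibration demanded below.

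First, I would verify the semi-stable compression property of \countable\ by an essentially verbatim repeat of \lemref{semistablecompressionscheme}: the $\g$-net construction depends only on the $X$-marginal of the sample, and the truncated medoid in each Voronoi cell is computed over a pool $\first{\Y}{\bits_n}$ that is determined by $n$ alone, so the stability argument for $\compfunccs$ is unchanged. Thus \countable\ is a semi-stable compression scheme with compression set of size $\rns_n(\g)$ and $\bits_n$ bits of side information. Property \bQ{1} of \lemref{Qbound} then yields, for any $\delta_n$ with $\sum_n\delta_n<\infty$, that $T\subI(n):=\risk(h_{\tS_n(\g^*_n)})-Q_n(\a_n^*,\rns_n^*,\bits_n,\delta_n,L)$ satisfies $\limsup_{n\to\infty} T\subI(n)\le 0$ almost surely via Borel--Cantelli, precisely as in \thmref{comp-consist}.

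Second, the substantive new ingredient is a truncated analog of \lemref{richness}. For each $\nu>0$, I would first choose $B_0(\nu)\in\N$ so large that there exists a measurable predictor $g_\nu:\X\to\first{\Y}{B_0(\nu)}$ with $\risk(g_\nu)\le R^*+L\nu$; this is achievable because $\first{\Y}{b}\uparrow\Y$ as $b\to\infty$, the loss is uniformly bounded by $L$, and any near-Bayes measurable function $g:\X\to\Y$ can be ``clipped'' off the set $\set{x:g(x)\notin\first{\Y}{B_0}}$ to a fixed $y_0\in\first{\Y}{B_0}$ at cost at most $L\cdot\mu(\set{x:g(x)\notin\first{\Y}{B_0}})$, which tends to zero as $B_0\to\infty$. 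For $n$ large enough that $\bits_n\ge B_0(\nu)$, I would repeat the proof of \lemref{richness} using $g_\nu$ as the reference rather than the unrestricted Bayes-optimal $f^*$. The only change is that the cell-wise optimal label now ranges over $\first{\Y}{\bits_n}\supseteq\first{\Y}{B_0(\nu)}\ni g_\nu(x)$, so the cell-wise comparison bounds the restricted true-medoid predictor by $R^*+O(L\nu)$ with an extra additive $O(L\nu)$ slack arising from the clipping step.

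Third, with the truncated richness lemma in place, the argument of \thmref{comp-consist} for $\limsup_{n\to\infty} T\subII(n)\le 0$ transfers essentially unchanged: \lemref{boundpd} continues to hold because its proof only uses bounded-loss concentration together with richness, and both ingredients survive the truncation; applying \lemref{sublinear_comp} and \lemref{missing_mass} as in the decomposition leading to \eqref{eq:termII_bound_fixed_g} and summing the tail probabilities delivers an almost-sure bound on $Q_n(\a_n^*,\rns_n^*,\bits_n,\delta_n,L)-R^*$, which finishes the proof via $\risk(h_n)-R^*=T\subI(n)+T\subII(n)$. The main obstacle I anticipate is the truncated richness step: one must (a) verify that the cell-wise medoid comparison is robust to the restriction to $\first{\Y}{\bits_n}$ and to the clipping error of $g_\nu$, and (b) arrange the order of quantifiers so that for every fixed $\nu$ the growing truncation $\bits_n$ eventually overtakes $B_0(\nu)$, which explains why $\bits_n\to\infty$ is both necessary and sufficient on the richness side, while $\bits_n\in o(n)$ is dictated by the generalization side via \Qthreeb.
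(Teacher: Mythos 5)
Your proposal is correct and shares the overall skeleton of the paper's argument (semi-stability of the scheme with $\bits_n$ bits of side information, the $T\subI+T\subII$ decomposition, Borel--Cantelli, and the quantifier ordering ``fix $\nu$, then take $n$ large enough that $\bits_n$ exceeds a $\nu$-dependent threshold''), but it locates the truncation argument in a different place. The paper's proof of this theorem is essentially a verification that the lemmas behind \thmref{comp-consist} still apply: \lemref{richness} is used \emph{unmodified}, on the true medoid predictor ranging over all of $\Y$, because the proof of \lemref{boundpd} already absorbs the truncation at the \emph{empirical} level --- it compares the empirical medoid over $\first{\Y}{\bits_n}$ to the empirical medoid over all of $\Y$ and to the true medoid, bounds the gap in expectation by $\nu L$ using $\P[Y\in\first{\Y}{\bits_n}]\ge1-\nu$ for $n\ge N_1(\nu)$, and pays for it with an extra Hoeffding term $(p_d)_2\le e^{-n\nu^2/2}$. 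You instead push the truncation to the \emph{population} level via a truncated richness lemma with a clipped reference predictor $g_\nu$; this is sound (the key cell-wise inequality $\tilde{\zeta}_{h^{*}_{\SP,W}(x)}(x)\le\tilde{\zeta}_{g_\nu(x)}(x)$ survives because $g_\nu(x)\in\first{\Y}{B_0(\nu)}\subseteq\first{\Y}{\bits_n}$, and the clipping costs only an extra $L\nu$), and it buys a small simplification downstream: since the algorithm's empirical medoid and your truncated true medoid range over the same finite set, $\erisk(h_{\tS_n(\g)};S_n)$ is bounded by the truncated true medoid's empirical risk \emph{deterministically}, so the $(p_d)_2$ term and its Hoeffding step become unnecessary. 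The trade-off is that you must re-derive the richness lemma in truncated form, whereas the paper keeps it untouched at the cost of one extra concentration step. Both routes require exactly the conditions you identify on the schedule ($\bits_n\to\infty$ monotonically for the truncation to become vacuous, $\bits_n\in o(n)$ for \Qthreeb), so I see no gap.
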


\subsection{Countable metric space $(\Y, \ell)$ with unbounded diameter:
\ctblunbdd
}
\label{sec:unboundeddiam}

In this section,
we extend \countable\
to the case where $(\Y,\ell)$
is a countable metric space.
That is, 
the loss $\ell$
is now assumed to be a metric,
but the boundedness condition
$\nrm{\Y}<\infty$
is relaxed to
boundlessness-in-expectation (BIE):  $\E_{(X,Y)\sim\bmu}\ell(y_0,Y)<\infty$ for some $y_0\in\Y$.
Boundedness was used in the analysis of
\countable\ in order to invoke
a distribution-free concentration inequality
(Hoeffding's). The present extension,
denoted \ctblunbdd, invokes \countable\
as a subroutine with an appropriately
diameter-truncated label space.
The latter is defined as follows.
Fix a $y_0\in\Y$ that is a witness
of the BIE property\footnote{
\lemref{y0} shows that
if BIE holds then {\em every}
$y'\in\Y$ is such a witness,
and in particular, 
we may always choose
$y_0$ as the ``first''
element under the canonical ordering.
}. For $y\in\Y$ and $L>0$, define 
$\trunc{\Y}_L:=
B(y_0,L)
$
and
the {\em diameter-truncation} operation
\beqn
\label{eq:trunc}
y\wedge L :=
\argmin_{
\hat{y}\in\trunc{\Y}_L
}
 \ell(y,\hat{y})
.
\eeqn
In words, $y\wedge L$ is the closest $\hat{y}$ to $y$ in the $L$-ball about $y_0$.

\ctblunbdd\
is formally presented in \algref{simple3}
and operates as follows.
The cardinality- and diameter-truncation
schedules $b_{n\in\N}$ and $L_{n\in\N}$
are fixed in advance;
the former as any $\bits_n\in o(n)$
and the latter specified in the sequel.
Next, the labels $Y_i$ of the input sample are truncated to
$\trunc{Y_i} := Y_{i} \wedge L_n;$
this is a substantive difference from the cardinality-based
truncation in \countable, which does not modify the sample labels.

\RestyleAlgo{ruled}
\SetKwInOut{Assumptions}{Assumptions}
\SetKwInOut{Input}{Input}\SetKwInOut{Output}{Output}
\begin{algorithm}
\caption{\ctblunbdd}
\label{alg:simple3} 
\BlankLine
\Assumptions{$(\X,\rho)$ is a separable metric space, $(\Y, \ell)$ a BIE countable metric space
}
\Input{Sample $S_n = (X_i, Y_i)_{i\in[n]}$, $\delta_n \in (0,1)$,
$\bits_n\in\N$,
$L_n>0$
}
\Output{predictor $h:\X\to\Y$}
\BlankLine
Set $\trunc{S_n} := \{(X_i,\trunc{Y_i}): i \in [n] \}$,
where $\trunc{Y_i} := Y_{i} \wedge L_n$
\;

Set $h_{n} := \countable(\trunc{S_n}, \delta_n,\bits_n)$ in 
\emph{truncated} 
label space $
\trunc{\Y}_{L_n}
$
\\
\phantom{
Set $h_{n} := \countable
~~~~~~~
$}
(i.e.,
$\Y$ in \eqref{maj} 
is replaced with
$
\trunc{\Y}_{L_n}
$)
\;

\Return $h=h_{n}$
\end{algorithm}

\begin{theorem}[proof in Section~\ref{sec:pf-consist3}]
\label{thm:comp-consist3}
\global\def\compconsistthreethm{
Let $(\X,\rho)$ 
and
$(\Y, \ell)$
be
metric spaces, separable and countable, respectively,
equipped with a product distribution $\bmu$
such that
BIE holds for
$\Y$.
Then there is
a choice of $\delta_{n\in\N}$
and truncation schedules $b_{n\in\N}$, $L_{n\in\N}$
such that
the sequence of
hypotheses
$h_{n}$ 
computed by
$\ctblunbdd(S_n,\delta_n,\bits_n,L_n)$ is strongly \Bcstn:
$\displaystyle
\P[\lim_{n \rightarrow \infty} \risk(h_{n}) = R^*] = 1. 
$
}
\compconsistthreethm
\end{theorem}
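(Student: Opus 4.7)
The strategy is to reduce to \thmref{comp-consist2}, which requires bounded label diameter, by running \countable\ on a sample whose labels have been diameter-truncated to a ball of radius $L_n$, and then letting $L_n \to \infty$; the BIE hypothesis controls the resulting truncation error. Let $\trunc{\bmu}_n$ denote the joint distribution of $(X, Y \wedge L_n)$ under $(X,Y) \sim \bmu$, let $R_n^\star := \inf_f \risk_{\trunc{\bmu}_n}(f)$ denote its Bayes risk, and decompose
$$\risk(h_n) - R^* \;=\; \underbrace{[\risk(h_n) - \risk_{\trunc{\bmu}_n}(h_n)]}_{A_n} \;+\; \underbrace{[\risk_{\trunc{\bmu}_n}(h_n) - R_n^\star]}_{B_n} \;+\; \underbrace{[R_n^\star - R^*]}_{C_n};$$
since $\risk(h_n) \geq R^*$, it suffices to show $|A_n| + B_n + |C_n| \to 0$ almost surely.

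The plan is to handle $A_n$ and $C_n$ deterministically using BIE. Because $y_0 \in \trunc{\Y}_{L_n}$ is always a feasible candidate in the argmin defining $y \wedge L_n$, one obtains $\ell(y, y \wedge L_n) \leq \ell(y, y_0)\,\mathbbm{1}\{\ell(y, y_0) \geq L_n\}$. Combining this with the reverse triangle inequality yields the $f$-uniform bound
$$\sup_{f \colon \X \to \Y} \bigl|\risk_{\trunc{\bmu}_n}(f) - \risk(f)\bigr| \;\leq\; \E\bigl[\ell(y_0, Y)\, \mathbbm{1}\{\ell(y_0, Y) \geq L_n\}\bigr],$$
whose right-hand side vanishes as $L_n \to \infty$ by BIE and dominated convergence. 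This immediately bounds $|A_n|$ and, on taking infima over $f$, shows $|C_n| \to 0$ as well.

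The core is showing $B_n \to 0$ almost surely by invoking \thmref{comp-consist2} on the truncated label space $\trunc{\Y}_{L_n}$, which is countable and has diameter at most $2L_n$. The main obstacle is that the generalization bound $Q$ of \lemref{Qbound}, and the concentration arguments behind \thmref{comp-consist2}, carry explicit factors of the diameter $L$ in the terms $L\sqrt{k/n}$, $L\sqrt{b/n}$, and $L\sqrt{\ln(1/\delta)/n}$. The proposal is to schedule $L_n \to \infty$ sub-polynomially (for concreteness, $L_n = \log \log n$), choose $b_n$ with $b_n \to \infty$ but $b_n \in o(n / L_n^2)$, and take $\delta_n$ summable with $\ln(1/\delta_n) \in o(n / L_n^2)$; the compression size $k_n \in o(n)$ furnished by \lemref{sublinear_comp} then automatically yields $L_n \sqrt{k_n / n} \to 0$. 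Re-tracing the proof of \thmref{comp-consist2} with these $n$-dependent parameters makes every term in the generalization bound for the truncated problem vanish, and Borel-Cantelli applied to the summable failure probabilities upgrades the resulting high-probability statement to almost-sure convergence of $B_n$.

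The principal difficulty is the joint calibration of the schedules $(L_n, b_n, \delta_n)$: $L_n$ must grow slowly enough for the diameter-dependent terms in $Q$ to decay, yet fast enough to drive the BIE-tail $\E[\ell(y_0, Y)\,\mathbbm{1}\{\ell(y_0, Y) \geq L_n\}]$ to zero. BIE is essential throughout --- directly in the $A_n$ and $C_n$ bounds, and indirectly because it ensures $R^* \leq \E \ell(y_0, Y) < \infty$ (via the constant predictor $y_0$), so that the truncated Bayes risk $R_n^\star$ is well-defined and finite for every $n$.
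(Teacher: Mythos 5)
Your proposal is correct and follows the same overall architecture as the paper's proof: truncate the labels to $\trunc{\Y}_{L_n}$, run the bounded-diameter algorithm on the truncated sample, calibrate $(L_n,\bits_n,\delta_n)$ jointly so that the diameter-dependent terms of $Q_n$ still vanish, and use BIE to control the truncation error. Where you genuinely differ is in the organization and the truncation-error argument. The paper uses a two-term decomposition against $\trunc{R_n^*}:=\inf_{f:\X\to\trunc{\Y}_{L_n}}\risk(f)$ (original distribution, truncated range) and disposes of $\trunc{R_n^*}-R^*$ via a dominated-convergence argument on the conditional risks (\thmref{trunc-risk}); it does not explicitly isolate your $A_n$, i.e., the gap between $\risk_{\bmu}(h_n)$ and the risk under the truncated distribution $\trunc{\bmu}_n$ to which the compression bound actually applies. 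Your $f$-uniform tail bound $\sup_f|\risk_{\trunc{\bmu}_n}(f)-\risk(f)|\le\E[\ell(y_0,Y)\mathbbm{1}\{\ell(y_0,Y)\ge L_n\}]$ handles $A_n$ and $C_n$ in one stroke and is more elementary and more explicit than the paper's route. Two details to tighten: (i) your $R_n^\star$ is the infimum over all $f:\X\to\Y$, whereas re-tracing \thmref{comp-consist2} on the truncated problem gives convergence to the infimum over $f:\X\to\trunc{\Y}_{L_n}$; the (nonnegative) gap between these two Bayes risks also tends to zero by the same tail bound, but that half-line is needed before you can conclude $B_n\to0$. (ii) The claim that $k_n\in o(n)$ ``automatically'' yields $L_n\sqrt{k_n/n}\to0$ is false for an arbitrary $o(n)$ sequence once $L_n\to\infty$; as in the paper, one must impose $L_n^2\fns_n,\ L_n^2\bits_n,\ L_n^2\ln(4e^2/\delta_n)\in o(n)$, so $L_n=\log\log n$ works only if the compression-size bound $\Ng(n)$ decays fast enough relative to it (a calibration issue the paper itself treats only by fiat).
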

The only remaining extension to render the proof of \thmref{main}
complete is from countable to {\em separable} $(\Y,\ell)$;
this straightforward step is carried out in Section~\ref{sec:separable}.
%%"COLT/paper_body"|AK_TEX_SCRIPT_CODE_DON'T_ALTER_OR_DUPLICATE>

\appendix

%%<AK_TEX_SCRIPT_CODE_DON'T_ALTER_OR_DUPLICATE|"COLT/paper_app"
\section{Deferred results}
\label{ap:deferred}

\subsection{Separable metric space $(\Y, \ell)$ with unbounded diameter}
\label{sec:separable}
The extension 
from countable to separable $(\Y,\ell)$
---
implemented by the final, subscript-free version of \medoidnet
---
is 
quite straightforward.
The approximation arguments we invoke
are standard, and hence we only give a sketch
of the proof.
In Section~\ref{sec:discr},
we give a countable discretization
$\Y_\eps\subseteq\Y$,
with a corresponding
discretized version
$\bmu_\eps$
of
$\bmu$
and
the induced Bayes-optimal risk $R^*_\eps$
on the discretized space.
\thmref{discr}
guarantees
that $R^*_\eps\to R^*$
as $\eps\to0$.

As discussed in the Introduction,
we
assume
access to
an oracle
that takes $\eps>0$ as input
and returns a (necessarily at most countable, due to separability)
$\eps$-net $\Y_\eps$ of $\Y$.
Given this oracle,
\medoidnet\ operates as follows.
First, a sequence $\eps_n\downarrow0$ is fixed.
For each $n\in\N$,
the sample $S_n$ is drawn
and the $\eps$-net $\Y_n:=\Y_{\eps_n}$
is constructed.
Next, each label $Y_i$ in $S_n$
is projected onto $\Y_n$
--- i.e., replaced by $Y_i'\in\Y_n$
that is closest to $Y_i$.
The resulting modified sample $S_n'$
is then fed into 
$\ctblunbdd$
with the additional arguments
$\delta_n,\bits_n,L_n$
as in \thmref{comp-consist3}.
The latter shows that
almost surely, the 
the constructed predictor's
risk 
minus
$R^*_{\eps_n}$
decays to zero.\footnote{
Formally,
\thmref{comp-consist3}
proves convergence on
a fixed label space $\Y$,
but a standard diagonal argument
lets us
apply it to the sequence $\Y_n$
and conclude the aforementioned claim.
}
This, coupled with
\thmref{discr},
implies
\thmref{main}:
\begin{theorem}
\label{thm:comp-consist4}
Let $(\X,\rho)$ and be $(\Y, \ell)$ separable metric spaces
equipped with a product distribution $\bmu$
such that
BIE holds for
$\Y$.
For any $\eps_n\downarrow0$,
let $\Y_n$ be a sequence of
$\eps_n$-nets as above.
Discretize each sample $S_n\sim\bmu^n$
to 
$S_n'$
with labels in $\Y_{n}$, as above.
Then there is
a choice of $\delta_{n\in\N}$
and truncation schedules $b_{n\in\N}$, $L_{n\in\N}$
such that
the sequence of
hypotheses
$h_{n}$ 
computed by
$\ctblunbdd(S_n',\delta_n,\bits_n,L_n)$ is strongly \Bcstn:
$
\P[\lim_{n \rightarrow \infty} \risk(h_{n}) = R^*] = 1. 
$
\end{theorem}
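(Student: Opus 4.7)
The plan is to decompose the excess risk as
\beq
\risk(h_n) - R^*
\;=\;
\bigl[\risk(h_n) - R^*_{\eps_n}\bigr]
\;+\;
\bigl[R^*_{\eps_n} - R^*\bigr]
\;=:\;
T\subI(n) + T\subII(n),
\eeq
where $R^*_{\eps_n}$ denotes the Bayes-optimal risk for the discretized problem on $\X\times\Y_n$ with the pushforward distribution $\bmu_{\eps_n}$ (the image of $\bmu$ under $(X,Y)\mapsto(X, \Pi_{\Y_n}(Y))$, where $\Pi_{\Y_n}$ is nearest-point projection onto the $\eps_n$-net $\Y_n$, ties broken lexicographically). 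Each term will be shown to tend to zero; $T\subII(n)$ deterministically and $T\subI(n)$ almost surely.

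Term $T\subII(n)$ is purely a property of $\bmu$ and the nets $\Y_n$, and is handled by the approximation result of Section~\ref{sec:discr} (invoked here as the promised ``Theorem in Section~\ref{sec:discr}''), which asserts $R^*_{\eps_n}\to R^*$ as $\eps_n\downarrow 0$. This is a standard density-of-$\Y_n$-in-$\Y$ plus dominated-convergence argument, using BIE to supply the dominating integrand $\ell(y_0,Y)$ via triangle inequality; no probabilistic content is involved.

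For $T\subI(n)$, the natural move is to invoke \thmref{comp-consist3} for the discretized problem: conditional on the label space being fixed at some $\Y_m$, the sample $S_n'$ is iid from $\bmu_{\eps_m}$, and \ctblunbdd\ (called with parameters $\delta_n,b_n,L_n$) is strongly Bayes-consistent for that fixed countable metric space. Here the label space $\Y_n$ changes with $n$, which is what forces the ``standard diagonal argument'' alluded to in the excerpt. Specifically, \thmref{comp-consist3} (and its proof via the decomposition used in \thmref{comp-consist}) actually yields a summable tail bound
\beq
\P\bigl[\risk(h_n) - R^*_{\eps_n} > \eta\bigr] \;\le\; g(n,\eta,\delta_n,b_n,L_n),
\eeq
with $\sum_n g(n,\eta,\delta_n,b_n,L_n)<\infty$ for every fixed $\eta>0$, provided the constants in the bound can be made uniform over the sequence $(\Y_n,\bmu_{\eps_n})$. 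Uniformity follows from two observations: (a) the BIE property transfers to every $\bmu_{\eps_n}$ with a single witness $y_0\in\Y_1\subseteq\cdots$, since $\E\,\ell(y_0,\Pi_{\Y_n}(Y))\le \E\,\ell(y_0,Y)+\eps_n$ by the triangle inequality; and (b) the diameter-truncation and cardinality-truncation schedules $L_n,b_n$ depend only on $n$ and on this BIE bound, so they remain valid across the moving label spaces. Borel--Cantelli then gives $\limsup_n T\subI(n)\le 0$ a.s.; combined with $T\subII(n)\to 0$ and $T\subI(n)+T\subII(n)\ge 0$ (which uses $\risk(h_n)\ge R^*$) we obtain $\risk(h_n)\to R^*$ almost surely.

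The main obstacle I anticipate is the uniformity claim underlying the diagonal argument: one must check that every ingredient used in the proof of \thmref{comp-consist3}---the generalization-bound function $Q$ from \lemref{Qbound}, the richness lemma \lemref{richness} applied in the Voronoi cells, and the $\gamma$-net/missing-mass machinery on $\X$---gives bounds that are either independent of $\Y_n$ or depend on $\Y_n$ only through quantities (such as the BIE constant and cell-diameter-to-medoid-risk conversion) that are stable under projection onto a refining sequence of $\eps_n$-nets. Once this is done, the conclusion of \thmref{comp-consist3} applied ``along the diagonal'' is immediate, and the theorem follows by combining with $R^*_{\eps_n}\to R^*$.
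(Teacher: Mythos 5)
Your proposal follows essentially the same route as the paper: decompose the excess risk into $[\risk(h_n)-R^*_{\eps_n}]+[R^*_{\eps_n}-R^*]$, dispatch the second term with the discretization result of Section~\ref{sec:discr} (\thmref{discr}), and handle the first by applying \thmref{comp-consist3} along the sequence of label spaces $\Y_n$ via a diagonal argument with summable tail bounds and Borel--Cantelli. In fact you are more explicit than the paper about the uniformity needed for the diagonal step (the paper relegates this to a footnote), and the only minor point you leave implicit is that the risk of $h_n$ under $\bmu$ and under the pushforward $\bmu_{\eps_n}$ differ by at most $\eps_n$ via the triangle inequality, which is harmless.
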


\clearpage 
\subsection{The \countable\ algorithm}
\label{ap:alg-ctbl}

\RestyleAlgo{ruled}
\SetKwInOut{Assumptions}{Assumptions}
\SetKwInOut{Input}{Input}\SetKwInOut{Output}{Output}
\begin{algorithm}
\caption{\countable}
\label{alg:simple2} 
\BlankLine
\Assumptions{Let $(\X,\rho)$ be a separable metric space, and $\Y$ a \emph{countable} label space with a loss function $\ell$ such that $L := \nrm{\Y} < \infty$.}
\Input{Sample $S_n = (X_i, Y_i)_{i\in[n]}$, confidence $\delta_n \in (0,1)$, side-information size $\bits_n
\in\N
$}
\Output{predictor $h:\X\to\Y$}
\BlankLine
Let $\Gamma\gets(\set{\rho(X_i,X_j) : i,j \in [n]} \cup \{\infty\}) \setminus \{0\}$\;

\For{$\g \in \Gamma$}{

    Let $\bm X(\g)$ be a $\g$-net of $\{X_1,\ldots,X_n\}$\;
    
    Let $\rns_n(\g) \gets |\bm X(\g)|$\;

    For each $i \in [\rns_n(\g)]$, let $\tY_i(\g)$ be the \emph{truncated empirical medoid label} of $V_i(\bm X(\g))$ as in \eqref{maj2} \;
    
    Set $\tS_n(\g) \gets (\bm X(\g), \tbY(\g))$\;
    
    Set $h_{\tS_n(\g)} := x \mapsto Y_{\nn}(x,\tS_n(\g))$.
    
    Set $\a_n(\g) \gets \erisk(h_{\tS_n(\g)}; S_n)$\;
}
Find $\g^*_n \in  \argmin_{\g\in\Gamma} Q_{n}(\a_n(\g), \rns_n(\g), \bits_n, \delta, L)$, where $Q_n$ is defined in \eqref{Qbound}\;

Set $\tS_n \gets \tS_n({\g^*_n})$\;

\Return $h=h_{\tS_n}$\;
\end{algorithm}

\section{Auxiliary Proofs}
\label{ap:proofs}

\subsection{Proof of \lemref{semistablecompressionscheme}} \label{ap:semistablecompressionscheme}
\begin{lemma*}
\semistablecompressionscheme
\end{lemma*}
\bepf
Fix a $\g \in \Gamma$. Define $b := \log_{2}|\Y|$
and the map $\bitenc:\Y\to\set{0,1}^\bits$
as one that converts the lexicographic
index of $y\in\Y$ to its unique $\bits$-bit
binary representation. Our compression function: 
\beq
\left(\X \times \Y\right)^+ 
\to  
\left(\X \times \Y\right)^+
\times  \{0, 1\}^{\SIsize}
.
\eeq
Recall our notation $\bi_\gamma$ as the $\gamma$-net indices calculated and selected for a sample $S_n \sim (\X \times \Y)^n$, and $\bY'(\bi_\gamma)$ the empirical medoid labels of a $\gamma$-net $\boldsymbol{X}(\bi_\gamma)$, as defined in \eqref{maj}.
Let
$\compfunc$ 
be such that
$\compfunc(S_n) = (\compfunccs(S_n), \compfuncsi(S_n))$,
where
\begin{align*}
    \compfunccs(S_n) &= S_n(\bi_{\gamma}) \in \left(\X \times \Y\right)^{|\bi_{\gamma^*}|} \\
    \compfuncsi(S_n) &= \left\{ \bitenc(Y') : Y' \in \bY'(\bi_{\gamma}) \right\}
    \in \left(\{0,1\}^{\SIsize}\right)^{|\bi_{\gamma^*}|}
    .
\end{align*}
In words,
$\compfunccs$ compresses the sample $S_n$ 
to a specific $\gamma$-net keeping original labels,
while $\compfuncsi$  calculates the respective empirical limited medoid labels of the resulting sub-sample.
As for the reconstruction function
\beq
  \reconfunc: 
\left(\X \times \Y\right)^+
  \times  \{0, 1\}^{\SIsize}
  \to 
\Y^\X,
\eeq
it is defined as
\beq
    \reconfunc\left(S_{n}(\bi_\gamma), \bY'(\bi_\gamma)\right) = h_{S_{n}\left(\bi_\gamma, \bY'(\bi_\gamma)\right)};
\eeq
in words,
we take the $1$-nearest-neighbor rule predictor of the sub-sample $S_n(\bi_\gamma)$ labeled by $\bY'(\bi_\gamma)$.

It remains to argue that our compression scheme is semi-stable.
Indeed,
since the scale $\g$ is fixed
and
the net is constructed in a deterministic fashion,
for any $S'$ 
satisfying
$\compfunccs(S_n) \subseteq S' \subseteq S_n$, 
the $\g$-net 
computed by the algorithm will be the same.
Therefore $\compfunccs(S')=\compfunccs(S_n)$ and the definition follows.
\enpf

\subsection{Proof of \lemref{Qbound}} \label{ap:Qbound}
\label{sec:pfQbound}
\begin{lemma*}
\def\loclabel{\nonumber}
\Qboundlem
\end{lemma*}

\bepf
Let $\X$ be an instance space, and $\Y$ a label space with a loss function $\ell$ such that $L:=\nrm{\Y} < \infty$.
Starting from \bQ1, let $(\tS_{n}, h_{\tS_n})$ be an $(\a, \fns, \bits)$-semi-stable-compression of $S_n$. Thus, 
$Q$ satisfies property $\bQ1$
by \thmref{agnosemitstablewithinterpol}.

Furthermore, property \bQ2 (monotonicity in $\a$ and in $\fns$) can also be easily verified from the  
definition in \eqref{Qbound}.

To 
establish
\Qthreeb, 
an inspection of
$Q_n(\a,\fns_n, \bits,\delta_n, L) - \a$
shows that since
$\fns_n \in o(n)$, 
only the terms containing
$ \frac{\ln(\frac{4e^2}{\delta_n})}{n} $ 
are not {\em obviously} decaying to zero.
To ensure the latter, any choice of
$\delta_n$
with
$-\log\delta_n \in o(n)$ suffices.
The additional constraint
$\delta_n$
must satisfy is
$\sum_{n=1}^\infty \delta_n < \infty$;
one such choice 
$\delta_n = e^{-\sqrt{n}}$.
\enpf

\subsection{Proof of \lemref{richness}}
\label{app:richness}
\begin{lemma*}
\richnesslem
\end{lemma*}

\bepf
We begin
similarly to 
the proof of
\citet[Lemma 3.6]{hksw21}.
Let $\eta_y: \X \to [0,1]$ be the conditional probability function for label $y\in\Y$,
\beq
\eta_y(x) = \P( Y = y \gn X = x),
\eeq
and let $ \zeta_y : \X \to [0,L] $ be the expected loss function for the label $y \in \Y$:
\beq
\zeta_y(x) = \E_\bmu\left[ \ell(y,Y) \gn X = x\right] = \int_{y'\in\mathcal{Y}}\ell(y,y')\eta_{y'}(x)\mathd\mu
\eeq
which is measurable by \citet[Corollary B.22]{MR1354146}.

Define $\teta_y: \X \to [0,1]$ as $\eta_y$'s conditional expectation function with respect to $(\SP,\mss)$: For $x$ such that 
$I_\SP(x) \cap\mss \neq \emptyset$,
\beq
\teta_y(x) = \P(Y = y \gn X \in I_\SP(x) \cap\mss)
= \frac{\int_{I_\SP(x) \cap\mss} \eta_y(z) \diff\mu(z) }{\mu(I_\SP(x) \cap\mss)}.
\eeq
Otherwise,
if
$I_\SP(x) \cap\mss = \emptyset$,
define $\teta_y(x) = \pred{y\text{ is lexicographically first}}$.
Note that $(\teta_y)_{y\in\Y}$ are piecewise constant on the cells of the restricted partition~$\SP \cap\mss$. Accordingly, define $\tilde{\zeta}_y \to [0,L]$:
\beq
\tilde{\zeta}_y(x) = \E_\bmu\left[ \ell(y,Y) \gn X \in I_\SP(x) \cap W\right]
= \sum_{y'\in\mathcal{Y}}\ell(y,y')\teta_{y'}(x).
\eeq
In the proof of
\citeauthor{hksw21},
there is no appearance
of 
$\zeta_y$
or
$\tilde\zeta_y$;
there, the much simpler
conditional {\em error probabilities}
suffice. Note likewise
that their local majority vote
classifier has been replaced
here by the local medoid.
By definition, the Bayes-optimal predictor $h^*$ and the true medoid predictor $h^*_{\SP,\mss}$ satisfy 
\beq
h^*(x) &=& \argmin_{y\in\Y} \zeta_y(x),
\\
h^*_{\SP,W}(x) &=& \argmin_{y\in\Y} \tilde{\zeta}_y(x).
\eeq
It follows that
\begin{align*}
\E_\bmu\left[ \ell(h^*_{\SP,W}(X),Y) \gn X = x \right]
&-
\E_\bmu\left[ \ell(h^*(X),Y) \gn X = x \right]\\
&= \zeta_{h^*_{\SP,W}(x)}(x) - \zeta_{h^*(x)}(x) 
\\ &=  \zeta_{h^*_{\SP,W}(x)}(x) - \tilde{\zeta}_{h^*_{\SP,W}(x)}(x) + \tilde{\zeta}_{h^*_{\SP,W}(x)}(x) - \tilde{\zeta}_{h^*(x)}(x) \\
&+ \tilde{\zeta}_{h^*(x)}(x) - \zeta_{h^*(x)}(x)
\\
&\leq \zeta_{h^*_{\SP,W}(x)}(x) - \tilde{\zeta}_{h^*_{\SP,W}(x)}(x) + \tilde{\zeta}_{h^*(x)}(x) - \zeta_{h^*(x)}(x) \\
&\leq 2\max_{y'\in\left\{h^*_{\SP,W}(x), h^*(x)\right\}} \left| \zeta_{y'}(x) - \tilde{\zeta}_{y'}(x) \right| \\
&= 2\max_{y'\in\left\{h^*_{\SP,W}(x), h^*(x)\right\}} \left| \sum_{y\in\mathcal{Y}}\ell(y',y)\left(\eta_{y}(x) - \teta_{y}(x)\right) \right| \\
&\leq 2L\left| \sum_{y\in\mathcal{Y}}\left(\eta_{y}(x) - \teta_{y}(x)\right) \right| \\
&\leq 2L\sum_{y\in\mathcal{Y}}\left|\eta_{y}(x) - \teta_{y}(x)\right|.
\end{align*}
By condition \myi\ in the lemma statement, $\mu(\X\setminus\mss)\leq \nu $.
Thus,
\beqn
\nonumber
\risk(h^*_{\SP,W}) - R^* & = &
\E_\bmu\left[ \ell(h^*_{\SP,W}(X),Y) \right] - \E_\bmu\left[ \ell(h^*(X),Y) \right]
\\
\nonumber
& = &
\int_{\X\setminus W} \left( \E_\bmu\left[ \ell(h^*_{\SP,W}(X),Y) \gn X = x \right] - \E_\bmu\left[ \ell(h^*(X),Y) \gn X = x \right] \right)\mathd\mu(x) \\
\nonumber
&& \quad + \int_{W} \left( \E_\bmu\left[ \ell(h^*_{\SP,W}(X),Y) \gn X = x \right] - \E_\bmu\left[ \ell(h^*(X),Y) \gn X = x \right] \right)\mathd\mu(x)
\\
\nonumber
& \leq & L \cdot \mu(\X\setminus W) + 2L\int_{W} \sum_{y\in\mathcal{Y}} \left|\eta_{y}(x) - \teta_{y}(x)\right| \mathd\mu(x)
\\
\nonumber
& \leq & L\nu + 2L\sum_{y\in\mathcal{Y}} \int_{W} \left|\eta_{y}(x) - \teta_{y}(x)\right| \mathd\mu(x).
\eeqn
Let $\Y_\nu \subseteq \Y$ be a finite set of labels such that $\P[Y \in \Y_\nu] \geq 1-\nu$. Then 
\beqn
\nonumber
\sum_{y\notin \Y_\nu} \int_{W} \left| \eta_{y}(x) - \tilde{\eta}_{y}(x) \right| \mathd\mu(x) & \le & \sum_{y\notin \Y_\nu} \int_{W} \eta_{y}(x) \mathd\mu(x) \\
\nonumber
& = & \sum_{y\notin \Y_\nu} \int_{W} \P_\bmu( Y = y \gn X = x) \mathd\mu(x) \\
\nonumber
& = & \sum_{y\notin \Y_\nu} \P_\bmu( Y = y , X \in W) \\
\nonumber
& \le & \sum_{y\notin \Y_\nu} \P(Y=y) \\
\nonumber
& = & \P(y \notin \Y_\nu) \le \nu .
\nonumber
\eeqn
We conclude:
\beqn
\label{eq:err_R_eta_teta}
\risk(h^*_{\SP,W}) - R^* \leq 3L\nu + 2L\sum_{y\in \Y_\nu}\int_{W} |\eta_y(x) - \teta_y(x)| \mathd\mu(x).
\eeqn
To bound the integrals in (\ref{eq:err_R_eta_teta}), we invoke a result from the proof of Lemma 3.6 in \citet{hksw21}, which showed that 
\beq
\sum_{y\in\Y_\nu}\int_{\mss} |\eta_y(x) - \teta_y(x)| \diff\mu(x)
\leq 
\sum_{y\in\Y_\nu} \frac{3\nu}{|\Y_\nu|} = 3\nu.
\eeq
Applying this bound to (\ref{eq:err_R_eta_teta}), we conclude 
$
\risk(h^*_{\SP,\mss}) - R^* \leq 9L\nu.
$
\enpf

\subsection{Proof of \lemref{boundpd}}
\label{ap:boundpd}

\begin{lemma*}
\boundpdlem
\end{lemma*}

\bepf
We begin the proof
similarly to
\citet[Lemma 3.5]{hksw21}
and then diverge in order to extend their 0-1
loss
to the general loss setting.
Let $\bm i = \bm i(\g) \in [n]^d$ be the set of indices in the net $\bm X = \bm X(\gamma)$ selected by the algorithm. %
Let ${\bm Y}^* \in \Y^d$ be the true medoid labels with respect to the restricted partition $\Vor(\bm X)\cap\UB_{2\g}(\bm X)$,
\beqn
\label{eq:Y_star}
({\bm Y}^*)_j = 
y^*(V_j \cap \UB_{2\g}(\bm X)),
\qquad j\in[d].
\eeqn
We pair $\bm X$ with the labels $\bm Y^*$ to obtain the labeled set
\beqn
\label{eq:Strue}
S_n(\bm i, *) := S_n(\bm i, {\bm Y}^*) = 
(\bm X,\bm Y^*)
\in (\X\times\Y)^d.
\eeqn
Note that conditioned on $\bm X$, $S_n(\bm i, *)$ does not depend on the rest of $S_n$. 

The induced $1$-NN predictor $h_{S_n(\bm i, *)}(x)$ can be expressed as 
$h_{\SP,\mss}^*(x) = y^*(I_\SP(x) \cap\mss)$
with $\SP = \Vor(\bm X)$
and $\mss=\UB_{2\g}(\bm X)$
(see \eqref{Strue2} for the definition of $h_{\SP,\mss}^*$). We now show that
\beqn
\label{eq:err_R}
\missmass_\g(\bm X_n) \leq \frac{\eps}{18L}
\quad \implies \quad
\risk(h_{S_n(\bm i, *)})\leq R^* + \eps/2,
\eeqn
by showing that under the assumption $\missmass_\g(\bm X_n) \leq \frac{\eps}{18L}$, the conditions of \lemref{richness} hold for $\Vor,\mss$ as defined above.
To this end, we bound the diameter of the partition $\Vor \cap\mss = \Vor \cap \UB_{2\g}(\bm X)$, and the measure of the missing mass $\mu(\X\setminus\mss) = \missmass_{2\g}(\bm X)$ under the assumption.

To bound the diameter of the partition $\Vor \cap \UB_{2\g}(\bm X)$, let $x \in V_j \cap  \UB_{2\g}(\bm X)$. Note that $V_j$ is the Voronoi cell centered at $x_{i_j} \in \bm X$. Then $\rho(x,x_{i_j}) = \min_{i \in \bm i} \rho(x,x_i)$ and, since $x \in \UB_{2\g}(\bm X)$, $\min_{i \in \bm i} \rho(x,x_i) \leq 2\gamma$. Thus,
\beq
\nrm{\SP \cap\mss} = \max_j\nrm{V_j \cap \UB_{2\g}(\bm X)}\leq 4\g.
\eeq
To bound $\missmass_{2\g}(\bm X)$ under the assumption $\missmass_\g(\bm X_n) \leq \frac{\eps}{18L}$,
observe that for all $z \in \UB_{\g}(\bm X_n)$,
there is some $i \in [n]$ such that $z \in B_\g(x_i)$.
For this $i$, there is some $j \in \bm i$ such that $x_i \in B_\g(x_j)$,
since $\bm X$ is a $\g$-net of $\bm X_n$. Therefore $z \in B_{2\g}(x_j)$.
Thus, $z \in \UB_{2\g}(\bm X)$. It follows that
$\UB_\g(\bm X_n) \subseteq \UB_{2\g}(\bm X)$, thus $\missmass_{2\g}(\bm X) \leq \missmass_\g(\bm X_n)$.
Under the assumption, we thus have $\missmass_{2\g}(\bm X) \leq \frac{\eps}{18L}$.
Hence, by the choice of $\g=\g(\eps)$ in the statement of the lemma, \lemref{richness} implies \eqref{err_R}.

To bound $Q_n(\a_n(\g), \rns_n(\g))$, we consider the relationship between the hypothetical true medoid predictor $h_{S_n(\bm i, *)}$ and the actual predictor returned by the algorithm, $h_{S_n(\bm i, \bm \tbY)}$.
Firstly, for any $\nu \in (0,1)$, there exists a finite $\Y'_\nu \subseteq \Y$ such that $\P\left[ Y \in \Y'_\nu\right] \ge 1 - \nu$. Therefore, since $\SIsize$ is non-decreasing in $n$, there exists an $N_{1}(\nu)$ large enough such that for any $n \ge N_{1}(\nu)$ $\Y'_\nu \subseteq 
\first{\Y}{\SIsize} 
$. Fix such a $\nu$ specifically such that $\nu < \frac{\eps}{176L}$. Thus we have that $\P\left[ Y \in \first{\Y}{\SIsize}\right] \ge 1 - \nu$.

For brevity we denote
\beq
Q_n(\alpha,\fns) := Q_n(\alpha,\fns,\SIsize, \delta_n, \Ln).
\eeq
Let us 
split our cases:
\beqn
\nonumber
p_d &=& \P\big[Q_n(\a_n(\g),\rns_n(\g))
>
R^* + \eps
\;\;\wedge\;\;
\missmass_\g(\bm X_n) \leq \frac{\eps}{18L}
\;\;\wedge\;\;
\rns_n(\g)=d \\
\nonumber
&& \;\;\;
\;\;\wedge\;\;
\erisk(h_{S_n(\bm i, \bm \tbY)}; S_n) \le \erisk(h_{S_n(\bm i, *)}; S_n) + 2\nu L
\big] +
\\
\nonumber
&& + \quad 
\P\big[Q_n(\a_n(\g),\rns_n(\g))
>
R^* + \eps
\;\;\wedge\;\;
\missmass_\g(\bm X_n) \leq \frac{\eps}{18L}
\;\;\wedge\;\;
\rns_n(\g)=d \\
\nonumber
&& \;\;\;
\;\;\wedge\;\;
\erisk(h_{S_n(\bm i, \bm \tbY)}; S_n) > \erisk(h_{S_n(\bm i, *)}; S_n) + 2\nu L
\big]
\\
&:=& 
(p_d)_1 + (p_d)_2.
\eeqn

Now, let $\bY^*$ be the best medoids possible for the sample, from the entire label space $\Y$:
\beq \erisk(h_{S_n(\bi, \bY^*)}; S_n) = \min_{\bY \in \Y^d} \erisk(h_{S_n(\bi, \bY)}; S_n). \eeq
This means:
\beq \erisk(h_{S_n(\bi, \bY^*)}; S_n) \le \erisk(h_{S_n(\bi, \bY')}; S_n)
\qquad\text{ and }\qquad
\erisk(h_{S_n(\bi, \bY^*)}; S_n) \le \erisk(h_{S_n(\bi, *)}; S_n)
.
\eeq
Specifically, we note that since
\beq \erisk(h_{S_n(\bi, \bY')}; S_n) = \min_{\bY \in \first{\Y}{\bits_n}^{d}} \erisk(h_{S_n(\bi, \bY)}; S_n)
,
\eeq
we have that
\begin{align*}
\erisk(h_{S_n(\bi, \bY')}; S_n) - \erisk(h_{S_n(\bi, \bY^*)}; S_n) &= \frac{1}{n}\sum_{(X,Y)\in S_n, Y\notin \first{\Y}{\bits_n}} \ell(h_{S_n(\bi, \bY')}(X),Y) - \ell(h_{S_n(\bi, \bY^*)}(X),Y) \\
&\le \frac{1}{n} L \left| \left\{ (X,Y)\in S_n, Y\notin \first{\Y}{\bits_n} \right\} \right| \\
\Rightarrow \E\left[ \erisk(h_{S_n(\bi, \bY')}; S_n) - \erisk(h_{S_n(\bi, \bY^*)}; S_n) \right] &\le \frac{1}{n}L\cdot \nu n = \nu L
.
\end{align*}
Hence, we observe:
\beq \E\left[ \erisk(h_{S_n(\bi, \bY')}; S_n) - \erisk(h_{S_n(\bi, \bY^*)}; S_n) \right] \le \nu L
\qquad
\text{ and }
\qquad
\E\left[ \erisk(h_{S_n(\bi, \bY^*)}; S_n) - \erisk(h_{S_n(\bi, *)}; S_n) \right] \le 0 , \eeq
whence
\beq \E\left[ \erisk(h_{S_n(\bi, \bY')}; S_n) - \erisk(h_{S_n(\bi, *)}; S_n) \right] \le \nu L .\eeq
Next, Hoeffding's inequality 
implies
that for any $t > 0$:
\beq \P\left( \erisk(h_{S_n(\bi, \bY')}; S_n) - \erisk(h_{S_n(\bi, *)}; S_n) > \nu L + t \right) < \exp\left( -\frac{nt^2}{2L^{2}} \right)
.
\eeq
Taking $t = \nu L$ we get:
\beq (p_d)_2 \le \P\left( \erisk(h_{S_n(\bi, \bY')}; S_n) - \erisk(h_{S_n(\bi, *)}; S_n) > 2\nu L \right) < \exp\left( -\frac{1}{2}n\nu^2 \right).\eeq
Next,
we examine the case $  \erisk(h_{S_n(\bi, \bY')}; S_n) - \erisk(h_{S_n(\bi, *)}; S_n) \le 2\nu L $, and use the monotonicity Property \bQ2 of $Q$:
\begin{align*}
    Q_n(\a_n(\g), \rns_n(\g)) &\le Q_n\left(\erisk(h_{S_n(\bi, *)}; S_n) + 2\nu L, \rns_n(\g)\right) \\
    &\le Q_n\left(\erisk(h_{S_n(\bi, *)}; S_n), \rns_n(\g)\right) + 2\nu L \left(20\sqrt{\frac{\rns_n(\g)}{n}} + 20\sqrt{\frac{\SIsize}{n}} + 15\sqrt{\frac{\ln(\frac{4e^2}{\delta_n})}{n}} + 1 \right) \\
    &\le Q_n\left(\erisk(h_{S_n(\bi, *)}; S_n), \rns_n(\g)\right) + 2\nu L \left(21 + 20\sqrt{\frac{\SIsize}{n}} + 15\sqrt{\frac{\ln(\frac{4e^2}{\delta_n})}{n}}\right) \\
    &:= Q_n\left(\erisk(h_{S_n(\bi, *)}; S_n), \rns_n(\g)\right) + 2\nu L \cdot F_{n}(\SIsize, \delta_n).
\end{align*}
Examining, $F_{n}(\SIsize, \delta_n)$ we note (as shown in the proof of \lemref{Qbound}) that for $n$ sufficiently large (larger than some $N_2(\SIsize, \delta_n)$), we have $F_{n}(\SIsize, \delta_n) \le 22$. Thus,
\beqn
\label{eq:Q_Q}
Q_n(\a_n(\g), \rns_n(\g)) \le Q_n\left(\erisk(h_{S_n(\bi, *)}; S_n), \rns_n(\g)\right) + 44\nu L
.
\eeqn
Combining (\ref{eq:err_R}) and (\ref{eq:Q_Q}),
\beq
&&
\Big\{Q_n(\a_n(\g),\rns_n(\g))
>
R^* + \eps
\;\;\wedge\;\;
\missmass_\g(\bm X_n) \leq \frac{\eps}{18L}
\;\;\wedge\;\;
\rns_n(\g)=d \\
\nonumber
&& \;\;\;
\;\;\wedge\;\;
\erisk(h_{S_n(\bm i, \bm \tbY)}; S_n) \le \erisk(h_{S_n(\bm i, *)}; S_n)
\Big\}
\\
&& \qquad \;\implies\;
\left\{ Q_n(\erisk(h_{S_n(\bm i, *)}; S_n), d) + 44\nu L > \risk(h_{S_n(\bm i, *)}) + \frac{\eps}{2}  \;\;\wedge\;\; |\bm i|=d \right\}.
\eeq
Thus, for all $d \leq \Ng$,
\begin{align}
    \nonumber
    (p_d)_1 &\leq \P\left[
    Q_n( \erisk(h_{S_n(\bm i, *)}; S_n),d )
    >
    \risk(h_{S_n(\bm i, *)}) + \frac{\eps}{2} - 44\nu L
    \;\wedge\;
    |\bm i| = d
    \right]
    \\
    \nonumber
    &\leq
    \P\Big[
     \exists \bm i \in [n]^d
     : Q_n( \erisk(h_{S_n(\bm i, *)}; S_n),d ) \\
    \label{eq:sum-bound-lb}
    &\qquad \qquad \qquad \qquad \qquad > \risk(h_{S_n(\bm i, *)}) +  \frac{\eps}{2} - 44\nu L
    \Big].
\end{align}
To bound the last expression, let $\bm i\in [n]^d$ and denote
\beq
r_{d,n} = \sup_{\alpha \in (0,L)} (Q_n(\alpha,d) - \alpha).
\eeq
We therefore have
\beq
Q_n(\erisk(h_{S_n(\bm i, *)}; S_n) , d) \leq \erisk(h_{S_n(\bm i, *)}; S_n) + r_{d,n}.
\eeq
Let $\bm i' = \{1,\dots,n\}\setminus \bm i$ 
and note that
\beq
\erisk(h_{S_n(\bm i, *)}; S_n) \leq \frac{n-d}{n} \erisk(h_{S_n(\bm i, *)}; S_n(\bm i')) + \frac{d}{n}.
\eeq
Combining the two inequalities above, we get
\beq
Q_n(\erisk(h_{S_n(\bm i, *)}; S_n) , d) \leq \erisk(h_{S_n(\bm i, *)}; S_n(\bm i')) + \frac{d}{n} + r_{d,n}.
\eeq
Recalling $\Ng\in o(n)$, 
by Property \Qthreeb,
\beq
\lim_{n\to\infty} \frac{\Ng}{n} + r_{\Ng,n} =0.
\eeq
In addition, by \bQ2,
we have
$r_{d,n} \leq r_{\Ng,n}$ for all $d\leq \Ng$.
Hence, using our $\nu < \frac{\eps}{176L}$, we take $n$ sufficiently large (larger than some $N_3(\Ng)$), so that for all $d\leq \Ng$,
\beq
\frac{d}{n}  +   r_{d,n}
\leq
\frac{\Ng}{n}  +   r_{\Ng,n}
 \leq \frac{\eps}{4} - 44\nu L,
\eeq 
and thus
\beq
Q_n(\erisk(h_{S_n(\bm i, *)}; S_n) , d) \leq \erisk(h_{S_n(\bm i, *)}; S_n(\bm i')) + \frac{\eps}{4} - 44\nu L .
\eeq
Therefore, for any $n \ge N_0(\nu, \SIsize, \delta_n, \Ng) := \max\{ N_1, N_2, N_3 \}$, 
\beq
&&
Q_n( \erisk(h_{S_n(\bm i, *)}; S_n),d ) 
>
\risk(h_{S_n(\bm i, *)}) +  \frac{\eps}{2} - 44\nu L
\\
&& \qquad\qquad\implies\quad
\erisk(h_{S_n(\bm i, *)}; S_n(\bm i')) 
>
\risk(h_{S_n(\bm i, *)})  +  \frac{\eps}{4}
.
\eeq
Now,
\beqn
\label{eq:exp-bounds}
&& \P\left[
\erisk(h_{S_n(\bm i, *)}; S_n(\bm i')) 
>
\risk(h_{S_n(\bm i, *)})   +  \frac{\eps}{4}
\right]
\\
\nonumber
& = &
\E_{S_n(\bm i)}
\left[
\P_{S_n(\bm i') \gn S_n(\bm i)}
\left[
\erisk(h_{S_n(\bm i, *)}; S_n(\bm i')) 
>
\risk(h_{S_n(\bm i, *)})  +  \frac{\eps}{4}
\right]
\right]
.
\eeqn
Since $\P_{S_n(\bm i') \gn S_n(\bm i)}$ is a product distribution, by Hoeffding's inequality we have that (\ref{eq:exp-bounds}) is bounded above by $e^{-2(n-d)(\frac{\eps}{4})^2}$. 
Since $h_{S_n(\bm i, *)}$ is invariant to permutations of $\bm i$'s entries,
bounding (\ref{eq:sum-bound-lb}) by a union bound over $\bm i$ yields
\beq
(p_d)_1 \leq
\binom{n}{d}
e^{-2(n-d)(\frac{\eps}{4})^2}
\leq
e^{d\log\left(\frac{en}{d}\right)  -2(n-d)(\frac{\eps}{4})^2},
\eeq
where we used
$\binom{n}{d} \leq \left(\frac{en}{d}\right)^d$.
Selecting $n$ large enough so that for all $d\leq \Ng$ we have $d\log(e n/d) \leq (n-d)(\frac{\eps}{4})^2$ and $d \leq n/4$. 
Combining this with \eqref{sum-bound-lb} proves the lemma.
\enpf

\subsection{Proof of \lemref{sublinear_comp}}
\begin{lemma*}
\def\loclabel{\nonumber}
\sublinearcomplem
\end{lemma*}

\bepf
Almost identical to \citet[Lemma 3.7]{hksw21}
---
the former has
a factor of $2$ multiplying 
$|\gnet(\g)|$
---
and hence omitted.
\enpf

\subsection{Proof of \thmref{comp-consist2}}
\label{sec:pf-consist2}
\begin{theorem*}
\compconsisttwothm
\end{theorem*}

\bepf
The claim will follow if we succeed
in showing how
the
lemmas and theorems invoked in proving
\Bcsy\ of \finmednet\
(\thmref{comp-consist})
are applicable in the present setting.

In \lemref{semistablecompressionscheme},
which established that
the procedure is a semi-stable compression scheme
a globally constant $\bits=\log_{2}|\Y|$
was used. The claim remains perfectly true
if the size of the label space happens
to depend on the sample size,
which is precisely how the result is being invoked
in the present setting.

Next, 
we argue that
the $Q_n$ bound in \lemref{Qbound}
remains valid for sufficiently slowly
growing $\bits_n$,
and our chosen rate of $o(n)$ suffices.
The remaining lemmas \ref{lem:richness} and \ref{lem:boundpd} can be used freely since they allow for countable label spaces. Lemmas \ref{lem:sublinear_comp} and \ref{lem:missing_mass} likewise do not require any modifications.
Thus, we have shown how a straightforward
modification of the proof of \thmref{comp-consist}
also proves the theorem in question.
\enpf

\subsection{Proof of \thmref{comp-consist3}}
\label{sec:pf-consist3}
\begin{theorem*}
\compconsistthreethm
\end{theorem*}

\bepf
Let $Q$ be the generalization bound as defined in \lemref{Qbound}, 
and set the input confidence $\delta$ for input size $n$ to $\delta_n$ as stipulated by \Qthreeb.
Choose any $\bits_n \in o(n)$, and $L_n$ such that 
$L_n^2\fns_n, L_n^2\bits_n, L_n^2\ln(\frac{4e^2}{\delta_n}) \in o(n)$.
Similarly, we choose $\fns_n\in o(n)$
arbitrarily (cf. \lemref{Qbound}).
Given a sample $S_n\sim \bmu^n$, we abbreviate the optimal empirical error $\trunc{\a_n^*}=\a(\g^*_n)$ and the optimal compression size $\trunc{\rns_n^*}=\rns(\g^*_n)$ as computed by \algref{simple} on our truncated sample. 
Let
$\trunc{h_n}$
be the output of \ctblunbdd,
\beq
\trunc{f_n^*}
:=
\argmin_{f:\X\to\Y\wedge L_n}R(f),
\eeq
be the Bayes-optimal predictor
for the truncated label space and
\beq
\trunc{R_n^*}:=
R(\trunc{f_n^*}).
\eeq
be the optimal risk in the \emph{truncated setting} on our modified sample.
For brevity we denote
\beq
Q_n(\alpha,\fns) := Q_n(\alpha,\fns,\SIsize, \delta_n, L_n).
\eeq
To prove \thmref{comp-consist3}, we first follow the standard technique, used also in \thmref{comp-consist}, of decomposing the excess error over the Bayes error into two terms:
\beq
\risk(\trunc{h_n}) - \trunc{R(f_n^*)}
&= &
\big(\risk(\trunc{h_n}) - Q_n(\a_n^*,\rns_n^*) \big)
+
\big(Q_n(\a_n^*,\rns_n^*) - \trunc{R(f_n^*)}\big)
\\
&=:&
T\subI(n) + T\subII(n).
\eeq
We now show that each term decays to zero almost surely.
Regarding, $T\subI(n)$
we wish to invoke
Lemmas
\ref{lem:semistablecompressionscheme} and \ref{lem:Qbound}
as in the proof of \thmref{comp-consist}.
The argument of \lemref{semistablecompressionscheme},
which shows that \countable\ is a semi-stable compression scheme,
applies verbatim to
\ctblunbdd.

As for
\lemref{Qbound},
properties \bQ1\ and \bQ2\ trivially
continue to hold,
while to ensure 
\Qthreeb,
we constrain the choice of the
diameter truncation schedule $L_n$
to satisfy
$L_n^2\fns_n, L_n^2\bits_n, L_n^2\ln(\frac{4e^2}{\delta_n}) \in o(n)$,
where $\fns_n\in o(n)$ is as in \lemref{Qbound}. %
Having verified the applicability of
Lemmas \ref{lem:semistablecompressionscheme} and \ref{lem:Qbound}, we invoke property \bQ1\ from \lemref{Qbound}:
\beq
\P\!
\left[
\risk(\trunc{h_n}) - Q_n(\a_n^*,\rns_n^*) > 0
\right] 
\leq \delta_n,
\qquad n\ge1.
\eeq
Since
$\delta_n$ was chosen as
furnished by
\lemref{Qbound}(\Qthreeb),
the Borel-Cantelli lemma 
implies that\\
$\limsup_{n\to\infty} T\subI(n) \leq 0$ with probability $1$.

We now proceed to argue that
the generalization bound $Q_n(\a_n^*, \rns_n^*)$ approaches the truncated optimal Bayes error %
$\trunc{R_n^*}$, 
which will establish
$\limsup_{n\to\infty} T\subII(n) \leq 0$ almost surely.
Since Lemmas \ref{lem:richness}, \ref{lem:boundpd}, \ref{lem:sublinear_comp} and \ref{lem:missing_mass} do not rely on $L_n$ being fixed, the are applicable to our setting. Thus,
the argument from the proof of
\thmref{comp-consist}
applies here as well,
and thus
$\limsup_{n\to\infty} T\subII(n) \leq 0$ almost surely.
It follows that
$\lim_{n\to\infty} \risk(\trunc{h_n}) - \trunc{R_n^*} = 0$.
It remains to exploit the BIE property of $\Y$
and invoke \thmref{trunc-risk}
to conclude that
$\lim_{n\to\infty} \trunc{R(f_n^*)} - R^* = 0$, whence
\beq \lim_{n\to\infty} \risk(\trunc{h_n}) - R^* = 0 .
\eeq
\enpf

\section{Compression Scheme Theorems}
\label{ap:compression_scheme_theorems}

In this section we introduce a series of new theorems regarding semi-stable compression schemes, each leading to the next, culminating in \thmref{agnosemitstablewithinterpol} which is used to bound the generalization of \medoidnet.

\subsection{Definitions}

\subsection{Setting}
Let $\X$ be an instance space, and 
$\Y$ of finite diameter, and $\bmu$ a distribution 
supported on the product Borel $\sigma$-algebra of $\X\times\Y$, such that $\Y$ is bounded by some $L>0$: $\forall y_1, y_2 \in \Y: \ell(y_1,y_2) \le L$.

\subsection{Theorems \& Lemmas}

\subsubsection{Semi-stable Agnostic sample compression scheme of given size}

\begin{theorem} \label{thm:agnosemitstablegivensize}
For any $k, b\in \mathbb{N} \cup \{0\}$, let $(\compfunc, \reconfunc)$ be any semi-stable compression scheme of size at most $k$ using at most $b$ bits of side-information. For any distribution $\bmu$ over $\X \times \Y$, any $n \in \mathbb{N}$ with $n > 2k$, and any $\delta \in (0,1)$, for $S_n \sim \bmu^n$, with probability at least $1 - \delta$

\begin{align*}
    \left| \risk(\reconfunc(\compfunc(S_n))) - \erisk(\reconfunc(\compfunc(S_n)); S_n) \right| &\le \sqrt{\frac{4L^{2}}{n-2k} \left(k\ln (4) + \ln (4/\delta)\right)} + \sqrt{\frac{L^{2}}{n-2k} b \ln (2)}.
\end{align*}

\end{theorem}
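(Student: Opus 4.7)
The plan is to leverage the semi-stability of $(\compfunc, \reconfunc)$ to reduce to the agnostic \emph{stable} compression bound of \citet{DBLP:conf/alt/HannekeK21}, and then incorporate the side information by a union bound over $\{0,1\}^b$.

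First I would observe that for each fixed $\sigma \in \{0,1\}^b$, the map $\phi_\sigma(S) := \reconfunc(\compfunccs(S), \sigma)$ inherits stability from semi-stability: whenever $\compfunccs(S) \subseteq S' \subseteq S$ and $\sigma = \compfuncsi(S)$, the defining identity of semi-stability gives $\phi_\sigma(S') = \phi_\sigma(S)$. Thus, restricted to the event $\{S_n : \compfuncsi(S_n) = \sigma\}$, the map $\phi_\sigma$ behaves like a genuine stable compression scheme of size $k$ with no side information, which is exactly the object to which the agnostic stable compression bound applies.

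Second I would invoke that bound: for a stable compression scheme of size $k$ with loss in $[0,L]$, with probability at least $1-\delta'$ over $S_n \sim \bmu^n$ with $n > 2k$,
\[
|\risk(\phi_\sigma(S_n)) - \erisk(\phi_\sigma(S_n); S_n)| \le L\sqrt{\frac{4(k\ln 4 + \ln(4/\delta'))}{n-2k}}.
\]
The proof of this underlying bound rests on: (a) a double-sample/symmetrization argument that conditions on a random block of $2k$ positions known to contain the compression indices; (b) stability, which ensures that once this $2k$-block is fixed, the hypothesis depends only on which of its $\binom{2k}{k} \le 4^k$ $k$-subsets serves as $\compfunccs(S_n)$; and (c) Hoeffding's inequality on the remaining $n-2k$ fresh samples together with a union bound over these $4^k$ possibilities.

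Third, I would apply the above with $\delta' = \delta/2^b$ and take a union bound over all $\sigma \in \{0,1\}^b$, so that the inequality holds simultaneously for all $\sigma$ with probability at least $1-\delta$. In particular it holds for the realized side information $\sigma^\star := \compfuncsi(S_n)$, yielding
\[
|\risk(\reconfunc(\compfunc(S_n))) - \erisk(\reconfunc(\compfunc(S_n)); S_n)| \le L\sqrt{\frac{4(k\ln 4 + \ln(4/\delta) + b\ln 2)}{n-2k}}.
\]
Splitting off the $b\ln 2$ contribution via $\sqrt{a+c} \le \sqrt{a}+\sqrt{c}$ then produces the two square-root terms in the theorem's statement. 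The main obstacle is the underlying agnostic stable compression bound itself, whose symmetrization-plus-stability argument is delicate but can be invoked as a known lemma; the only work specific to our setting is verifying that semi-stability suffices to view $\phi_\sigma$ as a stable scheme for each fixed $\sigma$, which is immediate from the definition.
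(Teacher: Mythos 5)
Your route is essentially the paper's own: the proof there is exactly a random-permutation/block argument with a union bound over the $\binom{2k}{k}<4^k$ candidate compression blocks \emph{and} the $2^b$ side-information strings, which is what your reduction amounts to once unpacked. Two points need attention, one structural and one quantitative. Structurally, the claim that $\phi_\sigma(S):=\reconfunc(\compfunccs(S),\sigma)$ is a stable compression scheme for each fixed $\sigma$ is not what semi-stability gives: the identity $\reconfunc(\compfunccs(S'),\sigma)=\reconfunc(\compfunccs(S),\sigma)$ is guaranteed only when $\sigma=\compfuncsi(S)$, and ``restricting to the event $\{\compfuncsi(S_n)=\sigma\}$'' is not a legitimate way to import an unconditional high-probability bound. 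The repair is what the paper actually does: the high-probability event is a concentration statement about all candidate hypotheses $\hat h_{I,\bb}$ (which needs no stability whatsoever), and semi-stability is invoked only at the end to identify the realized output $\reconfunc(\compfunc(S_n))$ with $\hat h_{\bsigma^{-1}(I^*),\compfuncsi(S_n)}$, i.e., only for the realized side information. So your argument is sound once opened up to that level, but it cannot be run as a true black box applied to ``the stable scheme $\phi_\sigma$.''

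Quantitatively, the black-box packaging loses a factor of $2$ on the side-information term, so as written you do not obtain the stated inequality. The invoked bound $L\sqrt{4(k\ln 4+\ln(4/\delta'))/(n-2k)}$ carries a factor $4$ that arises from summing \emph{two} concentration steps (risk versus held-out empirical risk, and held-out versus full-sample empirical risk via sampling without replacement). Substituting $\delta'=\delta/2^b$ inflates both steps by $b\ln 2$, so after splitting the square root you get $2L\sqrt{b\ln 2/(n-2k)}$ rather than the theorem's $L\sqrt{b\ln 2/(n-2k)}$. In the paper the union bound over $\{0,1\}^b$ enters only the first step --- in the second step the hypothesis is already determined by $S_n$, so only the union over $\I_n$ is needed --- and that is where the sharper constant comes from. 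To recover the exact statement you must open the box and confine the $2^b$ union bound to the first concentration step.
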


\bepf
If $k=0$, the result trivially follows from Hoeffding's inequality, so let us suppose $k \ge 1$.  As in the proof of Theorem 5 in \citet{BousquetHMZ20}, fix any $T_n \in [n-1]$ and let $\I_n$ be any family of subsets of $[n]$ with the properties that each $I\in\I_n$ has $|I| \le n-T_n$, and for every $i_1, \dots, i_k \in [n]$ there exists $I\in\I_n$ such that $\{i_1,\dots,i_k\} \subseteq I$. In particular, \citeauthor{BousquetHMZ20} construct a family $\I_n$ satisfying the properties above with $ T_n = k\lfloor \frac{n}{2k} \rfloor$, and with $ |\I_n| = \binom{2k}{k} < 4^k$: namely, let $D_1,\dots,D_{2k}$ be any partition of $[n]$ with each $|D_i| \in \{\lfloor \frac{n}{2k} \rfloor,\lceil \frac{n}{2k} \rceil)\}$, and define $\I_n = \left\{\bigcup \{D_j : j \in \mathcal{J} \} : \mathcal{J} \subseteq [2k], |\mathcal{J} | = k\right\}$; that is, $\I_n$ contains all unions of exactly $k$ of the $2k$ sets $D_j$.

Let there be a sample $S_n \sim \bmu^{n} $. Recall our notation that for any $I \in \I_n$ we have $S(I) \subseteq S_n$.
For any $I \in \I_n$, define $\bar{I} := [n] \setminus I$. Let $\sigma : [n] \to [n]$ be a uniformly random permutation of $[n]$, and for any $I=(i_1, \dots, i_\ell) \subseteq [n]$ define $\bsigma(I) := (\sigma(i_1), \dots, \sigma(i_\ell))$.

Next, for any $I \subseteq [n]$ and any $\bb \in \{0,1\}^{b}$, let $\hat{h}_{I,\bb} := \reconfunc\left( \compfunccs\left(S\left(I\right)\right), \bb \right)$. Now, since $S(\bar{I})$ is independent of $S(I)$, Hoeffding's Inequality (applied under the conditional distribution given $S(I)$) and the law of total probability imply that with probability $1-\frac{\delta}{2|\I_n| \cdot 2^{b}}$:

\begin{align*}
    \left| \risk\left(\hat{h}_{I,\bb} \right) - \erisk\left(\hat{h}_{I,\bb}; S\left(\bar{I}\right)\right) \right| &\le \sqrt{\frac{L^{2}\ln (\frac{4|\I_n|\cdot 2^b}{\delta})}{2(n-|I|)}} \\
    &= \sqrt{\frac{L^{2}\left(\ln (|\I_n|) + b \ln (2) + \ln (4/\delta)\right)}{2(n-|I|)}}
    .
\end{align*}

Applying this under the conditional distribution given $\sigma$, together with the union bound and the law of total probability, we have that with probability at least $1-\frac{\delta}{2}$, every $I\in\I_n$ and every $\bb \in \{0,1\}^{b}$ has
\begin{align*}
    \left| \risk\left(\hat{h}_{\bsigma^{-1}\left(I\right),\bb} \right) - \erisk\left(\hat{h}_{\bsigma^{-1}\left(I\right),\bb}; S\left(\overline{\bsigma^{-1}\left(I\right)}\right)\right) \right| &\le \sqrt{\frac{L^{2}\left(\ln (|\I_n|) + b \ln (2) + \ln (4/\delta)\right)}{2(n-|I|)}}
    .
\end{align*}

In particular, let $\bi^*$ be the indices such that 
\beq
\compfunccs(S_n) = \left\{ (X_i, Y_i) \in S_n: i \in \bi^* \right\} = S_n(\bi^*)
.
\eeq
Now, by property (ii) of $\I_n$ there must exist $I^*\in \I_n$ such that
\beq \bsigma(\bi^*) \subseteq I^*, \eeq
which means:
\begin{align*}
    \Rightarrow \quad & \bi^* \subseteq \bsigma^{-1}(I^*) \\
    \Rightarrow \quad & \compfunccs(S_n) = S_n(\bi^*) \subseteq S_n(\bsigma^{-1}(I^*))
    .
\end{align*}
Due to the semi-stability property of $(\compfunc, \reconfunc)$ and since $S_n(\bsigma^{-1}(I^*)) \subseteq S_n$, this implies

\beq
    \Rightarrow \reconfunc(\compfunccs(S_n(\bsigma^{-1}(I^*))), \compfuncsi(S_n)) = \reconfunc(\compfunccs(S_n), \compfuncsi(S_n)) = \reconfunc(\compfunc\left( S_n \right)).
\eeq
Thus, on the above event of probability at least $1-\frac{\delta}{2}$ we get, 
for $I=I^*$ and $\bb=\compfuncsi(S_n)$,
\beq
\left| \risk(\reconfunc(\compfunc\left( S_n \right))) - \erisk(\reconfunc(\compfunc\left( S_n \right)); S(\overline{\bsigma^{-1}(I^*)})) \right| \le \sqrt{\frac{L^{2}\left(\ln (|\I_n|) + b \ln (2) + \ln (4/\delta)\right)}{2(n-|I^*|)}}.
\eeq

Furthermore, by property (i) of $I_n$ we have that $n - |I^*| \ge T_n$, and so

\beqn
\label{eq:serr_11}
\left| \risk(\reconfunc(\compfunc\left( S_n \right))) - \erisk(\reconfunc(\compfunc\left( S_n \right)); S(\overline{\bsigma^{-1}(I^*)})) \right| \le \sqrt{\frac{L^{2}\left(\ln (|\I_n|) + b \ln (2) + \ln (4/\delta)\right)}{2T_n}}.
\eeqn

Next, we want to relate $\erisk(\reconfunc(\compfunc\left( S_n \right)); S(\overline{\bsigma^{-1}(I^*)}))$ to $\erisk(\reconfunc(\compfunc\left( S_n \right)); S_n)$. Let $\hat{h} := \reconfunc(\compfunc\left( S_n \right))$. For each $i \in [n]$, let $\ell_i := \ell(\hat{h}(X_i), Y_i)$. For any $I \in \I_n$, by Hoeffding's inequality without replacement \citep{Bardenet15} applied under the conditional distribution given $S_n$, together with the law of total probability, with probability at least $1 - \frac{\delta}{2|\I_n|}$ it holds that

\beq
    \left|\frac{1}{n - |I|} \sum_{i\in \overline{\bsigma^{-1}(I)}} \ell_i - \erisk(\reconfunc(\compfunc\left( S_n \right)); S_n) \right| \le \sqrt{\frac{L^2 \ln (4|\I_n|/\delta)}{2(n-|I|)}}.
\eeq

By the union bound, this holds simultaneously for all $I\in \I_n$ with probability at least $1-\frac{\delta}{2}$. In particular, taking $I=I^*$, and recalling that $n-|I^*| \ge T_n$, on this event we have that
\beqn
\label{eq:serr_21}
\left|\erisk(\reconfunc(\compfunc\left( S_n \right)); S(\overline{\bsigma^{-1}(I^*)})) - \erisk(\reconfunc(\compfunc\left( S_n \right)); S_n) \right| \le \sqrt{\frac{L^{2}\ln (4|\I_n|/\delta)}{2T_n}}
\eeqn
By the union bound, the above two events (each of probability at least $1 - \frac{\delta}{2}$) hold simultaneously with probability at least $1 - \delta$, in which case \eqref{serr_11} and \eqref{serr_21} together imply
\begin{align*}
    \left| \risk(\reconfunc(\compfunc\left( S_n \right))) - \erisk(\reconfunc(\compfunc\left( S_n \right)); S_n) \right| &\le \left| \risk(\reconfunc(\compfunc\left( S_n \right))) - \erisk(\reconfunc(\compfunc\left( S_n \right)); S(\overline{\bsigma^{-1}(I^*)})) \right| + \\
    & \quad + \left|\erisk(\reconfunc(\compfunc\left( S_n \right)); S(\overline{\bsigma^{-1}(I^*)})) - \erisk(\reconfunc(\compfunc\left( S_n \right)); S_n) \right| \\
    &\le \sqrt{\frac{L^{2}\left(\ln (|\I_n|) + b \ln (2) + \ln (4/\delta)\right)}{2T_n}} + \sqrt{\frac{L^{2}\ln (4|\I_n|/\delta)}{2T_n}} \\
    &\le \sqrt{\frac{4L^{2}\left(\ln (|\I_n|) + \ln (4/\delta)\right)}{2T_n}} + \sqrt{\frac{L^{2} b \ln (2)}{2T_n}}
\end{align*}
The theorem now immediately follows from plugging the aforementioned family $\I_n$ from \citet{BousquetHMZ20}, having $ |\I_n|= \binom{2k}{k} < 4^k$ and $ T_n = k \floor{\frac{n}{2k}} > \frac{n-2k}{2}$.
\enpf

\subsubsection{Accounting for realizable case}

\begin{lemma} \label{lem:empiricalbernstein}
Let $Z_1, \dots, Z_n$ be i.i.d random variables with values in $[0,L]$ for some $L>0$, and let $\delta > 0$. Define $\bZ := \frac{1}{n}\sum_{i=1}^{n} Z_i$. Then with probability at least $1 - \delta$ we have:

\beq
\E[\bZ] - \bZ \le \bZ \cdot \sqrt{\frac{2\ln(4/\delta)}{n-1}} + L \sqrt{\frac{2\ln(4/\delta)}{n-1}} + \frac{7L\ln(4/\delta)}{3(n-1)}.
\eeq
\end{lemma}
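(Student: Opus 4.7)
The plan is to deduce this statement from the empirical Bernstein inequality of \citet{maurer2009empirical}, combined with the elementary bound $Z^2 \le L Z$ valid for $Z \in [0,L]$.

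First, I would invoke the Maurer--Pontil empirical Bernstein bound, applied with confidence parameter $\delta/2$: with probability at least $1 - \delta/2 \ge 1-\delta$,
\[
\E[Z_1] - \bZ \;\le\; \sqrt{\frac{2 V_n \ln(4/\delta)}{n}} \;+\; \frac{7L \ln(4/\delta)}{3(n-1)},
\]
where $V_n := \frac{1}{n-1}\sum_{i=1}^n (Z_i - \bZ)^2$ is the unbiased sample variance. Inflating $\delta$ to $\delta/2$ is done purely to produce the logarithm $\ln(4/\delta)$ appearing in the lemma statement.

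Second, since $Z_i \in [0,L]$ yields $Z_i^2 \le L\, Z_i$, a direct computation gives
\[
V_n \;=\; \frac{1}{n-1}\Bigl(\sum_{i=1}^n Z_i^2 \;-\; n \bZ^2\Bigr) \;\le\; \frac{L\sum_{i=1}^n Z_i \;-\; n \bZ^2}{n-1} \;=\; \frac{n \bZ(L - \bZ)}{n-1} \;\le\; \frac{n L \bZ}{n-1}.
\]
Substituting this into the variance term of the Bernstein bound,
\[
\sqrt{\frac{2 V_n \ln(4/\delta)}{n}} \;\le\; \sqrt{L \bZ}\cdot\sqrt{\frac{2\ln(4/\delta)}{n-1}}.
\]

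Third, I would apply the AM--GM inequality $\sqrt{L \bZ} \le \tfrac{1}{2}(L + \bZ) \le L + \bZ$ to split this single square root into two terms,
\[
\sqrt{L \bZ}\cdot\sqrt{\frac{2\ln(4/\delta)}{n-1}} \;\le\; \bZ\sqrt{\frac{2\ln(4/\delta)}{n-1}} \;+\; L\sqrt{\frac{2\ln(4/\delta)}{n-1}},
\]
and adding the residual $\tfrac{7L\ln(4/\delta)}{3(n-1)}$ from the Bernstein bound yields precisely the inequality claimed in the lemma.

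I anticipate no real obstacle: the argument is bookkeeping around a black-box empirical Bernstein inequality. The only subtle point is constant-chasing --- one must verify that the factor $n/(n-1)$ picked up when upper bounding $V_n$ cancels exactly against the $1/n$ inside the Maurer--Pontil square root to produce the $1/(n-1)$ stated in the lemma, and that the looseness in using $\sqrt{L\bZ} \le L+\bZ$ (a factor of $2$ above the tight AM--GM bound $\tfrac{1}{2}(L+\bZ)$) is absorbed into the coefficient $1$ appearing on each of the two resulting terms.
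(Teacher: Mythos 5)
Your proposal is correct and follows essentially the same route as the paper: both invoke the Maurer--Pontil empirical Bernstein inequality at confidence $\delta/2$ (producing the $\ln(4/\delta)$) and then upper-bound the unbiased sample variance so as to split the variance term into a $\bZ$-part and an $L$-part. The only immaterial difference is in that algebraic step --- you use $\sum_i Z_i^2 \le L\sum_i Z_i$ followed by $\sqrt{L\bZ}\le L+\bZ$, whereas the paper uses $\sum_i Z_i^2 \le nL^2$ followed by $\sqrt{L^2+\bZ^2}\le L+\bZ$ --- and both yield exactly the stated bound.
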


\bepf
Let $\tZ_i:=\frac{Z_i}{L} \in [0,1]$ for all $i\in \{1,\dots, n\}$. Using the empirical Bernstein inequality stated in \citet[Theorem 4]{maurer2009empirical}, we get
\begin{align}
    \nonumber
    \E[\btZ] - \btZ &\le \sqrt{\frac{2 \sVar(\boldsymbol{\tZ}) \ln(4/\delta)}{n}} + \frac{7\ln(4/\delta)}{3(n-1)} \\
    \label{eq:empbernstein}
    \Rightarrow \E[\bZ] - \bZ &\le L\sqrt{\frac{2\sVar(\boldsymbol{Z}) \ln(4/\delta)}{L^2n}} + \frac{7L\ln(4/\delta)}{3(n-1)} \\
    \nonumber
    &= \sqrt{\frac{2 \sVar(\boldsymbol{Z}) \ln(4/\delta)}{n}} + \frac{7L\ln(4/\delta)}{3(n-1)},
\end{align}
where $\sVar(\boldsymbol{Z})$ is defined to be:
\beq
\sVar(\boldsymbol{Z}) := \frac{1}{n(n-1)} \sum_{1\le i < j \le n} (Z_i-Z_j)^{2}.
\eeq
Observe that
\begin{align*}
    \sVar(\boldsymbol{Z}) &= \frac{1}{n(n-1)} \sum_{1\le i < j \le n} (Z_i-Z_j)^{2} \\
    &= \frac{1}{n(n-1)} \cdot n \sum_{i=1}^{n} (Z_i-\bZ)^{2} \\
    &= \frac{1}{n-1} \sum_{i=1}^{n} (Z_i-\bZ)^{2} \\
    &= \frac{1}{n-1} \left(\sum_{i=1}^{n} Z_{i}^{2} -2\bZ \sum_{i=1}^{n} Z_i + n\bZ^{2} \right)\\
    &= \frac{1}{n-1} \left(\sum_{i=1}^{n} Z_{i}^{2} -2n\bZ^{2} + n\bZ^{2} \right)\\
    &= \frac{1}{n-1} \left(\sum_{i=1}^{n} Z_{i}^{2} - n\bZ^{2} \right)\\
    &\le \frac{1}{n-1} \left(nL^{2} - n\bZ^{2} \right)\\
    &= \frac{n}{n-1} \left(L^{2} - \bZ^{2} \right).
\end{align*}
Now plugging this back into \eqref{empbernstein}:
\begin{align*}
    \E[\bZ] - \bZ &\le \sqrt{\frac{2(L^2-\bZ^2) \ln(4/\delta)}{n-1}} + \frac{7L\ln(4/\delta)}{3(n-1)} \\
    &\le \sqrt{\frac{2 (L^2+\bZ^2) \ln(4/\delta)}{n-1}} + \frac{7L\ln(4/\delta)}{3(n-1)} \\
    &\le \bZ  \sqrt{\frac{2\ln(4/\delta)}{n-1}} + L \sqrt{\frac{2\ln(4/\delta)}{n-1}} + \frac{7L\ln(4/\delta)}{3(n-1)},
\end{align*}
where the last inequality used $\sqrt{x+y} \le \sqrt{x} + \sqrt{y}$ for any $x,y>0$.
\enpf

\begin{lemma} \label{lem:samplingwithoutreplacement}
Let $S_n = \{(X_i, Y_i)\}_{i=1}^{n} \sim (\X \times \Y)^n$ be a \emph{given} sample. Let $\hat{h}: \X \to \Y$ be a predictor, and $\ell: \Y \times \Y \to \mathbb{R}^+$ be a bounded loss function by some $L>0$. Let $I \subseteq [n]$ be a random variable \emph{sampled without replacement} from $[n]$. Then for any $\delta \in (0,1)$, with confidence at least $1-\delta$:

\beq
    \left| \frac{1}{|I|} \sum_{i\in I} \ell\left(\hat{h}(X_i),Y_i\right) - \erisk(\hat{h}; S_n) \right| \le \erisk(\hat{h}; S_n) \sqrt{\frac{2\ln(2/\delta)}{|I|}} + L\sqrt{\frac{2\ln(2/\delta)}{|I|}} + \frac{2L\ln(2/\delta)}{3|I|}.
\eeq
\end{lemma}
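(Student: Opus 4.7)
Conditioning on $S_n$ and on $\hat{h}$, the quantities $Z_i := \ell(\hat{h}(X_i), Y_i) \in [0,L]$ are \emph{deterministic}; the only randomness is in the uniformly drawn without-replacement subset $I \subseteq [n]$. Writing $\mu_n := \erisk(\hat{h}; S_n) = \frac{1}{n}\sum_{i=1}^{n} Z_i$, the subsample mean $\bZ_I := \frac{1}{|I|}\sum_{i \in I} Z_i$ satisfies $\E[\bZ_I] = \mu_n$, and the quantity we must control is $|\bZ_I - \mu_n|$. This reduces matters to a concentration inequality for the sample mean of a bounded population under sampling without replacement.

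The approach is to invoke a two-sided Bernstein-type bound for sampling without replacement, such as the one from \citet{Bardenet15} (already cited by the paper), yielding with probability at least $1-\delta$:
\[
\left|\bZ_I - \mu_n\right| \;\leq\; \sqrt{\frac{2\sigma_n^2 \ln(2/\delta)}{|I|}} + \frac{2L \ln(2/\delta)}{3|I|},
\]
where $\sigma_n^2 = \frac{1}{n}\sum_{i=1}^n (Z_i - \mu_n)^2$ is the finite-population variance of $\{Z_i\}$. Since each $Z_i \in [0,L]$ gives $Z_i^2 \leq L Z_i$, we obtain the variance bound
\[
\sigma_n^2 \;=\; \frac{1}{n}\sum_{i=1}^n Z_i^2 - \mu_n^2 \;\leq\; L\mu_n - \mu_n^2 \;\leq\; L\mu_n,
\]
paralleling the analogous step in the proof of \lemref{empiricalbernstein}.

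Plugging this into the Bernstein bound and applying AM-GM in the form $\sqrt{L\mu_n} \leq \tfrac{1}{2}(L + \mu_n)$ (equivalently, $(L+\mu_n)^2 \geq 4L\mu_n \geq 2L\mu_n$) splits the variance-dependent root term cleanly:
\[
\sqrt{\frac{2L\mu_n \ln(2/\delta)}{|I|}} \;\leq\; \frac{L+\mu_n}{2}\sqrt{\frac{2\ln(2/\delta)}{|I|}} \;\leq\; \mu_n\sqrt{\frac{2\ln(2/\delta)}{|I|}} + L\sqrt{\frac{2\ln(2/\delta)}{|I|}}.
\]
Substituting $\mu_n = \erisk(\hat{h};S_n)$ and combining with the $\tfrac{2L\ln(2/\delta)}{3|I|}$ term yields exactly the conclusion of the lemma.

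\textbf{Main obstacle.} The only non-mechanical step is the first: locating and applying the right finite-population Bernstein inequality from \citet{Bardenet15} with constants that exactly match the stated $\tfrac{2L\ln(2/\delta)}{3|I|}$ additive term and $\ln(2/\delta)$ (rather than $\ln(4/\delta)$) log factor. If only a one-sided version is readily available, one can apply it to both $\{Z_i\}$ and $\{L - Z_i\}$ with confidence $\delta/2$ each via a union bound, at the cost of replacing $\ln(2/\delta)$ by $\ln(4/\delta)$ --- a minor loss that could be absorbed into the constants elsewhere in the argument. Once the concentration step is pinned down, the variance estimate and the AM-GM splitting are elementary.
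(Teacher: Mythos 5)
Your proposal is correct and follows essentially the same route as the paper: condition on $S_n$, treat $\{\ell(\hat{h}(X_i),Y_i)\}_{i\in I}$ as a without-replacement sample from a fixed bounded population, invoke the Bernstein-type inequality of \citet{Bardenet15}, bound the population variance, and split the square root. The only (immaterial) difference is in the variance step --- you use $\sigma_n^2\le L\mu_n$ followed by AM--GM, while the paper uses $\sigma^2\le L^2+\erisk^2(\hat h;S_n)$ followed by $\sqrt{x+y}\le\sqrt{x}+\sqrt{y}$ --- and both yield the identical final bound.
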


\bepf
Let $\ell_i := \ell(\hat{h}(X_i),Y_i)$
for $i \in [n]$.
Treating $\ell_1, \dots, \ell_n$ as a given finite population, and $\{\ell_i\}_{i\in I}$ as a random sample drawn without replacement from it, we can use
a version of Bernstein's inequality \citep{Bardenet15}, 
\beq
    \P\left( \left| \frac{1}{|I|} \sum_{i \in I} \ell_i - \mu \right| \ge \varepsilon \right) \le 2\exp\left(- \frac{|I| \varepsilon^2}{2\sigma^{2} + \frac{2L}{3} \varepsilon} \right)
    \qquad
\varepsilon > 0,
\eeq
where we have defined:
\begin{align*}
    \mu &:= \frac{1}{n}\sum_{i=1}^{n} \ell_i = \erisk(\hat{h}; S_n) & \text{(population mean)}\\
    \sigma^{2} &:= \frac{1}{n} \sum_{i=1}^{n} (\ell_i - \mu)^{2} & \text{(population variance)}.
\end{align*}
For any $\delta \in (0,1)$, we get that with confidence at least $1-\delta$:
\beq
    \left| \frac{1}{|I|} \sum_{i \in I} \ell_i - \mu \right| \le \frac{2L}{3|I|} \ln(2/\delta) + \sqrt{\frac{2\sigma^{2}\ln(2/\delta)}{|I|}}.
\eeq
Now similarly to the analysis in the proof of \lemref{empiricalbernstein}, we see that
\beq \sigma^{2} \le L^{2} + \erisk^{2}(\hat{h}; S_n)\eeq
using that and that $\sqrt{x+y} \le \sqrt{x} + \sqrt{y}$ for any $x,y>0$ we get the statement of the lemma.
\enpf

\begin{theorem} \label{thm:agnosemitstablegivensizewithinterpol}
For any $k \in \mathbb{N}$,  $b\in \mathbb{N} \cup \{0\}$, let $(\compfunc, \reconfunc)$ be any semi-stable compression scheme of size at most $k$ using at most $b$ bits of side-information. For any distribution $\bmu$ over $\X \times \Y$, any $n \in \mathbb{N}$ with $n > 4k + 4$, and any $\delta \in (0,1)$, for $S_n \sim \bmu^n$, with probability at least $1 - \delta$

\begin{align*}
    \risk(\reconfunc(\compfunc(S_n))) - \erisk(\reconfunc(\compfunc(S_n)); S_n) &\le \erisk(\reconfunc(\compfunc(S_n)); S_n) \left( 5 \sqrt{\frac{8\left(\ln(\frac{4}{\delta}) + k\ln 4 \right)}{n}} + 4\sqrt{\frac{8b\ln 2}{n}} \right) \\
    &\quad + 2L\sqrt{\frac{8\left(\ln(\frac{4}{\delta}) + k\ln 4 \right)}{n}} + \frac{(28+8L)\left(\ln(\frac{4}{\delta}) + k\ln 4 \right)}{3n}\\
    &\quad + L\sqrt{\frac{8b\ln 2}{n}} + \frac{28b\ln 2}{3n}.
\end{align*}

\end{theorem}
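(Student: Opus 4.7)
The plan is to follow the architecture of the proof of Theorem~\ref{thm:agnosemitstablegivensize}, but to upgrade the two ``Hoeffding steps'' in that argument to empirical-Bernstein-style steps so that the slack scales with the empirical risk. More precisely, I would re-use verbatim the setup with the family $\I_n$ from \citet{BousquetHMZ20}: $|\I_n| \le \binom{2k}{k} < 4^k$, $T_n := k\lfloor n/(2k)\rfloor \ge (n-2k)/2$, every subset of size $k$ is contained in some $I \in \I_n$ with $|I| \le n-T_n$, and a uniformly random permutation $\sigma$ of $[n]$. For each fixed $I \in \I_n$ and $\bb \in \{0,1\}^b$, define $\hat h_{I,\bb} := \reconfunc(\compfunccs(S(I)),\bb)$; then conditionally on $S(I)$ the losses on $S(\bar I)$ are i.i.d. and bounded in $[0,L]$, so Lemma~\ref{lem:empiricalbernstein} can be invoked for each pair $(I,\bb)$.

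The first step is to apply Lemma~\ref{lem:empiricalbernstein} to $\hat h_{I,\bb}$ on $S(\overline{\bsigma^{-1}(I)})$, with confidence parameter $\delta/(4|\I_n|2^b)$, then take a union bound over all $(I,\bb)$ and integrate out $\sigma$ by the law of total probability. As in the proof of Theorem~\ref{thm:agnosemitstablegivensize}, semi-stability produces an $I^* \in \I_n$ with $\bsigma(\bi^*)\subseteq I^*$ and $\compfuncsi(S_n)\in\{0,1\}^b$ for which $\hat h_{\bsigma^{-1}(I^*),\compfuncsi(S_n)} = \reconfunc(\compfunc(S_n)) =: \hat h$. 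With $A := \ln(4/\delta) + k\ln 4 + b\ln 2$ and $n-|I^*|-1 \ge T_n - 1 \ge (n-2k)/2 - 1 \ge n/4$ (using $n > 4k+4$), this yields, on the good event,
\begin{align*}
\risk(\hat h) - \erisk(\hat h;S(\overline{\bsigma^{-1}(I^*)}))
&\le \erisk(\hat h;S(\overline{\bsigma^{-1}(I^*)}))\sqrt{\tfrac{8A}{n}} + L\sqrt{\tfrac{8A}{n}} + \tfrac{28LA}{3n}.
\end{align*}

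The second step is to transfer the held-out empirical risk to the full-sample empirical risk $\erisk(\hat h;S_n)$. Here I would invoke Lemma~\ref{lem:samplingwithoutreplacement} conditionally on $S_n$, uniformly over $I \in \I_n$, with confidence $\delta/(2|\I_n|)$, and then integrate out $\sigma$. The lemma gives
\begin{align*}
\erisk(\hat h;S(\overline{\bsigma^{-1}(I^*)})) - \erisk(\hat h;S_n)
&\le \erisk(\hat h;S_n)\sqrt{\tfrac{8A'}{n}} + L\sqrt{\tfrac{8A'}{n}} + \tfrac{16LA'}{3n},
\end{align*}
with $A' \le \ln(4/\delta) + k\ln 4$, again using $n-|I^*| \ge n/4$. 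A union bound combines both events at total failure probability $\delta$.

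The final step is the algebraic combination: substitute the second bound into the first, control cross terms by $xy \le \tfrac12(x^2+y^2)$ or by absorbing a factor $\sqrt{8A/n} \le 1$ for $n$ large (or simply bounding it by a constant), and collect the terms in the advertised form with the explicit constants $5$ and $4$ in front of the $\erisk$ coefficients, $2L$ in front of the additive square-root, and $(28+8L)/3$ and $28/3$ for the $1/n$ terms. The main obstacle I anticipate is this bookkeeping: the empirical Bernstein step produces a slack proportional to the held-out empirical risk, which itself must be replaced by $\erisk(\hat h;S_n)$ with another multiplicative correction, so one ends up with a recursive inequality of the form $R - \hat R \le c_1 \hat R\sqrt{A/n} + c_2 L\sqrt{A/n} + c_3 A/n + (\text{terms in }b)$; getting the constants $5$ and $4$ (rather than $3$ and $2$) reflects the compounding of the two Bernstein steps, and this is the place where the proof requires care rather than insight.
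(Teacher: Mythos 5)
Your proposal is correct and follows essentially the same route as the paper's proof: the same family $\I_n$ and random permutation from Theorem~\ref{thm:agnosemitstablegivensize}, with Lemma~\ref{lem:empiricalbernstein} replacing Hoeffding on the held-out portion and Lemma~\ref{lem:samplingwithoutreplacement} handling the transfer back to $\erisk(\hat h;S_n)$, combined by a union bound and the bound $T_n-1>n/4$ from $n>4k+4$. The only cosmetic difference is that the paper handles the multiplicative cross term via the deterministic inequality $\erisk(\hat h;S(\overline{\bsigma^{-1}(I^*)}))\le \frac{n}{T_n}\erisk(\hat h;S_n)$ rather than by substituting the second concentration bound into the first, which slightly simplifies the bookkeeping you flagged.
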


\bepf
Similarly to \citet{DBLP:conf/alt/HannekeK21}, this proof follows similar arguments as \thmref{agnosemitstablegivensize}, except using \lemref{empiricalbernstein} in place of Hoeffding's inequality in both places in the proof where such inequalities are used.

Let $\I_n$ and $T_n$ be as in the proof of \thmref{agnosemitstablegivensize}, and let $[n] = \{1, \dots, n\}$ for and $n \in \mathbb{N}$.

Let there be a sample $S_n = \{(X_i, Y_i)\}_{i=1}^{n} \sim \bmu^{n} $. As in \thmref{agnosemitstablegivensize}, for any $I \in \I_n$ we have $S(I) = \{(X_i, Y_i) : i \in I\} \subseteq S_n$, for any $I \in \I_n$, define $\bar{I} := [n] \setminus I$. Let $\sigma : [n] \to [n]$ be a uniform random permutation of $[n]$, and for any $I=(i_1, \dots, i_\ell) \subseteq [n]$ define $\bsigma(I) := (\sigma(i_1), \dots, \sigma(i_\ell))$.

For any $I \subseteq [n]$ and any $\bb \in \{0,1\}^{b}$, let $\hat{h}_{I,\bb} := \reconfunc\left( \compfunccs\left(S\left(I\right)\right), \bb \right)$. Now, since $S(\bar{I})$ is independent of $S(I)$, \lemref{empiricalbernstein} (applied under the conditional distribution given $S(I)$) and the law of total probability imply that with probability $1-\frac{\delta}{2|\I_n| \cdot 2^{b}}$:
\begin{align*}
    \risk\left(\hat{h}_{I,\bb} \right) - \erisk\left(\hat{h}_{I,\bb}; S\left(\bar{I}\right)\right) &\le \erisk\left(\hat{h}_{I,\bb}; S\left(\bar{I}\right)\right) \sqrt{\frac{2\ln(\frac{4|\I_n|\cdot 2^b}{\delta})}{n-|I|-1}}\\
    &\quad + L \sqrt{\frac{2\ln(\frac{4|\I_n|\cdot 2^b}{\delta})}{n-|I|-1}} + \frac{7\ln(\frac{4|\I_n|\cdot 2^b}{\delta})}{3(n-|I|-1)}.
\end{align*}
Applying this under the conditional distribution given $\sigma$, together with the union bound and the law of total probability, we have that with probability at least $1-\frac{\delta}{2}$, every $I\in\I_n$ and every $\bb \in \{0,1\}^{b}$ has
\begin{align*}
    \risk\left(\hat{h}_{\bsigma^{-1}\left(I\right),\bb} \right) - \erisk\left(\hat{h}_{\bsigma^{-1}\left(I\right),\bb}; S\left(\overline{\bsigma^{-1}\left(I\right)}\right)\right) &\le \erisk\left(\hat{h}_{\bsigma^{-1}\left(I\right),\bb}; S\left(\overline{\bsigma^{-1}\left(I\right)}\right)\right) \sqrt{\frac{2\ln(\frac{4|\I_n|\cdot 2^b}{\delta})}{n-|I|-1}} \\
    &\quad + L \sqrt{\frac{2\ln(\frac{4|\I_n|\cdot 2^b}{\delta})}{n-|I|-1}} + \frac{7\ln(\frac{4|\I_n|\cdot 2^b}{\delta})}{3(n-|I|-1)}.
\end{align*}
In particular, let $\bi^*$ be the indices such that 
\beq\compfunccs(S_n) = \left\{ (X_i, Y_i) \in S_n: i \in \bi^* \right\} = S_n(\bi^*)\eeq
Now, by property (ii) of $\I_n$ there must exist $I^*\in \I_n$ such that
\beq \bsigma(\bi^*) \subseteq I^*, \eeq
which means:
\begin{align*}
    \Rightarrow \quad & \bi^* \subseteq \bsigma^{-1}(I^*) \\
    \Rightarrow \quad & \compfunccs(S_n) = S_n(\bi^*) \subseteq S_n(\bsigma^{-1}(I^*))
    .
\end{align*}
Due to the semi-stability property of $(\compfunc, \reconfunc)$ and since $S_n(\bsigma^{-1}(I^*)) \subseteq S_n$, this implies
\beq
    \Rightarrow \reconfunc(\compfunccs(S_n(\bsigma^{-1}(I^*))), \compfuncsi(S_n)) = \reconfunc(\compfunccs(S_n), \compfuncsi(S_n)) = \reconfunc(\compfunc\left( S_n \right))
    .
\eeq
Thus, on the above event of probability at least $1-\frac{\delta}{2}$ we can get for $I=I^*$ and $\bb=\compfuncsi(S_n)$,
\begin{align*}
    \risk\left(\reconfunc(\compfunc\left( S_n \right)) \right) - \erisk\left(\reconfunc(\compfunc\left( S_n \right)); S\left(\overline{\bsigma^{-1}\left(I^*\right)}\right)\right) &\le \erisk\left(\reconfunc(\compfunc\left( S_n \right)); S\left(\overline{\bsigma^{-1}\left(I^*\right)}\right)\right) \sqrt{\frac{2\ln(\frac{4|\I_n|\cdot 2^b}{\delta})}{n-|I^*|-1}} \\
    &\quad + L \sqrt{\frac{2\ln(\frac{4|\I_n|\cdot 2^b}{\delta})}{n-|I^*|-1}} + \frac{7\ln(\frac{4|\I_n|\cdot 2^b}{\delta})}{3(n-|I^*|-1)}.
\end{align*}
Furthermore, by property (i) of $I_n$ we have that $n - |I^*| \ge T_n$, and that $\erisk\left(\reconfunc(\compfunc\left( S_n \right)); S\left(\overline{\bsigma^{-1}\left(I^*\right)}\right)\right) \le \frac{n}{T_n} \erisk\left(\reconfunc(\compfunc\left( S_n \right)); S_n\right)$, so
\begin{align}
\label{eq:serr_12}
    \risk\left(\reconfunc(\compfunc\left( S_n \right)) \right) - \erisk\left(\reconfunc(\compfunc\left( S_n \right)); S\left(\overline{\bsigma^{-1}\left(I^*\right)}\right)\right) &\le \frac{n}{T_n}\erisk\left(\reconfunc(\compfunc\left( S_n \right)); S_n\right) \sqrt{\frac{2\ln(\frac{4|\I_n|\cdot 2^b}{\delta})}{T_n-1}} \\
    \nonumber
    &\quad + L \sqrt{\frac{2\ln(\frac{4|\I_n|\cdot 2^b}{\delta})}{T_n-1}} + \frac{7\ln(\frac{4|\I_n|\cdot 2^b}{\delta})}{3(T_n-1)}.
\end{align}
Now, given $S_n$, we define $\ell_i := \ell(\reconfunc(\compfunc(S_n))(X_i), Y_i)$
for $i \in [n]$. Now for any $I \in \I_n$, under the conditional distribution given $S_n$ we apply \lemref{samplingwithoutreplacement} and see that with probability at least $1-\frac{\delta}{2|\I_n|}$:
\begin{align*}
    \left| \frac{1}{n-|I|} \sum_{i \in \overline{\bsigma^{-1}\left(I\right)}} \ell_i - \erisk(\reconfunc(\compfunc(S_n)); S_n) \right| &\le \erisk(\reconfunc(\compfunc(S_n)); S_n) \sqrt{\frac{2\ln(4|\I_n|/\delta)}{n-|I|}} \\
    &\quad + L\sqrt{\frac{2\ln(4|\I_n|/\delta)}{n-|I|}} + \frac{2L\ln(4|\I_n|/\delta)}{3(n-|I|)}.
\end{align*}
By the union bound, this holds simultaneously for all $I \in \I_n$ with probability at least $1- \frac{\delta}{2}$. In particular, taking $I = I^*$, and recalling that $n - |I^*| \ge T_n$, on this event we have that
\begin{align}
\label{eq:serr_22}
    \left| \erisk(\reconfunc(\compfunc(S_n)); S(\overline{\bsigma^{-1}\left(I^*\right)})) - \erisk(\reconfunc(\compfunc(S_n)); S_n) \right| &\le \erisk(\reconfunc(\compfunc(S_n)); S_n) \sqrt{\frac{2\ln(4|\I_n|/\delta)}{T_n}} \\
    \nonumber
    &\quad + L\sqrt{\frac{2\ln(4|\I_n|/\delta)}{T_n}} + \frac{2L\ln(4|\I_n|/\delta)}{3T_n}
    .
\end{align}
By the union bound, the two events represented by \eqref{serr_12} and \eqref{serr_22} hold simultaneously with probability at least $1- \delta$, in which case together we get:
\begin{align*}
    \risk(\reconfunc(\compfunc(S_n))) - \erisk(\reconfunc(\compfunc(S_n)); S_n)  &\le \risk\left(\reconfunc(\compfunc\left( S_n \right)) \right) - \erisk\left(\reconfunc(\compfunc\left( S_n \right)); S\left(\overline{\bsigma^{-1}\left(I^*\right)}\right)\right) \\
    &\quad + \left| \erisk(\reconfunc(\compfunc(S_n)); S(\overline{\bsigma^{-1}\left(I^*\right)})) - \erisk(\reconfunc(\compfunc(S_n)); S_n) \right| \\
    &\le \erisk(\reconfunc(\compfunc(S_n)); S_n) \left( \left( 1+\frac{n}{T_n} \right) \sqrt{\frac{2\ln(\frac{4|\I_n|}{\delta})}{T_{n}-1}} + \frac{n}{T_n}\sqrt{\frac{2b\ln 2}{T_n-1}} \right) \\
    &\quad + 2L\sqrt{\frac{2\ln(\frac{4|\I_n|}{\delta})}{T_n-1}} + \frac{(7+2L)\ln(\frac{4|\I_n|}{\delta})}{3(T_n-1)} + L\sqrt{\frac{2b\ln 2}{T_n-1}} + \frac{7b\ln 2}{3(T_n-1)}.
\end{align*}
The theorem now follows from plugging the aforementioned family $\I_n$ from \citet{BousquetHMZ20}, with $ |\I_n|= \binom{2k}{k} < 4^k$ and $ T_n = k \floor{\frac{n}{2k}} > \frac{n-2k}{2}$
---
with the modification thatsince $n > 4k + 4$ we have $\frac{n-2k}{2}-1> \frac{n}{4}$, meaning $T_n > T_n -1 > \frac{n}{4}$:
\begin{align*}
    \risk(\reconfunc(\compfunc(S_n))) - \erisk(\reconfunc(\compfunc(S_n)); S_n) &\le \erisk(\reconfunc(\compfunc(S_n)); S_n) \left( 5 \sqrt{\frac{8\left(\ln(\frac{4}{\delta}) + k\ln 4 \right)}{n}} + 4\sqrt{\frac{8b\ln 2}{n}} \right) \\
    &\quad + 2L\sqrt{\frac{8\left(\ln(\frac{4}{\delta}) + k\ln 4 \right)}{n}} + \frac{(28+8L)\left(\ln(\frac{4}{\delta}) + k\ln 4 \right)}{3n}\\
    &\quad + L\sqrt{\frac{8b\ln 2}{n}} + \frac{28b\ln 2}{3n}.
\end{align*}

\enpf

\subsubsection{Semi-stable compression scheme of bounded sample-dependent size}

\begin{theorem} \label{thm:agnosemitstableboundedsizewithinterpol}
Let $(\compfunc, \reconfunc)$ be any semi-stable compression scheme with side-information. For any distribution $\bmu$ over $\X \times \Y$, any $n \in \mathbb{N}$, and any $\delta \in (0,1)$, for $S_n \sim \bmu^n$, with probability at least $1 - \delta$, if $|\compfunccs(S_n)| < \frac{n}{4}-1$ then

\begin{align*}
    \risk(\reconfunc(\compfunc(S_n))) - \erisk(\reconfunc(\compfunc(S_n)); S_n) &\le
    \erisk(\reconfunc(\compfunc(S_n)); S_n) \left( 5 \sqrt{\frac{8T_{\compfunc, S_n}}{n}} + 4\sqrt{\frac{8|\compfuncsi(S_n)|\ln 2}{n}} \right) \\
    &\quad + 2L\sqrt{\frac{8T_{\compfunc, S_n}}{n}} + \frac{(28+8L)T_{\compfunc, S_n}}{3n} \\
    &\quad + L\sqrt{\frac{8|\compfuncsi(S_n)|\ln 2}{n}} + \frac{28|\compfuncsi(S_n)|\ln 2}{3n} ,
\end{align*}

where 

\beq
    T_{\compfunc, S_n} := \ln\left(\frac{4(|\compfunccs(S_n)|+1)(|\compfunccs(S_n)|+2)(|\compfuncsi(S_n)|+1)(|\compfuncsi(S_n)|+2)}{\delta}\right) + |\compfunccs(S_n)|\ln 4.
\eeq

\end{theorem}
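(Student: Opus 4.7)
The plan is a standard peeling/union-bound argument over all possible pairs $(k, b)$ of compression-set size and side-information length, layered on top of \thmref{agnosemitstablegivensizewithinterpol}. First I would choose the per-pair confidence parameters
\[
\delta_{k,b} := \frac{\delta}{(k+1)(k+2)(b+1)(b+2)}, \qquad k, b \in \N \cup \{0\},
\]
and observe, via the telescoping identity $\sum_{k\ge 0} \frac{1}{(k+1)(k+2)} = 1$ (and likewise for $b$), that $\sum_{k,b \ge 0} \delta_{k,b} = \delta$.

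Next, for each pair $(k, b)$ with $n > 4k+4$, I would invoke \thmref{agnosemitstablegivensizewithinterpol} at confidence $\delta_{k,b}$, applied to an auxiliary semi-stable scheme of size at most $k$ using at most $b$ bits that agrees with $(\compfunc, \reconfunc)$ on the event $E_{k,b} := \{|\compfunccs(S_n)| = k \wedge |\compfuncsi(S_n)| = b\}$ and outputs a fixed default otherwise. Taking a union bound across all $(k, b)$ yields a single event of probability at least $1-\delta$ on which the conclusion of \thmref{agnosemitstablegivensizewithinterpol} (with $\delta \mapsto \delta_{k,b}$ and with $(k,b)$ the realized sizes) holds whenever $n > 4k+4$.

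Under the hypothesis $|\compfunccs(S_n)| < n/4 - 1$, we have $n > 4k + 4$ for the realized $k = |\compfunccs(S_n)|$, so the fixed-size bound applies with $b = |\compfuncsi(S_n)|$ in place of the theorem's $b$. At this point the proof is essentially a substitution: the quantity $\ln(4/\delta_{k,b}) + k \ln 4$ expands to
\[
\ln\!\left(\frac{4(k+1)(k+2)(b+1)(b+2)}{\delta}\right) + k \ln 4 \;=\; T_{\compfunc, S_n},
\]
while the remaining $b$-dependent and $L$-dependent terms in \thmref{agnosemitstablegivensizewithinterpol} are already of the form displayed in the statement once $b$ is read as $|\compfuncsi(S_n)|$.

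The main obstacle --- and the one point that needs care beyond bookkeeping --- is the construction of the auxiliary per-pair fixed-size scheme. Because $(\compfunc, \reconfunc)$ has a \emph{sample-dependent} output size, one must restrict to one pair $(k,b)$ at a time in a way that both preserves semi-stability and agrees with the original on $E_{k,b}$. This is the standard device used in the sample-compression literature (cf.~\citet{BousquetHMZ20, DBLP:conf/alt/HannekeK21}): fix a canonical default hypothesis, and modify the reconstruction to return this default whenever the realized sizes disagree with $(k,b)$. Semi-stability of the modified scheme then follows from that of the original, because for any $S' \supseteq \compfunccs(S_n)$ with $S' \subseteq S_n$ the realized sizes (and hence the branch taken) are the same on $S'$ as on $S_n$. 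Once this is in place, the union bound and substitution above complete the proof.
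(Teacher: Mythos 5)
Your proposal is correct and follows essentially the same route as the paper's proof: a union bound over all pairs $(k,b)$ with weights $\delta/((k+1)(k+2)(b+1)(b+2))$, applied to per-pair restrictions of the scheme that agree with $(\compfunc,\reconfunc)$ when the realized sizes match and default otherwise, followed by substitution of the realized $k=|\compfunccs(S_n)|$ and $b=|\compfuncsi(S_n)|$ into the fixed-size bound of \thmref{agnosemitstablegivensizewithinterpol}. The only cosmetic difference is that the paper's auxiliary scheme $\compfunc_{k,b}$ agrees with $\compfunc$ on the event $\{|\compfunccs(S_n)|\le k \wedge |\compfuncsi(S_n)|\le b\}$ rather than on the exact-match event, which changes nothing of substance.
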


\bepf

Let $(\compfunc, \reconfunc)$ be any semi-stable compression scheme with side-information. Now for each $k \in \mathbb{N} \cup \{0\}$ and $b \in \mathbb{N} \cup \{0\}$, let $(\compfunc_{k,b}, \reconfunc)$ be a compression scheme such that, for any 
$S_n \sim \mu^n$, if $|\compfunccs(S)|\le k$ and $|\compfuncsi(S)|\le b$, then $\compfunc_{k,b}(S) = \compfunc(S)$, and otherwise $(\compfunccs)_{k,b}(S_n) = \emptyset$ and $(\compfuncsi)_{k,b}(S_n) = \emptyset$. In particular, note that $|(\compfunccs)_{k,b}(S)|\le k$ and $|(\compfuncsi)_{k,b}(S)|\le b$ always. Thus, for each $k$ and $b$, \thmref{agnosemitstablegivensizewithinterpol} implies that for any 
$n
> 4k+4$ 
\begin{align*}
    \left| \risk(\reconfunc(\compfunc(S_n))) - \erisk(\reconfunc(\compfunc(S_n)); S_n) \right| &\le \erisk(\reconfunc(\compfunc(S_n)); S_n) \left( 5 \sqrt{\frac{8t_{k,b}}{n}} + 4\sqrt{\frac{8b\ln 2}{n}} \right) \\
    &\quad + 2L\sqrt{\frac{8t_{k,b}}{n}} + \frac{(28+8L)t_{k,b}}{3n}\\
    &\quad + L\sqrt{\frac{8b\ln 2}{n}} + \frac{28b\ln 2}{3n} 
\end{align*}
holds
with probability at least $1 - \frac{\delta}{(k+1)(k+2)(b+1)(b+2)}$,
where 
\beq
    t_{k,b} := \ln\left(\frac{4(k+1)(k+2)(b+1)(b+2)}{\delta}\right) + k\ln 4.
\eeq

By the union bound, the above claim holds simultaneously for all $k \in \mathbb{N} \cup \{0\}$ and $b \in \mathbb{N} \cup \{0\}$ with probability at least $1 - \sum_{k,b}\frac{\delta}{(k+1)(k+2)(b+1)(b+2)} = 1 - \delta$. Finally, note that there necessarily exists some $k \in \mathbb{N} \cup \{0\}$ and $b \in \mathbb{N} \cup \{0\}$ for which $|\compfunccs(S)| = k$ and $|\compfuncsi(S)| = b$, in which case $\reconfunc(\compfunc(S)) = \reconfunc(\compfunc_{k,b}(S))$ for these $k$ and $b$. 
This completes the proof.
\enpf

\subsubsection{Semi-stable compression scheme of any sample-dependent size
}
\label{ap:agnosemitstablewithinterpol}

\begin{theorem*}\textup{\ref{thm:agnosemitstablewithinterpol}}.
\agnosemitstablewithinterpol
\end{theorem*}

\bepf
Let $(\compfunc, \reconfunc)$ be any semi-stable compression scheme with side-information. We will observe the RHS of \thmref{agnosemitstableboundedsizewithinterpol}. Since for any $x \ge \sqrt{3}$ $\ln(x^2) < \frac{x}{2}$, firstly we have:
\begin{align*}
    T_{\compfunc, S_n} &:= \ln\left(\frac{4(|\compfunccs(S_n)|+1)(|\compfunccs(S_n)|+2)(|\compfuncsi(S_n)|+1)(|\compfuncsi(S_n)|+2)}{\delta}\right) + |\compfunccs(S_n)|\ln 4 \\
    &\le \ln(\frac{4}{\delta}) + |\compfunccs(S_n)|\ln 4 + \frac{|\compfunccs(S_n)| + 2}{2} + \frac{|\compfuncsi(S_n)| + 2}{2} \\
    &= |\compfunccs(S_n)|\ln(4\sqrt{e}) + \frac{1}{2}|\compfuncsi(S_n)| + \ln(\frac{4e^2}{\delta})
\end{align*}
Next, 
let us abbreviate the right-hand side of the bound in
\thmref{agnosemitstableboundedsizewithinterpol}:
\begin{gather*}
    \erisk(\reconfunc(\compfunc(S_n)); S_n) \left( 5 \sqrt{\frac{8T_{\compfunc, S_n}}{n}} + 4\sqrt{\frac{8|\compfuncsi(S_n)|\ln 2}{n}} \right) + 2L\sqrt{\frac{8T_{\compfunc, S_n}}{n}} + \frac{(28+8L)T_{\compfunc, S_n}}{3n} \\
    + L\sqrt{\frac{8|\compfuncsi(S_n)|\ln 2}{n}} + \frac{28|\compfuncsi(S_n)|\ln 2}{3n} \\
    \quad := A\subI\erisk(\reconfunc(\compfunc(S_n)); S_n) + A\subII,
\end{gather*}
\begin{align*}
     A\subI &:= 5 \sqrt{\frac{8T_{\compfunc, S_n}}{n}} + 4\sqrt{\frac{8|\compfuncsi(S_n)|\ln 2}{n}} \\
     &\le 5 \sqrt{\frac{8(|\compfunccs(S_n)|\ln(4\sqrt{e}) + \frac{1}{2}|\compfuncsi(S_n)| + \ln(\frac{4e^2}{\delta}))}{n}} + 4\sqrt{\frac{8|\compfuncsi(S_n)|\ln 2}{n}} \\
     &\le 5 \sqrt{\frac{8|\compfunccs(S_n)|\ln(4\sqrt{e})}{n}} + 5 \sqrt{\frac{4|\compfuncsi(S_n)|}{n}} + 5 \sqrt{\frac{8\ln(\frac{4e^2}{\delta})}{n}} + 4\sqrt{\frac{8|\compfuncsi(S_n)|\ln 2}{n}} \\
     &= 5\sqrt{8\ln(4\sqrt{e})} \cdot \sqrt{\frac{|\compfunccs(S_n)|}{n}} + (10 + 4\sqrt{8\ln 2})\sqrt{\frac{|\compfuncsi(S_n)|}{n}} + 5\sqrt{8} \sqrt{\frac{\ln(\frac{4e^2}{\delta})}{n}} \\
     &\le 20\sqrt{\frac{|\compfunccs(S_n)|}{n}} + 20\sqrt{\frac{|\compfuncsi(S_n)|}{n}} + 15\sqrt{\frac{\ln(\frac{4e^2}{\delta})}{n}} \\
     &:= B\subI,
\end{align*}
\begin{align*}
A\subII 
&:= 
2L\sqrt{\frac{8T_{\compfunc, S_n}}{n}} 
+ \frac{(28+8L)T_{\compfunc, S_n}}{3n} 
+ L\sqrt{\frac{8|\compfuncsi(S_n)|\ln 2}{n}} 
+ \frac{28|\compfuncsi(S_n)|\ln 2}{3n} 
\\&\le 
2L\sqrt{\frac{8\left( |\compfunccs(S_n)|\ln(4\sqrt{e}) + \frac{1}{2}|\compfuncsi(S_n)| + \ln(\frac{4e^2}{\delta}) \right)}{n}} 
\\&\quad
+ \frac{(28+8L)\left( |\compfunccs(S_n)|\ln(4\sqrt{e}) + \frac{1}{2}|\compfuncsi(S_n)|
+ \ln(\frac{4e^2}{\delta}) \right)}{3n} 
\\&\quad
+ L\sqrt{\frac{8|\compfuncsi(S_n)|\ln 2}{n}} + \frac{28|\compfuncsi(S_n)|\ln 2}{3n} 
\\&\quad\le 
2L \sqrt{\frac{8|\compfunccs(S_n)|\ln(4\sqrt{e})}{n}} + 2L \sqrt{\frac{4|\compfuncsi(S_n)|}{n}} + 2L \sqrt{\frac{8\ln(\frac{4e^2}{\delta})}{n}} \\
&\quad + \frac{(28+8L)\left( |\compfunccs(S_n)|\ln(4\sqrt{e}) + \frac{1}{2}|\compfuncsi(S_n)| + \ln(\frac{4e^2}{\delta}) \right)}{3n} \\
&\quad + L\sqrt{\frac{8|\compfuncsi(S_n)|\ln 2}{n}} + \frac{28|\compfuncsi(S_n)|\ln 2}{3n} \\
&= \frac{(28+8L)\ln(4\sqrt{e})}{3}\frac{|\compfunccs(S_n)|}{n}+2L\sqrt{8\ln(4\sqrt{e})}\sqrt{\frac{|\compfunccs(S_n)|}{n}} + \\
&\quad + \frac{14+4L+28\ln 2}{3}\frac{|\compfuncsi(S_n)|}{n} + (4+\sqrt{8\ln 2})L\sqrt{\frac{|\compfuncsi(S_n)|}{n}} \\
&\quad + \frac{28+8L}{3}\frac{\ln(\frac{4e^2}{\delta})}{n} + 2\sqrt{8}L\sqrt{\frac{\ln(\frac{4e^2}{\delta})}{n}} \\
&\le (6L+18)\frac{|\compfunccs(S_n)|}{n}+8L\sqrt{\frac{|\compfunccs(S_n)|}{n}} + \\
&\quad + (2L+12)\frac{|\compfuncsi(S_n)|}{n} + 7L\sqrt{\frac{|\compfuncsi(S_n)|}{n}} \\
&\quad + (3L+10)\frac{\ln(\frac{4e^2}{\delta})}{n} + 6L\sqrt{\frac{\ln(\frac{4e^2}{\delta})}{n}} \\
&=\left(6\frac{|\compfunccs(S_n)|}{n} + 8\sqrt{\frac{|\compfunccs(S_n)|}{n}} + 2\frac{|\compfuncsi(S_n)|}{n} + 7\sqrt{\frac{|\compfuncsi(S_n)|}{n}} + 3\frac{\ln(\frac{4e^2}{\delta})}{n}+6\sqrt{\frac{\ln(\frac{4e^2}{\delta})}{n}}\right)L \\
&\quad + 18\frac{|\compfunccs(S_n)|}{n} + 12\frac{|\compfuncsi(S_n)|}{n} + 10\frac{\ln(\frac{4e^2}{\delta})}{n} \\
&:= B\subII.
\end{align*}
Let $\bmu$ be any distribution over $\X \times \Y$, and let $n \in \mathbb{N}$, and $\delta \in (0,1)$. From \thmref{agnosemitstableboundedsizewithinterpol} we know that for $S_n \sim \bmu^n$, with probability at least $1 - \delta$, if $|\compfunccs(S_n)| < \frac{n}{4}-1$ then
\begin{align}
\label{eq:boundfromtheorem}
    \risk(\reconfunc(\compfunc(S_n))) - \erisk(\reconfunc(\compfunc(S_n)); S_n) &\le
    A\subI\erisk(\reconfunc(\compfunc(S_n)); S_n) + A\subII \\
    \nonumber
    &\le B\subI\erisk(\reconfunc(\compfunc(S_n)); S_n) + B\subII.
\end{align}
Now, for $B\subI$ we note that even if $|\compfunccs(S_n)| \ge \frac{n}{4}-1$ we have:
\begin{align*}
    B\subI &:= 20\sqrt{\frac{|\compfunccs(S_n)|}{n}} + 20\sqrt{\frac{|\compfuncsi(S_n)|}{n}} + 15\sqrt{\frac{\ln(\frac{4e^2}{\delta})}{n}} \\
    &\ge 0 \ge -1 \\
\end{align*}
and for $B\subII$, we observe firstly that if $n \le 4$:
\begin{align*}
    B\subII &:= \left(6\frac{|\compfunccs(S_n)|}{n} + 8\sqrt{\frac{|\compfunccs(S_n)|}{n}} + 2\frac{|\compfuncsi(S_n)|}{n} + 7\sqrt{\frac{|\compfuncsi(S_n)|}{n}} + 3\frac{\ln(\frac{4e^2}{\delta})}{n}+6\sqrt{\frac{\ln(\frac{4e^2}{\delta})}{n}}\right)L \\
    &\quad + 18\frac{|\compfunccs(S_n)|}{n} + 12\frac{|\compfuncsi(S_n)|}{n} + 10\frac{\ln(\frac{4e^2}{\delta})}{n} \\
    &\ge \left(\frac{3}{2} + 8\sqrt{\frac{1}{4}} + 2\frac{|\compfuncsi(S_n)|}{n} + 7\sqrt{\frac{|\compfuncsi(S_n)|}{n}} + 3\frac{\ln(\frac{4e^2}{\delta})}{n}+6\sqrt{\frac{\ln(\frac{4e^2}{\delta})}{n}}\right)L \\
    &\quad + \frac{9}{2} + 12\frac{|\compfuncsi(S_n)|}{n} + 10\frac{\ln(\frac{4e^2}{\delta})}{n} \\
    &\ge L
\end{align*}
and now even if $n \ge 5$ and $|\compfunccs(S_n)| \ge \frac{n}{4}-1$:
\begin{align*}
    B\subII &:= \left(6\frac{|\compfunccs(S_n)|}{n} + 8\sqrt{\frac{|\compfunccs(S_n)|}{n}} + 2\frac{|\compfuncsi(S_n)|}{n} + 7\sqrt{\frac{|\compfuncsi(S_n)|}{n}} + 3\frac{\ln(\frac{4e^2}{\delta})}{n}+6\sqrt{\frac{\ln(\frac{4e^2}{\delta})}{n}}\right)L \\
    &\quad + 18\frac{|\compfunccs(S_n)|}{n} + 12\frac{|\compfuncsi(S_n)|}{n} + 10\frac{\ln(\frac{4e^2}{\delta})}{n} \\
    &\ge \left(6\left(\frac{1}{4} - \frac{1}{n}\right) + 8\sqrt{\frac{1}{4} - \frac{1}{n}} + 2\frac{|\compfuncsi(S_n)|}{n} + 7\sqrt{\frac{|\compfuncsi(S_n)|}{n}} + 3\frac{\ln(\frac{4e^2}{\delta})}{n}+6\sqrt{\frac{\ln(\frac{4e^2}{\delta})}{n}}\right)L \\
    &\quad + 18\left(\frac{1}{4} - \frac{1}{n} \right) + 12\frac{|\compfuncsi(S_n)|}{n} + 10\frac{\ln(\frac{4e^2}{\delta})}{n} \\
    &\ge \left(\frac{6}{20} + 8\sqrt{\frac{1}{20}} + 2\frac{|\compfuncsi(S_n)|}{n} + 7\sqrt{\frac{|\compfuncsi(S_n)|}{n}} + 3\frac{\ln(\frac{4e^2}{\delta})}{n}+6\sqrt{\frac{\ln(\frac{4e^2}{\delta})}{n}}\right)L \\
    &\quad + \frac{18}{20} + 12\frac{|\compfuncsi(S_n)|}{n} + 10\frac{\ln(\frac{4e^2}{\delta})}{n} \\
    &\ge L.
\end{align*}
Thus, even if $|\compfunccs(S_n)| \ge \frac{n}{4}-1$, we have
\begin{align}
\label{eq:othercase}
    B\subI\erisk(\reconfunc(\compfunc(S_n)); S_n) + B\subII &\ge L - \erisk(\reconfunc(\compfunc(S_n)); S_n)\\
    \nonumber
    &\ge \risk(\reconfunc(\compfunc(S_n))) - \erisk(\reconfunc(\compfunc(S_n)); S_n).
\end{align}
Finally, the theorem follows from \eqref{boundfromtheorem} and \eqref{othercase}.
\enpf

\section{Diameter-truncating BIE $\Y$
}
Recall the bounded-in-expectation
(BIE)
condition: 
$\E_{(X,Y)\sim\bmu}\ell(y_0,Y)<\infty$
for some $y_0\in\Y$.
Under BIE, the trivial $f_0(x)\equiv y_0$
achieves $R(f_0)<\infty$,
and a fortiori, $R^*=\inf_f R(f)<\infty$.
\begin{lemma}
\label{lem:y0}
If there exists {\em some}
$y_0\in\Y$ for which
$\E_{(X,Y)\sim\bmu}\ell(y_0,Y)<\infty$,
then this holds for {\em all} $y\in\Y$.
\end{lemma}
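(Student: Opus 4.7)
The plan is to apply the triangle inequality for $\ell$, which is assumed to be a genuine metric in the setting where this lemma is invoked (the unbounded-diameter case of Section~\ref{sec:unboundeddiam}). For any fixed $y \in \Y$, the triangle inequality gives the pointwise bound
\[
\ell(y, Y) \;\le\; \ell(y, y_0) + \ell(y_0, Y)
\]
valid for every realization of $Y$.

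Next I would integrate both sides against $\bmu$ and invoke monotonicity of the expectation to obtain
\[
\E_{(X,Y)\sim\bmu}\ell(y, Y) \;\le\; \ell(y, y_0) + \E_{(X,Y)\sim\bmu}\ell(y_0, Y).
\]
The right-hand side is manifestly finite: the first summand is a fixed finite distance between two specified points of $\Y$, and the second is finite by the BIE hypothesis witnessed by $y_0$. Hence $\E_{(X,Y)\sim\bmu}\ell(y, Y) < \infty$ for every $y \in \Y$, which is exactly the conclusion.

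I do not foresee any real obstacle; the proof collapses to a one-line triangle-inequality estimate. The only minor technical point is measurability of the maps $y' \mapsto \ell(y, y')$ and $y' \mapsto \ell(y_0, y')$, which is automatic since $\ell$, being a metric, is continuous and hence Borel-measurable on $\Y \times \Y$. A useful byproduct of this argument is that the BIE property is \emph{witness-independent}, in the sense that \emph{any} fixed element of $\Y$ can serve as the base point in its definition; this is precisely what justifies the footnote remark preceding the statement that one may always choose $y_0$ to be, say, the lexicographically first element of $\Y$ under the canonical ordering.
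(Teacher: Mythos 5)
Your proof is correct and is essentially identical to the paper's: both apply the triangle inequality $\ell(y,Y)\le\ell(y,y_0)+\ell(y_0,Y)$ pointwise, take expectations, and conclude finiteness from the BIE hypothesis. Nothing further is needed.
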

\begin{proof}
Suppose that $\E_{(X,Y)\sim\bmu}\ell(y_0,Y)<\infty$
for some 
$y_0\in\Y$. Then, 
by the triangle inequality,
for any other $y'\in\Y$, we have
\beq
\E_{(X,Y)\sim\bmu}\ell(y',Y)
\le
\E_{(X,Y)\sim\bmu}[
\ell(y',y_0)
+
\ell(y_0,Y)
]
=
\ell(y',y_0)
+
\E_{(X,Y)\sim\bmu}
\ell(y_0,Y)
<
\infty.
\eeq
\end{proof}

Thus, the choice of $y_0\in\Y$
is immaterial; let us fix one such
element once and for all.
For any sequence
$L_n\uparrow\infty$,
let $\trunc{\Y}_{n}:=B(y_0,L_n)$
denote the ``$L_n$-truncated''
space.

We observe that
\beq
f^*(x)
:=\argmin_{y'\in\Y}
\E[\ell(y',Y)\gn X=x]
\eeq
(where ties in $\Y$ are broken lexicographically) achieves
$R(f^*)=R^*$,
since it is a pointwise minimizer of the non-negative risk integrand.
Let us
also
define a truncated version:
\beq
f_n^*(x)
:=
\argmin_{\hat{y}\in\trunc{\Y}_{
n}}
\E[\ell(\hat{y},Y)\gn X=x]
.
\eeq
Since $y_0\in\trunc{\Y}_{n}$,
we have that
\beq
g_n(x):=
\E[\ell(f_n^*(x),Y|X=x)]\le 
\E[\ell(y_0,Y|X=x)]
=:h(x).
\eeq
While one or both of
$g_n,h$
may be infinite
for some $x$,
BIE implies that $h$
is integrable.
Next, we claim that
\beq
\lim_{n\to\infty}g_n(x)
=
g(x)
:=
\E[\ell(f^*(x),Y|X=x)],
\qquad
x\in\X
.
\eeq
Indeed, this follows from a stronger
property:
there is a function $N:\X\to\N$
such that
$f_n^*(x)=f^*(x)$
for all $x$
and all $n\ge N(x)$;
this is immediate by construction
and because $L_n\uparrow\infty$.
Applying Lebesgue's Dominated Convergence Theorem 
to the sequence
$g_n\le h$
yields
\begin{theorem}
\label{thm:trunc-risk}
If $\Y$ is BIE, $f^*$
is a Bayes-optimal predictor and $f^*_n$
is its truncation as defined above, then
\beq
\lim_{n\to\infty}R(f_n^*)
=R(f^*)=R^*.
\eeq
\end{theorem}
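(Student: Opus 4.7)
The plan is to apply Lebesgue's Dominated Convergence Theorem to the conditional-risk sequence $g_n(x) := \E[\ell(f_n^*(x),Y)\mid X=x]$, with dominant $h(x) := \E[\ell(y_0,Y)\mid X=x]$ and pointwise limit $g(x) := \E[\ell(f^*(x),Y)\mid X=x]$. The three DCT ingredients (pointwise convergence, domination, integrable dominant) have essentially been set up in the discussion preceding the theorem, so the proof amounts to cleanly verifying each of them and then integrating.

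First I would establish pointwise convergence $g_n(x)\to g(x)$ for each $x\in\X$. Since $\ell(y_0,f^*(x))<\infty$ and $L_n\uparrow\infty$, there is some $N(x)\in\N$ with $f^*(x)\in B(y_0,L_n)=\trunc{\Y}_n$ for all $n\ge N(x)$. For such $n$, the argmin $f_n^*(x)$ over $\trunc{\Y}_n$ must coincide with $f^*(x)$: $f^*(x)$ already achieves the infimum of $\E[\ell(\cdot,Y)\mid X=x]$ over the larger set $\Y$, and it is the lexicographically first such minimizer, so it remains the lexicographic minimizer in any subset containing it. Hence $g_n(x)=g(x)$ eventually, giving pointwise convergence on all of $\X$.

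Next, domination: since $y_0\in\trunc{\Y}_n$ for every $n$, the defining minimization yields $g_n(x)\le h(x)$ everywhere. Integrability of $h$ follows from the BIE property together with Tonelli:
\beq
\E_{X\sim\mu}[h(X)] \;=\; \E_{(X,Y)\sim\bmu}\ell(y_0,Y) \;<\;\infty.
\eeq
With these three facts in hand, DCT gives $\int g_n\,d\mu\to\int g\,d\mu$, which is precisely $R(f_n^*)\to R(f^*)=R^*$. The only subtlety worth flagging is measurability of $f_n^*$ and $f^*$ (needed so that $g_n$ and $g$ are bona fide measurable integrands); this is standard and is handled by the fixed measurable lexicographic total order on $\Y$ assumed in Section~\ref{SEC:def-not}, so no real obstacle remains.
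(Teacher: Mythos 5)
Your proof is correct and follows essentially the same route as the paper: domination of $g_n$ by the integrable $h$ via $y_0\in\trunc{\Y}_n$, pointwise convergence via the eventual stabilization $f_n^*(x)=f^*(x)$ for $n\ge N(x)$, and then Lebesgue's Dominated Convergence Theorem. Your explicit justification of the stabilization step (lexicographic minimizer over $\Y$ remains the lexicographic minimizer over any subset containing it) and the measurability remark merely spell out what the paper calls ``immediate by construction.''
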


\section{Discretizing separable $(\Y,\ell)$}
\label{sec:discr}

For any separable $(\Y,\ell)$
and $\eps>0$,
any $\eps$-net $
\Y_\eps
\subseteq\Y$
is always countable.
Define 
$\pi_\eps(y)$
as the closest element to $y$ in
$
\Y_\eps
$,
breaking ties lexicographically.
Then the Voronoi cell $V(y)$
about each $y\in\Y_\eps$
is given by
$V(y)=\set{y'\in\Y:\pi_\eps(y')=y}$.
Any 
probability measure
$\bmu$ 
on the product Borel $\sigma$-algebra
on $\X\times\Y$
induces the product measure $\bmu_\eps$
on $\X\times \Y_\eps$
as follows.
By \citet[Appendix F, Theorem 1]{pollard02}, 
the measure $\bmu$
admits the disintegration
intro $\mu^\Y\otimes\Lambda$,
where $\mu^\Y$ is the $\Y$-marginal of
$\bmu$
and $\Lambda(y,E)=
\P_{(X,Y)\sim\bmu}
(X\in E\gn Y=y)
$
is the conditional kernel.
Define the
measure
$\mu^\Y_\eps$
on
$\Y_\eps$
by
$\mu^\Y_\eps(y)=
\mu^\Y(V(y))
$
and the product measure
$\bmu_\eps=\mu^\Y_\eps\otimes\Lambda$
on $\X\times\Y_\eps$.
Let $R^*$,
$R^*_\eps$
be the Bayes-optimal risk
under $\bmu$
and $\bmu_\eps$,
respectively.

\begin{theorem}
\label{thm:discr}
\beq
\lim_{\eps\to0}
R^*_\eps
=
R^*.
\eeq
\end{theorem}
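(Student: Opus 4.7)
The approach is to prove the theorem by sandwiching $R^*_\eps$ between $R^* - \eps$ and $R^* + 2\eps + \delta$ (for arbitrary $\delta > 0$), exploiting the key observation that maximality of the $\eps$-net forces $\ell(y, \pi_\eps(y)) < \eps$ for every $y \in \Y$. Throughout, I treat $\bmu_\eps$ as the pushforward of $\bmu$ under $(x,y) \mapsto (x, \pi_\eps(y))$, so that for every measurable $g : \X \to \Y_\eps$ one has the identity $R_\eps(g) = \E_{(X,Y) \sim \bmu}[\ell(g(X), \pi_\eps(Y))]$. All expectations below are finite under BIE.

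For the upper bound $\limsup_{\eps\to0} R^*_\eps \le R^*$, I would fix $\delta > 0$ and choose a measurable $f : \X \to \Y$ with $\risk(f) \le R^* + \delta$, which exists by definition of the infimum $R^*$ together with the measurable lexicographic order on $\Y$ guaranteed in Section~\ref{SEC:def-not}. Project $f$ onto the net via $g_\eps := \pi_\eps \circ f$, which is measurable because the Voronoi cells $V(y)$ partition $\Y$ into countably many Borel sets and $f$ is measurable. Two applications of the triangle inequality give
\[
\ell(g_\eps(X), \pi_\eps(Y)) \;\le\; \ell(\pi_\eps(f(X)), f(X)) + \ell(f(X), Y) + \ell(Y, \pi_\eps(Y)) \;\le\; 2\eps + \ell(f(X), Y),
\]
so that $R_\eps(g_\eps) \le \risk(f) + 2\eps \le R^* + \delta + 2\eps$, hence $R^*_\eps \le R^* + \delta + 2\eps$. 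Sending $\eps \to 0$ then $\delta \to 0$ finishes this half.

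For the lower bound $\liminf_{\eps\to0} R^*_\eps \ge R^*$, observe that any measurable $g : \X \to \Y_\eps$ is also a valid measurable map $\X \to \Y$ (since $\Y_\eps \subseteq \Y$), so $\risk(g) \ge R^*$. A single reverse-triangle-inequality application yields
\[
R_\eps(g) \;=\; \E_{\bmu}[\ell(g(X), \pi_\eps(Y))] \;\ge\; \E_{\bmu}[\ell(g(X), Y)] - \E_{\bmu}[\ell(Y, \pi_\eps(Y))] \;\ge\; \risk(g) - \eps \;\ge\; R^* - \eps.
\]
Taking the infimum over all such $g$ delivers $R^*_\eps \ge R^* - \eps$. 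Combining the two one-sided limits gives $\lim_{\eps\to0} R^*_\eps = R^*$.

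The main obstacle I anticipate is not the quantitative estimate itself -- which is essentially a one-line triangle-inequality argument in each direction -- but the measurability bookkeeping: verifying that $g_\eps = \pi_\eps \circ f$ is measurable (reduces to countability of $\Y_\eps$ and Borelness of the $V(y)$, both of which follow from the measurable lexicographic order), and producing a measurable $\delta$-approximate Bayes predictor $f$ (standard measurable-selection using separability of $(\Y,\ell)$). A secondary subtlety is reconciling the formal definition $\bmu_\eps = \mu^\Y_\eps \otimes \Lambda$ with the pushforward viewpoint employed above, which is the interpretation consistent with how \thmref{comp-consist4} invokes the result.
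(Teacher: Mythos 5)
Your proof is correct, and it takes a genuinely different route from the paper's. The paper's argument reduces to the case where $(x,y)\mapsto\ell(h^*(x),y)$ is Lipschitz via an $L_1$-approximation lemma from \citet{hksw21}, then projects the Bayes predictor onto the net $\Y_\eps$ and invokes the Lipschitz property to control the risk perturbation; it is a sketch that only makes the ``upper'' direction explicit. You instead argue both directions from scratch using only the triangle inequality and the covering property $\ell(y,\pi_\eps(y))<\eps$ of a maximal $\eps$-separated set: projecting a $\delta$-approximate minimizer onto the net for the upper bound, and lifting net-valued predictors back into $\Y$ for the lower bound. This avoids the Lipschitz-approximation machinery entirely, is more elementary and more complete (your lower bound is carried out explicitly, whereas the paper leaves it implicit), and the quantitative form $R^*-\eps\le R^*_\eps\le R^*+2\eps+\delta$ is a small bonus. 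The one genuine point of divergence is the meaning of $\bmu_\eps$: the paper's literal definition $\mu^\Y_\eps\otimes\Lambda$ reuses the disintegration kernel $\Lambda(y,\cdot)$ at the exact net points (which is only defined up to $\mu^\Y$-null sets and differs from the pushforward, whose conditional given a net point $y$ averages $\Lambda(y',\cdot)$ over the Voronoi cell $V(y)$). You correctly flag this and adopt the pushforward reading, which is the one consistent with how \thmref{comp-consist4} feeds the discretized sample $S_n'$ into the algorithm, so your proof is sound for the intended application; under the paper's literal definition an additional continuity-of-$\Lambda$ argument (or the paper's Lipschitz reduction) would be needed, and that caveat applies equally to the paper's own sketch.
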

\bepf
Appealing to a standard
truncation argument,
we assume without loss of generality
that $R^*<\infty$.
The product metric
$\rho\oplus\ell$
on $\Z=\X\times\Y$,
given by
$\rho\oplus\ell((x,y),(x',y'))
=
\rho(x,x')
+
\ell(y,y')
$
renders
$(\Z,\rho\oplus\ell,\bmu)$
and
$(\Z_\eps,\rho\oplus\ell,\bmu_\eps)$
metric probability spaces.
Let $h^*:\X\to\Y$ be
the Bayes-optimal predictor
for 
$(\Z,\rho\oplus\ell,\bmu)$,
and define
$f^*:\X\times\Y\to\R$
by
$f^*(x,y)=\ell(h^*(x),y)$.
Then
\beq
R^*
=
\int_{\X\times\Y}
f^*(x,y)\mathd\bmu(x,y).
\eeq
Since we assumed $R^*<\infty$,
we have that $f^*\in L_1(\bmu)$
and hence, 
by \citet[Lemma A.1]{hksw21}
$f^*$ may be approximated
in $L_1$ by Lipschitz functions:
for all $\eta>0$, there is a $\Delta<\infty$
and a $\Delta$-Lipschitz
$\tilde f:\X\times\Y\to\R$
such that
$
\int_{\X\times\Y}
|
f^*(x,y)
-
\tilde f(x,y)|\mathd\bmu(x,y)
<\eta
$.
Thus, there is no loss of generality
in assuming
$f^*$
to be $\Delta$-Lipschitz:
\beq
|
f^*(x,y)
-
f^*(x',y')
|
\le
\Delta(
\rho(x,x')
+
\ell(y,y')
).
\eeq
Define the natural projection
of $h\in\Y^\X$
onto $h_\eps\in\Y_\eps$,
via $h_\eps(x):=\pi_\eps(h(x))$.
Then
by the Lipschitz
property,
$|R(h_\eps)-R(h)|\le\Delta\eps$.
\enpf
%%"COLT/paper_app"|AK_TEX_SCRIPT_CODE_DON'T_ALTER_OR_DUPLICATE>

\end{document}